\theoremstyle{plain}
\newtheorem{theorem}{Theorem}[section]
\newtheorem{lemma}[theorem]{Lemma}
\theoremstyle{definition}
\newtheorem{remark}[theorem]{Remark}
\providecommand{\csiszar}{Csisz\`{a}r}
\title{Generative Modeling through the Semi-dual Formulation of Unbalanced Optimal Transport}
\author{%
  Jaemoo Choi\thanks{Equal contribution. Correspondence to: Myungjoo Kang \texttt{<mkang@snu.ac.kr>}.} \\
  Seoul National University \\
  \texttt{toony42@snu.ac.kr} \\
  \And
  Jaewoong Choi \footnotemark[1]  \\
  Korea Institute for Advanced Study \\
  \texttt{chwj1475@kias.re.kr} \\
  \And
  Myungjoo Kang \\
  Seoul National University \\
  \texttt{mkang@snu.ac.kr} \\
}
\begin{document}

\maketitle

\begin{abstract}
Optimal Transport (OT) problem investigates a transport map that bridges two distributions while minimizing a given cost function. In this regard, OT between tractable prior distribution and data has been utilized for generative modeling tasks. However, OT-based methods are susceptible to outliers and face optimization challenges during training. 
In this paper, we propose a novel generative model based on the semi-dual formulation of Unbalanced Optimal Transport (UOT). Unlike OT, UOT relaxes the hard constraint on distribution matching. This approach provides better robustness against outliers, stability during training, and faster convergence. We validate these properties empirically through experiments. Moreover, we study the theoretical upper-bound of divergence between distributions in UOT. Our model outperforms existing OT-based generative models, achieving FID scores of 2.97 on CIFAR-10 and 6.36 on CelebA-HQ-256. The code is available at \url{https://github.com/Jae-Moo/UOTM}.
\end{abstract}

\section{Introduction}
Optimal Transport theory \cite{villani, ComputationalOT} explores the cost-optimal transport to transform one probability distribution into another. Since WGAN \cite{wgan}, OT theory has attracted significant attention in the field of generative modeling as a framework for addressing important challenges in this field. In particular, WGAN introduced the Wasserstein distance, an OT-based probability distance, as a loss function for optimizing generative models. WGAN measures the Wasserstein distance between the data distribution and the generated distribution, and minimizes this distance during training. 
The introduction of OT-based distance has improved the diversity \cite{wgan, wgan-gp}, convergence \cite{convergence-robustness}, and stability \cite{sngan, local-stability} of generative models, such as WGAN \cite{wgan} and its variants \cite{wgan-gp, wgan-lp, wgan-qc}. However, several works showed that minimizing the Wasserstein distance still faces computational challenges, and the models tend to diverge without a strong regularization term, such as gradient penalty \cite{convergence-robustness, which-converge}.

Recently, there has been a surge of research on directly modeling the optimal transport map between the input prior distribution and the real data distribution \cite{otm, ae-ot, ae-ot-gan, icnn-ot, scalable-uot}. In other words, the optimal transport serves as the generative model itself. These approaches showed promising results that are comparable to WGAN models.
However, the classical optimal transport-based approaches are known to be highly sensitive to outlier-like samples \cite{robust-ot}. The existence of a few outliers can have a significant impact on the overall OT-based distance and the corresponding transport map. This sensitivity can be problematic when dealing with large-scale real-world datasets where outliers and noises are inevitable.

To overcome these challenges, we suggest a new generative algorithm utilizing the semi-dual formulation of the Unbalanced Optimal Transport (UOT) problem \cite{uot1, uot2}. In this regard, we refer to our model as the \textit{UOT-based generative model (UOTM)}. The UOT framework relaxes the hard constraints of marginal distribution in the OT framework by introducing soft entropic penalties. This soft constraint provides additional robustness against outliers \cite{robust-ot, uot-robust}. 
Our experimental results demonstrate that UOTM exhibits such outlier robustness, as well as faster and more stable convergence compared to existing OT-based models. Particularly, this better convergence property leads to a tighter matching of data distribution than the OT-based framework despite the soft constraint of UOT. Our UOTM achieves FID scores of 2.97 on CIFAR-10 and 6.36 on CelebA-HQ, outperforming existing OT-based adversarial methods by a significant margin and approaching state-of-the-art performance. Furthermore, this decent performance is maintained across various objective designs. Our contributions can be summarized as follows:
\begin{itemize}[topsep=-1pt, itemsep=0pt]
    \item UOTM is the first generative model that utilizes the semi-dual form of UOT.
    \item We analyze the theoretical upper-bound of divergence between marginal distributions in UOT, and validate our findings through empirical experiments.
    \item We demonstrate that UOTM presents outlier robustness and fast and stable convergence.
    \item To the best of our knowledge, UOTM is the first OT-based generative model that achieves near state-of-the-art performance on real-world image datasets.
\end{itemize}

\begin{figure}[t]
  \begin{center}
    \includegraphics[width=0.47\textwidth]{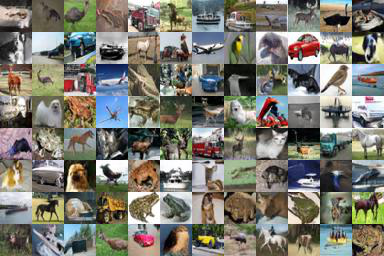}
    \hfill
     \includegraphics[width=0.47\textwidth]{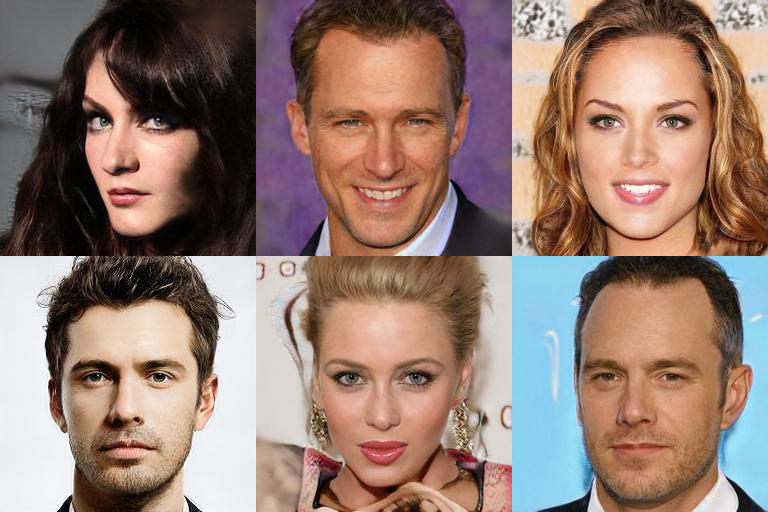}
  \end{center}
    \vspace{-2pt}
    \caption{\textbf{Generated Samples from UOTM} trained on \textbf{Left}: CIFAR-10 and \textbf{Right}: CelebA-HQ.} \label{fig:samples}
    \vspace{-12pt}
\end{figure}

\section{Background} \label{sec:background}
\paragraph{Notations}
Let $\mathcal{X}$, $\mathcal{Y}$ be two compact complete metric spaces, $\mu$ and $\nu$ be probability distributions on $\mathcal{X}$ and $\mathcal{Y}$, respectively.
We regard $\mu$ and $\nu$ as the source and target distributions. In generative modeling tasks, $\mu$ and $\nu$ correspond to \textit{tractable noise} and \textit{data distributions}.
For a measurable map $T$, $T_{\#}\mu$ represents the pushforward distribution of $\mu$.
$\Pi(\mu, \nu)$ denote the set of joint probability distributions on $\mathcal{X}\times\mathcal{Y}$ whose marginals are $\mu$ and $\nu$, respectively.
$\mathcal{M}_+(\mathcal{X}\times\mathcal{Y})$ denote the set of joint positive measures defined on $\mathcal{X}\times\mathcal{Y}$.
For convenience, for $\pi \in \mathcal{M}_+(\mathcal{X}\times\mathcal{Y})$, let $\pi_0(x)$ and $\pi_1(y)$ be the marginals with respect to $\mathcal{X}$ and $\mathcal{Y}$.
$c(x,y)$ refers to the transport cost function defined on $\mathcal{X}\times\mathcal{Y}$.
Throughout this paper, we consider $\mathcal{X}=\mathcal{Y} \subset \mathbb{R}^d$ with the quadratic cost, $c(x,y)=\tau \lVert x-y \rVert_2^2$, where $d$ indicates the dimension of data.
Here, $\tau$ is a given positive constant.
For the precise notations and assumptions, see Appendix \ref{appen:proofs}.

\paragraph{Optimal Transport (OT)} 
OT addresses the problem of searching for the most cost-minimizing way to transport source distribution $\mu$ to target $\nu$, based on a given cost function $c(\cdot,\cdot)$.
In the beginning, \citet{monge1781memoire} formulated this problem with a deterministic transport map. However, this formulation is non-convex and can be ill-posed depending on the choices of $\mu$ and $\nu$.
To alleviate these problems, the Kantorovich OT problem \cite{Kantorovich1948} was introduced, which is a convex relaxation of the Monge problem.
Formally, the OT cost of the relaxed problem is given by as follows:
\begin{equation} \label{eq:Kantorovich}
    C(\mu,\nu):=\inf_{\pi \in \Pi(\mu, \nu)} \left[ \int_{\mathcal{X}\times \mathcal{Y}} c(x,y) d\pi(x,y) \right],
\end{equation}
where $c$ is a cost function, and $\pi$ is a coupling of $\mu$ and $\nu$.
Unlike Monge problem, the minimizer $\pi^{\star}$ of Eq \ref{eq:Kantorovich} always exists under some mild assumptions on $(\mathcal{X}, \mu)$, $(\mathcal{Y}, \nu)$ and the cost $c$ (\cite{villani}, Chapter 5).
Then, the \textbf{\textit{dual form}} of Eq \ref{eq:Kantorovich} is given as:
\begin{equation} \label{eq:Kantorovich-dual}
    C(\mu,\nu)= \sup_{u(x)+v(y)\leq c(x,y)} \left[ \int_\mathcal{X} u(x)d\mu(x) + \int_\mathcal{Y} v(y) d\nu (y) \right],
\end{equation}
where $u$ and $v$ are Lebesgue integrable with respect to measure $\mu$ and $\nu$, i.e., $u \in L^{1}(\mu)$ and $v \in L^{1}(\nu)$.
For a particular case where $c(x,y)$ is equal to the distance function between $x$ and $y$, then $u=-v$ and $u$ is 1-Lipschitz \cite{villani}.
In such case, we call Eq \ref{eq:Kantorovich-dual} a \textit{Kantorovich-Rubinstein duality}.
For the general cost $c(\cdot, \cdot)$, the Eq \ref{eq:Kantorovich-dual} can be reformulated as follows (\cite{villani}, Chapter 5):
\begin{equation} \label{eq:kantorovich-semi-dual}
    C(\mu, \nu) = \sup_{v\in L^1(\nu)} \left[ \int_\mathcal{X} v^c(x)d\mu(x) + \int_\mathcal{Y} v(y) d\nu (y) \right],
\end{equation}
where the $c$-transform of $v$ is defined as $v^c(x)=\underset{y\in \mathcal{Y}}{\inf}\left(c(x,y) - v(y)\right)$.
We call this formulation (Eq \ref{eq:kantorovich-semi-dual}) a \textbf{\textit{semi-dual formulation of OT}}.

\paragraph{Unbalanced Optimal Transport (UOT)}
Recently, a new type of optimal transport problem has emerged, which is called \textit{Unbalanced Optimal Transport (UOT)} \cite{uot1, uot2}.
The \textit{\csiszar{} divergence} $D_f(\mu | \nu)$ associated with $f$ is a generalization of $f$-divergence for the case where $\mu$ is not absolutely continuous with respect to $\nu$ (See Appendix \ref{appen:proofs} for the precise definition). Here, the entropy function $f : [0, \infty) \rightarrow [0, \infty]$ is assumed to be convex, lower semi-continuous, and non-negative.
Note that the $f$-divergence families include a wide variety of divergences, such as Kullback-Leibler (KL) divergence and $\chi^2$ divergence. (See \cite{f-div} for more examples of $f$-divergence and its corresponding generator $f$.)
Formally, the UOT problem is formulated as follows:
\begin{equation} \label{eq:uot}
    C_{ub}(\mu, \nu) := \inf_{\pi \in \mathcal{M}_+(\mathcal{X}\times\mathcal{Y})} \left[ \int_{\mathcal{X}\times \mathcal{Y}} c(x,y) d\pi(x,y) + D_{\Psi_1}(\pi_0|\mu) + D_{\Psi_2}(\pi_1|\nu) \right].
\end{equation}

The UOT formulation (Eq \ref{eq:uot}) has two key properties. 
First, UOT can handle the transportation of any positive measures by relaxing the marginal constraint \cite{uot1, uot2, ComputationalOT, OTblueprint}, allowing for greater flexibility in the transport problem.
Second, UOT can address the sensitivity to outliers, which is a major limitation of OT.
In standard OT, the marginal constraints require that even outlier samples are transported to the target distribution. This makes the OT objective (Eq \ref{eq:Kantorovich}) to be significantly affected by a few outliers. This sensitivity of OT to outliers implies that the OT distance between two distributions can be dominated by these outliers \cite{robust-ot, uot-robust}.
On the other hand, UOT can exclude outliers from the consideration by flexibly shifting the marginals.
Both properties are crucial characteristics of UOT. However, in this work, we investigate a generative model that transports probability measures. Because the total mass of probability measure is equal to 1, we focus on the latter property.

\section{Method} \label{sec:method}

\subsection{Dual and Semi-dual Formulation of UOT}
Similar to OT, UOT also provides a \textbf{\textit{dual formulation}} \cite{uot1, semi-dual1, semi-dual3}:
\begin{equation} \label{eq:uot-dual}
    C_{ub}(\mu, \nu) = \sup_{u(x)+v(y)\leq c(x,y)} \left[ \int_\mathcal{X} -\Psi^*_1(-u(x)) d\mu(x) + \int_\mathcal{Y} -\Psi^*_2(-v(y)) d\nu (y) \right],
\end{equation}
with $u\in \mathcal{C}(\mathcal{X})$, $v \in \mathcal{C}(\mathcal{Y})$ where $\mathcal{C}$ denotes a set of continuous functions over its domain. Here, $f^{*}$ denotes the \textit{convex conjugate} of $f$, i.e., $f^{*}(y) = \sup_{x \in \mathbb{R}}\{\langle x, y \rangle - f(x)\}$ for $f:\mathbb{R}\rightarrow [-\infty, \infty]$.

\begin{remark}[\textbf{UOT as a Generalization of OT}] \label{remark:psi}
Suppose $\Psi_1$ and $\Psi_2$ are the convex indicator function of $\{1\}$, i.e. have zero values for $x=1$ and otherwise $\infty$.
Then, the objective in Eq \ref{eq:uot} becomes infinity if $\pi_{0} \neq  \mu$ or $\pi_{1} \neq \nu$, which means that UOT reduces into classical OT. Moreover, $\Psi_1^*(x)=\Psi_2^*(x)=x$.
If we replace $\Psi_1^*$ and $\Psi_2^*$ with the identity function, Eq \ref{eq:uot-dual} precisely recovers the dual form of OT (Eq \ref{eq:Kantorovich-dual}).
Note that the \textit{hard constraint corresponds to $\Psi^*(x)=x$}.
\end{remark}

Now, we introduce the \textbf{\textit{semi-dual formulation of UOT}} \cite{semi-dual1}. For this formulation, we assume that $\Psi^*_1$ and $\Psi^*_2$ are non-decreasing and differentiable functions ($f^*$ is non-decreasing for the non-negative entropy function $f$ \cite{csiszar}):
\begin{equation} \label{eq:uot-semi-dual}
    C_{ub}(\mu, \nu) = \sup_{v\in \mathcal{C}} \left[\int_\mathcal{X} -\Psi_1^*\left(-v^c(x))\right) d\mu(x) + \int_\mathcal{Y} -\Psi^*_2(-v(y)) d\nu (y) \right],
\end{equation}
where $v^{c}(x)$ denotes the $c$-transform of $v$ as in Eq \ref{eq:kantorovich-semi-dual}.

\begin{remark}[\textbf{$\Psi^*$ Candidate}] \label{remark:psi_candidates}
Here, we clarify the feasible set of $\Psi^*$ in the semi-dual form of UOT.
As aforementioned, the assumption for deriving Eq \ref{eq:uot-semi-dual} is that $\Psi^*$ is a non-decreasing, differentiable function.
Recall that for any function $f$, its convex conjugate $f^*$ is convex and lower semi-continuous \cite{cvxopt}. Also, for any convex and lower semi-continuous $f$, $f$ is a convex conjugate of $f^{*}$
\cite{involution}. 
Combining these two results, \textit{any non-decreasing, convex, and differentiable function can be a candidate of $\Psi^*$}. 
\end{remark}

\subsection{Generative Modeling with the Semi-dual Form of UOT} \label{sec:GenerativeModeling}
In this section, we describe how we implement a generative model based on the semi-dual form (Eq \ref{eq:uot-semi-dual}) of UOT. 
Following \cite{semi-dual-henry, semi-dual-nhan, semi-dual-korotin, fanscalable, otm}, we introduce $T_v$ to approximate $v^{c}$ as follows:
\begin{equation}\label{eq:def_T}
    T_v(x) \in \underset{y\in \mathcal{Y}}{\rm{arginf}}\left[c(x,y)-v(y)\right]
    \quad \Leftrightarrow \quad v^c(x)=c\left(x,T_v(x) \right) - v\left(T_v(x)\right),
\end{equation}
Note that $T_{v}$ is measurable (\cite{bertsekas1996stochastic}, Prop 7.33). Then, the following objective $J(v)$ can be derived from the equation inside supremum in Eq (\ref{eq:uot-semi-dual}) and the right-hand side of Eq \ref{eq:def_T}:
\begin{equation} \label{eq:uot-potential}
    J(v) := \int_{\mathcal{X}} \Psi_1^*\left( -\left[ \left(c(x,T_v(x)\right)-v\left(T_v(x)\right) \right]\right) d\mu(x) + \int_{\mathcal{Y}} \Psi_2^*\left(-v(y)\right) d\nu(y).
\end{equation}
In practice, there is no closed-form expression of the optimal $T_v$ for each $v$. Hence, the optimization $T_v$ for each $v$ is required as in the generator-discriminator training of GAN \cite{gan}. In this work, we parametrize $v=v_\phi$ and $T_v = T_\theta$ with neural networks with parameter $\phi$ and $\theta$. Then, our learning objective $\mathcal{L}_{v_{\phi}, T_{\theta}}$ can be formulated as follows:
\begin{equation} \label{eq:uot-network}
    \mathcal{L}_{v_{\phi}, T_{\theta}}=\inf_{v_\phi}\left[ \int_{\mathcal{X}} \Psi_1^* \left( -\inf_{T_\theta} \left[c\left(x,T_\theta(x)\right)-v_{\phi} \left(T_\theta(x)\right)\right] \right) d\mu(x) + \int_{\mathcal{Y}} \Psi^*_2\left(-v_{\phi}(y)\right) d\nu(y) \right].
\end{equation}
Finally, based on Eq \ref{eq:uot-network}, we propose a new training algorithm (Algorithm \ref{alg:uotm}), called the \emph{UOT-based generative model (UOTM)}.
Similar to the training procedure of GANs, we alternately update the potential $v_\phi$ (lines 2-4) and generator $T_\theta$ (lines 5-7).
\vspace{-5pt}
\begin{algorithm}[H]
\caption{Training algorithm of UOTM}
\begin{algorithmic}[1]
\Require The source distribution $\mu$ and the target distribution $\nu$. Non-decreasing, differentiable, convex function pair $(\Psi^*_1, \Psi^*_2)$. Generator network $T_\theta$ and the discriminator network $v_\phi$. Total iteration number $K$.
\For{$k = 0, 1, 2 , \dots, K$}
    \State Sample a batch $X\sim \mu$, $Y\sim \nu$, $z\sim \mathcal{N}(\mathbf{0}, \mathbf{I})$.
    \State $\mathcal{L}_v = \frac{1}{|X|}\sum_{x\in X} \Psi^*_1\left(-c\left(x, T_\theta(x, z)\right) + v_\phi \left(T_\theta(x, z)\right)\right) + \frac{1}{|Y|} \sum_{y\in Y} \Psi^*_2 (-v_\phi(y))$
    \State Update $\phi$ by using the loss $\mathcal{L}_v$.
    \State Sample a batch $X\sim \mu$, $z\sim \mathcal{N}(\mathbf{0}, \mathbf{I})$.
    \State $\mathcal{L}_T = \frac{1}{|X|}\sum_{x\in X} \left(c\left(x, T_\theta(x, z)\right)-v_\phi(T_\theta(x, z))\right)$.
    \State Update $\theta$ by using the loss $\mathcal{L}_T$.
\EndFor
\end{algorithmic}
\label{alg:uotm}
\end{algorithm}
\vspace{-10pt}
Following \cite{gozlan2017kantorovich, not}, we add stochasticity in our generator $T_\theta$ by putting an auxiliary variable $z$ as an additional input. This allows us to obtain the stochastic transport plan $\pi(\cdot|x)$ for given input $x\in \mathcal{X}$, 
motivated by the Kantorovich relaxation \cite{Kantorovich1948}. The role of the auxiliary variable has been extensively discussed in the literature \cite{gozlan2017kantorovich, not, xiao2021tackling, rgm}, and it has been shown to be useful in generative modeling.
Moreover, we incorporate $R_1$ regularization \cite{r1_reg}, 
$\mathcal{L}_{reg} = \lambda \lVert \nabla_x v_\phi (y) \rVert_2^2$ for real data $y\in\mathcal{Y}$, into the objective (Eq \ref{eq:uot-network}), which is a popular regularization method employed in various studies \cite{which-converge, xiao2021tackling, rgm}.

\subsection{Some Properties of UOTM} \label{sec:theory}
\paragraph{Divergence Upper-Bound of Marginals}
UOT relaxes the hard constraint of marginal distributions in OT into the soft constraint with the \csiszar{} divergence regularizer. Therefore, a natural question is \textit{how much divergence is incurred in the marginal distributions} because of this soft constraint in UOT. The following theorem proves that the upper-bound of divergences between the marginals in UOT is linearly proportional to $\tau$ in the cost function $c(x,y)$. This result follows our intuition because down-scaling $\tau$ is equivalent to the relative up-scaling of divergences $D_{\Psi_{1}}, D_{\Psi_{2}}$ in Eq \ref{eq:uot}. (Notably, in our experiments, the optimization benefit outweighed the effect of soft constraint (Sec \ref{sec:exp_ImageGeneration}).)
\begin{theorem} \label{thm:uot-ot-relation}
Suppose that $\mu$ and $\nu$ are probability densities defined on $\mathcal{X}$ and $\mathcal{Y}$. 
Given the assumptions in Appendix A, suppose that $\mu, \nu$ are absolutely continuous with respect to Lebesgue measure and $\Psi^{*}$ is continuously differentiable. Assuming that the optimal potential $v^{\star} = \inf_{v\in \mathcal{C}}J(v)$ exists, $v^\star$ is a solution of the following objective
\begin{equation} \label{eq:uot-ot-relation}
    \Tilde{J}(v) =  \int_{\mathcal{X}}{-v^c(x)d\Tilde{\mu}(x)} +\int_{\mathcal{Y}}{-v(y)d\Tilde{\nu}(y)},
\end{equation}
where 
$\Tilde{\mu}(x)={\Psi_1^*}'(-{v^\star}^c(x))\mu(x)$ and 
$\Tilde{\nu}(y)={\Psi_2^*}'(-{v^\star}(y))\nu(y)$.
Note that the assumptions guarantee the existence of optimal transport map $T^{\star}$ between $\Tilde{\mu}$ and $\Tilde{\nu}$. Furthermore, $T^{\star}$ satisfies
\begin{equation}
    T^\star(x) \in {\rm{arginf}}_{y\in \mathcal{Y}} \left[ c(x,y) - v^{\star}(y) \right],
\end{equation}
$\mu$-almost surely. In particular, $D_{\Psi_1} (\Tilde{\mu}|\mu) + D_{\Psi_2} (\Tilde{\nu}|\nu) \leq \tau \mathcal{W}_2^2(\mu,\nu)$ where $\mathcal{W}_2(\mu, \nu)$ is a Wasserstein-2 distance between $\mu$ and $\nu$.
\end{theorem}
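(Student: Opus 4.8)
I would prove the three assertions in order --- that $v^\star$ solves $\tilde{J}$, that $T^\star$ has the stated $\mathrm{arginf}$ form, and the divergence bound --- using a first-variation (envelope) argument for the first two and a short Fenchel--Young / duality computation for the last. The governing idea: differentiating $J$ at its minimizer $v^\star$ yields exactly the marginal-matching identity $(T_{v^\star})_{\#}\tilde{\mu}=\tilde{\nu}$, which is the first-order optimality condition identifying $v^\star$ as an optimal Kantorovich potential for the reweighted pair $(\tilde{\mu},\tilde{\nu})$; a Fenchel--Young equality then rewrites each \csiszar{} divergence along $v^\star$; and summing, together with weak duality and a single admissible competitor in the UOT infimum, gives the bound.

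\emph{Step 1 (first variation).} I would fix $h\in\mathcal{C}(\mathcal{Y})$, set $v_t=v^\star+th$, and differentiate $J(v_t)$ at $t=0$. Since $c(x,y)=\tau\lVert x-y\rVert_2^2$ and $\mu$ is absolutely continuous, ${v^\star}^c$ is semiconcave, so for $\mu$-a.e.\ $x$ the infimum defining ${v_t}^c(x)$ is attained at the unique point $T_{v^\star}(x)$ at $t=0$, and the envelope (Danskin) theorem gives $\frac{d}{dt}\big|_{t=0}{v_t}^c(x)=-h(T_{v^\star}(x))$. As $J$ is convex ($\int\Psi_1^*(-v^c)\,d\mu$ is the convex map $v\mapsto -v^c$ composed with the convex nondecreasing $\Psi_1^*$ and integrated, plus $\int\Psi_2^*(-v)\,d\nu$) and $\Psi_i^*\in C^1$, differentiating under the integral and using optimality of $v^\star$ (apply with $\pm h$) gives, for every $h$,
\begin{equation*}
  \int_{\mathcal{X}}{\Psi_1^*}'\!\bigl(-{v^\star}^c(x)\bigr)\,h\bigl(T_{v^\star}(x)\bigr)\,d\mu(x) \;=\; \int_{\mathcal{Y}}{\Psi_2^*}'\!\bigl(-v^\star(y)\bigr)\,h(y)\,d\nu(y).
\end{equation*}
With $\tilde{\mu},\tilde{\nu}$ as in the statement this reads $(T_{v^\star})_{\#}\tilde{\mu}=\tilde{\nu}$; taking $h\equiv1$ shows $\tilde{\mu},\tilde{\nu}$ are finite positive measures of equal mass, and $\tilde{\mu}={\Psi_1^*}'(-{v^\star}^c)\mu$ is absolutely continuous (${\Psi_1^*}'$ nonnegative, bounded). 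This is precisely the first-order optimality condition for the convex functional $\tilde{J}$, so $v^\star$ minimizes $\tilde{J}$, i.e.\ it is an optimal Kantorovich potential for $C(\tilde{\mu},\tilde{\nu})$ with $\tilde{J}(v^\star)=-C(\tilde{\mu},\tilde{\nu})$. Writing out the quadratic cost, $T_{v^\star}(x)\in{\rm{arginf}}_{y}\bigl[c(x,y)-v^\star(y)\bigr]$ is the gradient of a convex function, so by Brenier's theorem $T^\star:=T_{v^\star}$ is the unique OT map between $\tilde{\mu}$ and $\tilde{\nu}$; this gives the first two claims.

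\emph{Steps 2--3 (Fenchel--Young and assembly).} Since $(\Psi_1,\Psi_1^*)$ are convex conjugates and $\Psi_1^*$ is differentiable, the equality case of Young's inequality at $b=-{v^\star}^c(x)$, $a={\Psi_1^*}'(b)=\tilde{\mu}(x)/\mu(x)$ gives $\Psi_1(a)=ab-\Psi_1^*(b)$, i.e.\ $\Psi_1(\tilde{\mu}(x)/\mu(x))=-{v^\star}^c(x)\,\tilde{\mu}(x)/\mu(x)-\Psi_1^*(-{v^\star}^c(x))$; integrating against $\mu$ (no singular part, as $\tilde{\mu}\ll\mu$) yields $D_{\Psi_1}(\tilde{\mu}|\mu)=\int-{v^\star}^c\,d\tilde{\mu}-\int\Psi_1^*(-{v^\star}^c)\,d\mu$, and similarly $D_{\Psi_2}(\tilde{\nu}|\nu)=\int-v^\star\,d\tilde{\nu}-\int\Psi_2^*(-v^\star)\,d\nu$. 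Adding,
\begin{equation*}
  D_{\Psi_1}(\tilde{\mu}|\mu)+D_{\Psi_2}(\tilde{\nu}|\nu) \;=\; \Bigl(\int -{v^\star}^c\,d\tilde{\mu}+\int -v^\star\,d\tilde{\nu}\Bigr) - \Bigl(\int\Psi_1^*(-{v^\star}^c)\,d\mu+\int\Psi_2^*(-v^\star)\,d\nu\Bigr),
\end{equation*}
where the first group is $\tilde{J}(v^\star)=-C(\tilde{\mu},\tilde{\nu})$ (Step 1) and the second is $J(v^\star)=-C_{ub}(\mu,\nu)$ (Eq.~\ref{eq:uot-semi-dual}, since $\sup_v(-J(v))=C_{ub}(\mu,\nu)$). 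Hence $D_{\Psi_1}(\tilde{\mu}|\mu)+D_{\Psi_2}(\tilde{\nu}|\nu)=C_{ub}(\mu,\nu)-C(\tilde{\mu},\tilde{\nu})\le C_{ub}(\mu,\nu)$, using $C(\tilde{\mu},\tilde{\nu})\ge0$. Finally, inserting the optimal $\mathcal{W}_2$-coupling $\pi$ of $\mu,\nu$ (so $\pi_0=\mu$, $\pi_1=\nu$) into the infimum of Eq.~\ref{eq:uot} gives $C_{ub}(\mu,\nu)\le\int c\,d\pi+D_{\Psi_1}(\mu|\mu)+D_{\Psi_2}(\nu|\nu)=\tau\mathcal{W}_2^2(\mu,\nu)$, using $\Psi_i(1)=0$ (so $D_{\Psi_i}$ vanishes on the diagonal); chaining the two bounds proves $D_{\Psi_1}(\tilde{\mu}|\mu)+D_{\Psi_2}(\tilde{\nu}|\nu)\le\tau\mathcal{W}_2^2(\mu,\nu)$.

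\emph{Main obstacle.} The technical heart is Step 1: making the envelope-theorem differentiation rigorous (interchanging $\frac{d}{dt}$ with the integral, and the $\mu$-a.e.\ single-valued, measurable selection of $T_{v^\star}$), and verifying that $\tilde{\mu},\tilde{\nu}$ are finite, of equal mass, and regular enough ($\tilde{\mu}$ absolutely continuous) for Brenier's theorem to apply --- precisely where the Appendix~A hypotheses (quadratic cost, absolute continuity of $\mu,\nu$, $C^1$ and nondecreasing $\Psi^*$) enter. Granting Step 1, Steps 2--3 are routine manipulations with the Fenchel--Young equality and Kantorovich duality.
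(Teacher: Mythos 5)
Your proposal is correct and follows the same first-variation strategy as the paper, but at a slightly different level and with one useful elaboration. The paper proves the result by taking the functional derivative of the \emph{two-variable} dual $I(u,v)$ (Lemma A.2), imposes the constraint $u+\delta u+v+\delta v\le c$ on admissible perturbations, and then observes that this linearized program is exactly the linearized dual of the OT problem with marginals $\Tilde{\mu},\Tilde{\nu}$, so $(u^\star,v^\star)$ is optimal for that OT dual as well; existence and the arginf characterization of $T^\star$ then follow from Villani's Theorem 2.12 and Remark 5.13. You instead differentiate the \emph{semi-dual} $J(v)$ directly, using Danskin/envelope to get $\frac{d}{dt}\big|_{t=0}{v_t}^c(x)=-h(T_{v^\star}(x))$; this yields the marginal-matching identity $(T_{v^\star})_\#\Tilde{\mu}=\Tilde{\nu}$, which you correctly recognize as the first-order optimality condition of the convex functional $\Tilde{J}$, so $v^\star$ minimizes $\Tilde{J}$. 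The payoffs are symmetric: the paper's route avoids invoking Danskin's theorem for the $c$-transform (the constraint handles admissibility), while your semi-dual route avoids having to manage the joint constraint on $(\delta u,\delta v)$ and lands directly on the pushforward condition used in the algorithm. For the divergence bound, the paper states $\tau\mathcal{W}_2^2(\mu,\nu)\ge \tau\mathcal{W}_2^2(\Tilde{\mu},\Tilde{\nu})+D_{\Psi_1}(\Tilde{\mu}|\mu)+D_{\Psi_2}(\Tilde{\nu}|\nu)$ as trivial; you make the underlying identity explicit via Fenchel--Young, showing $D_{\Psi_1}(\Tilde{\mu}|\mu)+D_{\Psi_2}(\Tilde{\nu}|\nu)=\Tilde{J}(v^\star)-J(v^\star)=C_{ub}(\mu,\nu)-C(\Tilde{\mu},\Tilde{\nu})$, and then chain with the competitor bound $C_{ub}(\mu,\nu)\le\tau\mathcal{W}_2^2(\mu,\nu)$. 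This is a genuine gain in transparency over the paper's compressed final sentence, and it relies on nothing beyond the differentiability of $\Psi^*$ that the paper already assumes. One small point to make explicit: the step $D_{\Psi_i}(\cdot|\cdot)|_{\text{diagonal}}=0$ in the competitor bound needs the normalization $\Psi_i(1)=0$, which holds for the KL and $\chi^2$ entropies in Eqs.~\ref{eq:Psi_KL}--\ref{eq:Psi_Chi} but is not literally listed among the Appendix assumptions (which state $\Psi_i(0)=1$); the paper implicitly relies on it as well.
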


Theorem \ref{thm:uot-ot-relation} shows that $T_{v^\star}$ (Eq \ref{eq:def_T}) is a valid parametrization of the optimal transport map $T^{\star}$ that transports $\Tilde{\mu}$ to $\Tilde{\nu}$.
Moreover, if $\tau$ is sufficiently small, then $\Tilde{\mu}$ and $\Tilde{\nu}$ are close to $\mu$ and $\nu$, respectively. Therefore, we can infer that $T_{v{^{\star}}}$ will transport $\mu$ to a distribution that is similar to $\nu$.

\paragraph{Stable Convergence} 
We discuss 
convergence properties of UOT objective $J$ and the corresponding OT objective $\Tilde{J}$ for the potential $v$ through the theoretical findings of \citet{semi-dual3}.
\begin{theorem}[\cite{semi-dual3}] \label{thm:convexity}
Under some mild assumptions in Appendix \ref{appen:proofs}, the following holds:
\begin{equation}
    J(v)-J(v^\star) \geq \frac{1}{2\lambda}\mathbb{E}_{\Tilde{\mu}}\left[\lVert\nabla\left( v^c - {v^\star}^c \right)\rVert_2^2\right]+C_1 \mathbb{E}_{\mu}\left[(v^c-{v^\star}^c)^2\right]+C_2 \mathbb{E}_{\nu} \left[ (v-v^\star)^2 \right],
\end{equation}
for some positive constant $C_1$ and $C_2$.
Furthermore, $\Tilde{J}(v)-\Tilde{J}(v^\star) \geq \frac{1}{2\lambda}\mathbb{E}_{\Tilde{\mu}}\left[\lVert\nabla\left( v^c - {v^\star}^c \right)\rVert_2^2\right]$.
\end{theorem}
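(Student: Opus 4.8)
The plan is to separate the statement into its two assertions — the estimate for the reweighted OT objective $\tilde J$ (the ``furthermore'' sentence) and the estimate for the UOT objective $J$ — and to deduce the second from the first using the strong convexity of $\Psi_1^*$ and $\Psi_2^*$ guaranteed by the assumptions in Appendix~\ref{appen:proofs}. I would establish the $\tilde J$ bound first, as it is the genuinely transport-theoretic step and is where the analysis of \citet{semi-dual3} enters; the $J$ bound is then a short convexity argument on top of it.

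\emph{Step 1 (the $\tilde J$ bound).} By Theorem~\ref{thm:uot-ot-relation}, $v^\star$ minimizes $\tilde J$ and the map $T_{v^\star}(x)\in\operatorname{arginf}_{y}[c(x,y)-v^\star(y)]$ is the optimal transport map from $\tilde\mu$ to $\tilde\nu$; in particular $(T_{v^\star})_\#\tilde\mu=\tilde\nu$. Writing $\int v^c\,d\tilde\mu=\int[c(x,T_v(x))-v(T_v(x))]\,d\tilde\mu$, using $(T_{v^\star})_\#\tilde\mu=\tilde\nu$, and the inequality $v^c(x)\le c(x,T_{v^\star}(x))-v(T_{v^\star}(x))$, I would first obtain
\begin{equation}
    \tilde J(v)-\tilde J(v^\star)=\int_{\mathcal X}\Big[c\big(x,T_{v^\star}(x)\big)-v\big(T_{v^\star}(x)\big)-v^c(x)\Big]\,d\tilde\mu(x)\ \ge\ 0 .
\end{equation}
Since $\tilde J(v^{cc})\le\tilde J(v)$ while $(v^{cc})^c=v^c$, I may assume $v$ is $c$-concave, so $g_v:=\tau\lVert\cdot\rVert_2^2-v$ is convex and $h_x(y):=c(x,y)-v(y)=\tau\lVert x\rVert_2^2-2\tau\langle x,y\rangle+g_v(y)$ is convex with unique minimizer $T_v(x)$. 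The envelope theorem gives $\nabla v^c(x)=2\tau(x-T_v(x))$ for $\mu$-a.e.\ $x$ (Rademacher's theorem plus semiconcavity of $c$-transforms), hence the displacement identity $T_v(x)-T_{v^\star}(x)=\tfrac{1}{2\tau}\big(\nabla{v^\star}^c(x)-\nabla v^c(x)\big)$. The regularity assumptions in Appendix~\ref{appen:proofs} provide a uniform-in-$x$ and in-$v$ lower bound $m>0$ on $\operatorname{Hess}g_v$ near $T_v(x)$; since $\nabla h_x(T_v(x))=0$, $m$-strong convexity yields $h_x(T_{v^\star}(x))-h_x(T_v(x))\ge\tfrac{m}{2}\lVert T_{v^\star}(x)-T_v(x)\rVert_2^2$. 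Substituting the displacement identity and setting $\lambda:=4\tau^2/m$ makes the integrand above at least $\tfrac{1}{2\lambda}\lVert\nabla v^c(x)-\nabla{v^\star}^c(x)\rVert_2^2$, and integrating against $\tilde\mu$ gives $\tilde J(v)-\tilde J(v^\star)\ge\tfrac{1}{2\lambda}\mathbb E_{\tilde\mu}[\lVert\nabla(v^c-{v^\star}^c)\rVert_2^2]$ — which is exactly the semi-dual OT estimate of \citet{semi-dual3} applied to transport between $\tilde\mu$ and $\tilde\nu$, and I would verify its hypotheses hold in our setting.

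\emph{Step 2 (the $J$ bound).} Now keep $v$ arbitrary and use that $\Psi_1^*,\Psi_2^*$ are $m_1$-, resp.\ $m_2$-strongly convex. Applying $\Psi_i^*(a)\ge\Psi_i^*(b)+{\Psi_i^*}'(b)(a-b)+\tfrac{m_i}{2}(a-b)^2$ with $(a,b)=(-v^c(x),-{v^\star}^c(x))$ and integrating against $\mu$, and with $(a,b)=(-v(y),-{v^\star}(y))$ integrating against $\nu$, and noting that by the definitions $\tilde\mu={\Psi_1^*}'(-{v^\star}^c)\mu$ and $\tilde\nu={\Psi_2^*}'(-{v^\star})\nu$ the first-order terms assemble into $\int(-v^c+{v^\star}^c)\,d\tilde\mu+\int(-v+{v^\star})\,d\tilde\nu=\tilde J(v)-\tilde J(v^\star)$, I get
\begin{equation}
    J(v)-J(v^\star)\ \ge\ \big[\tilde J(v)-\tilde J(v^\star)\big]+\tfrac{m_1}{2}\,\mathbb E_{\mu}\big[(v^c-{v^\star}^c)^2\big]+\tfrac{m_2}{2}\,\mathbb E_{\nu}\big[(v-v^\star)^2\big].
\end{equation}
Finally I substitute Step~1 into the bracket — applying it to $v^{cc}$ and using $\tilde J(v)-\tilde J(v^\star)\ge\tilde J(v^{cc})-\tilde J(v^\star)$ together with $(v^{cc})^c=v^c$ so that the right-hand side stays in terms of $v^c$ — obtaining the stated inequality with $C_1=m_1/2>0$ and $C_2=m_2/2>0$.

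\emph{Main obstacle.} The crux is the pointwise quadratic lower bound in Step~1: mere convexity of $h_x$ gives only $\tilde J(v)-\tilde J(v^\star)\ge0$, so one must invoke the quantitative regularity in the assumptions (a uniform lower bound on $\operatorname{Hess}(\tau\lVert\cdot\rVert_2^2-v)$, equivalently co-coercivity of the induced maps) to upgrade a zeroth-order bound to a second-order one, and one must carefully justify a.e.\ differentiability of the $c$-transforms and the envelope formula; tracking the exact constant to arrive at $\tfrac{1}{2\lambda}$ is then bookkeeping. Step~2, by contrast, is routine once Step~1 and the strong convexity of $\Psi_1^*,\Psi_2^*$ are available.
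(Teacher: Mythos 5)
The paper does not actually prove this theorem: the appendix restates it (as Theorem A.4, with the hypotheses ``$v$ is $\lambda$-strongly convex, $v^c$ and $v$ uniformly bounded, $\Psi_1^*,\Psi_2^*$ strongly convex on compact sets'') and the entire proof is ``See \cite{semi-dual3}.'' So there is no in-paper argument to match yours against; what you have written is a reconstruction of the cited result, and it is the standard and, as far as I can tell, correct route. Your Step~2 is exactly right and is the genuinely UOT-specific part: the strong-convexity linearization of $\Psi_i^*$ at $-{v^\star}^c$ and $-v^\star$ makes the first-order terms reassemble into $\tilde J(v)-\tilde J(v^\star)$ precisely because $\tilde\mu,\tilde\nu$ are the ${\Psi_i^*}'$-reweighted marginals, yielding $C_1=m_1/2$, $C_2=m_2/2$; the paper's boundedness hypothesis is what lets you use a single modulus on a compact range. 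Your Step~1 is the H\"utter--Rigollet-type semi-dual stability bound applied to the OT problem between $\tilde\mu$ and $\tilde\nu$, and your reduction to $c$-concave $v$ via $\tilde J(v^{cc})\le\tilde J(v)$ with $(v^{cc})^c=v^c$ is handled correctly. The one soft spot worth naming: the quantitative constant. You need uniform strong convexity of $g_v=\tau\lVert\cdot\rVert_2^2-v$ on the segment joining $T_v(x)$ and $T_{v^\star}(x)$ (not merely ``near $T_v(x)$''), and your resulting constant is $\lambda=4\tau^2/m$, whereas the paper's hypothesis is phrased as $\lambda$-strong convexity of $v$ itself; these do not obviously coincide under the paper's sign conventions, and you should also check that the strong-convexity hypothesis survives replacing $v$ by $v^{cc}$. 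Since the theorem statement hides everything behind ``mild assumptions'' and an unspecified $\lambda$, this is a bookkeeping discrepancy rather than a gap, but it is the one place where your argument and the stated hypotheses must be reconciled against \cite{semi-dual3} directly.
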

Theorem \ref{thm:convexity} suggests that the UOT objective $J$ gains stability over the OT objective $\Tilde{J}$ in two aspects.
First, while $\Tilde{J}$ only bounds the gradient error $\lVert\nabla\left( v^c - {v^\star}^c \right)\rVert_2^2$, $J$ also bounds the function error $\left( v^c - {v^\star}^c \right)^2$.
Second, $J$ provides control over both the original function $v$ and its $c$-transform $v^c$ in $L^2$ sense. We hypothesize this stable convergence property conveys practical benefits in neural network optimization. In particular, UOTM attains better distribution matching despite its soft constraint (Sec \ref{sec:exp_ImageGeneration}) and faster convergence during training (Sec \ref{sec:exp_convegence}).

\section{Related Work}
\paragraph{Optimal Transport}
Optimal Transport (OT) problem addresses a transport map between two distributions that minimizes a specified cost function. This OT map has been extensively utilized in various applications, such as generative modeling \cite{wgan-qc, ae-ot, otm}, point cloud approximation \cite{particle-ot}, and domain adaptation \cite{da-ot}. The significant interest in OT literature has resulted in the development of diverse algorithms based on different formulations of OT problem, e.g., primary (Eq \ref{eq:Kantorovich}), dual (Eq \ref{eq:Kantorovich-dual}), and semi-dual forms (Eq \ref{eq:kantorovich-semi-dual}).
First, several works were proposed based on the primary form \cite{primal-ot, primal-ot2}. These approaches typically involved multiple adversarial regularizers, resulting in a complex and challenging training process. Hence, these methods often exhibited sensitivity to hyperparameters.
Second, the relationship between the OT map and the gradient of dual potential led to various dual form based methods. In specific, when the cost function is quadratic, the OT map can be represented as the gradient of the dual potential \cite{villani}. This correspondence motivated a new methodology that parameterized the dual potentials to recover the OT map \cite{ae-ot, dual-ot}. \citet{dual-ot} introduced the entropic regularization to obtain the optimal dual potentials $u$ and $v$, and extracted the OT map from them via the barycentric projection. Some methods \cite{icnn-korotin, icnn-ot} explicitly utilized the convexity of potential by employing input-convex neural networks \cite{icnn}.

Recently, \citet{strikes} demonstrated that the semi-dual approaches \cite{otm, fanscalable} are the ones that best approximate the OT map among existing methods. These approaches \cite{otm, fanscalable} properly recovered OT maps and provided high performance in image generation and image translation tasks for large-scale datasets. Because our UOTM is also based on the semi-dual form, these methods show some connections to our work. If we let $\Psi_1^*$ and $\Psi_2^*$ in Eq 9 as identity functions, our Algorithm \ref{alg:uotm} reduces to the training procedure of \citet{fanscalable} (See Remark \ref{remark:psi_candidates}). 
For convenience, we denote \citet{fanscalable} as a \emph{OT-based generative model (OTM)}. 
Moreover, note that \citet{otm} can be considered as a minor modification of parametrization from OTM. In this paper, we denote \citet{otm} as \emph{Optimal Transport Modeling (OTM*)}, and regard it as one of the main counterparts for comparison.

\paragraph{Unbalanced Optimal Transport}
The primal problem of UOT (Eq \ref{eq:uot}) relaxes hard marginal constraints of Kantorovich's problem (Eq \ref{eq:Kantorovich}) through soft entropic penalties \cite{uot1}.
Most of the recent UOT approaches \cite{cycle-uot, sinkhorn-uot, regression-uot, minibatch-ot} estimate UOT potentials on discrete space by using the dual formulation of the problem. 
For example, \citet{sinkhorn-uot} extended the Sinkhorn method to UOT, and \citet{cycle-uot} learned dual potentials through cyclic properties.
To generalize UOT into the continuous case, \citet{scalable-uot} suggested the GAN framework for the primal formulation of unbalanced Monge OT.
This method employed three neural networks, one for a transport map, another for a discriminator, and the third for a scaling factor.
\citet{robust-ot} claimed that implementing a GAN-like procedure using the dual form of UOT is hard to optimize and unstable.
Instead, they proposed an alternative dual form of UOT, which resembles the dual form of OT with a Lagrangian regularizer.
Nevertheless, \citet{robust-ot} still requires three different neural networks as in \cite{scalable-uot}.
To the best of our knowledge, our work is the first generative model that leverages the semi-dual formulation of UOT. This formulation allows us to develop a simple but novel adversarial training procedure that does not require the challenging optimization of three neural networks.

\begin{figure}[t]
    \captionsetup[subfigure]{aboveskip=-1pt,belowskip=-1pt}
    \centering
    \begin{subfigure}[b]{0.49\textwidth}
        \includegraphics[width=0.495\textwidth]{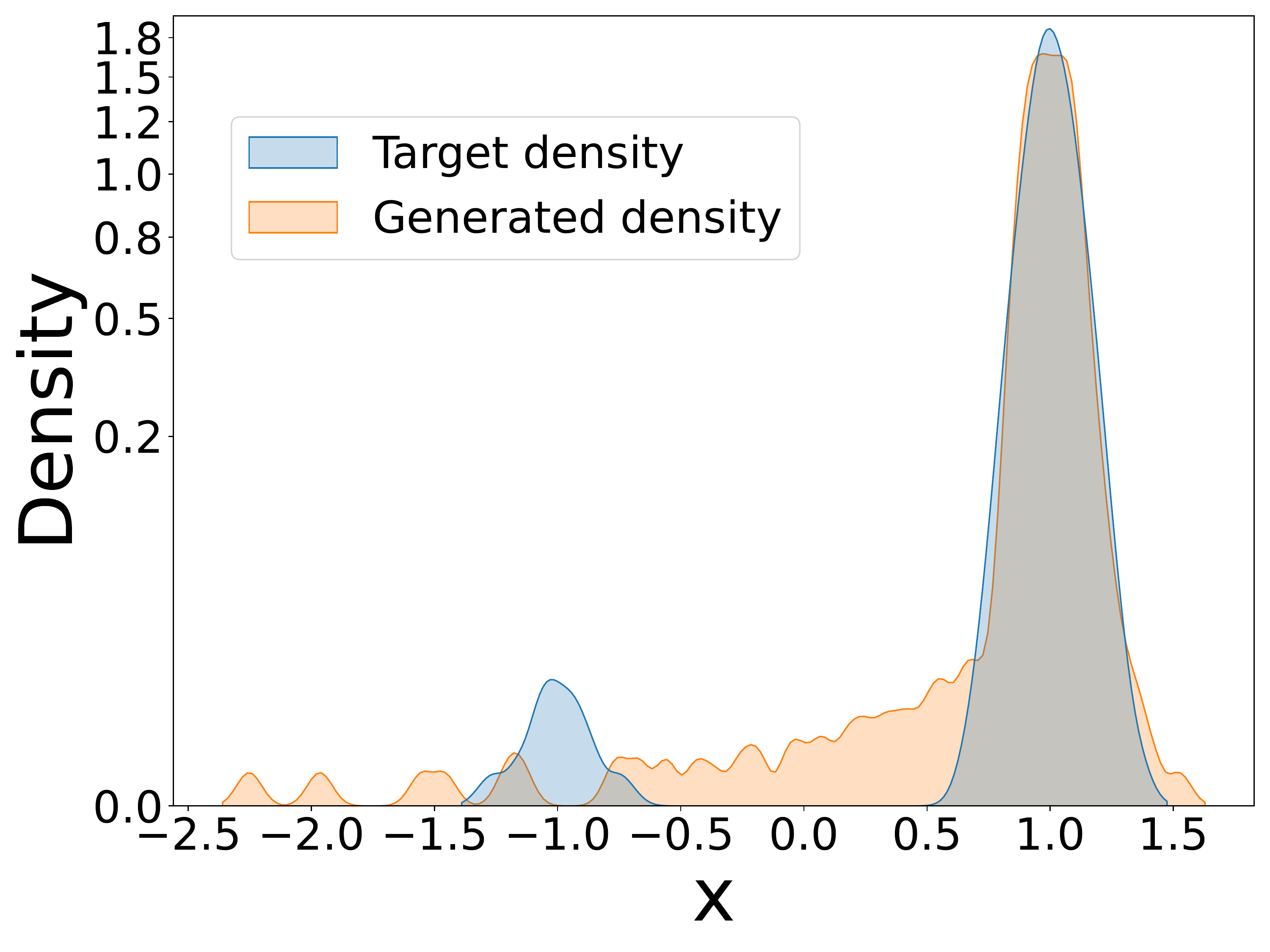}
        \includegraphics[width=0.495\textwidth]{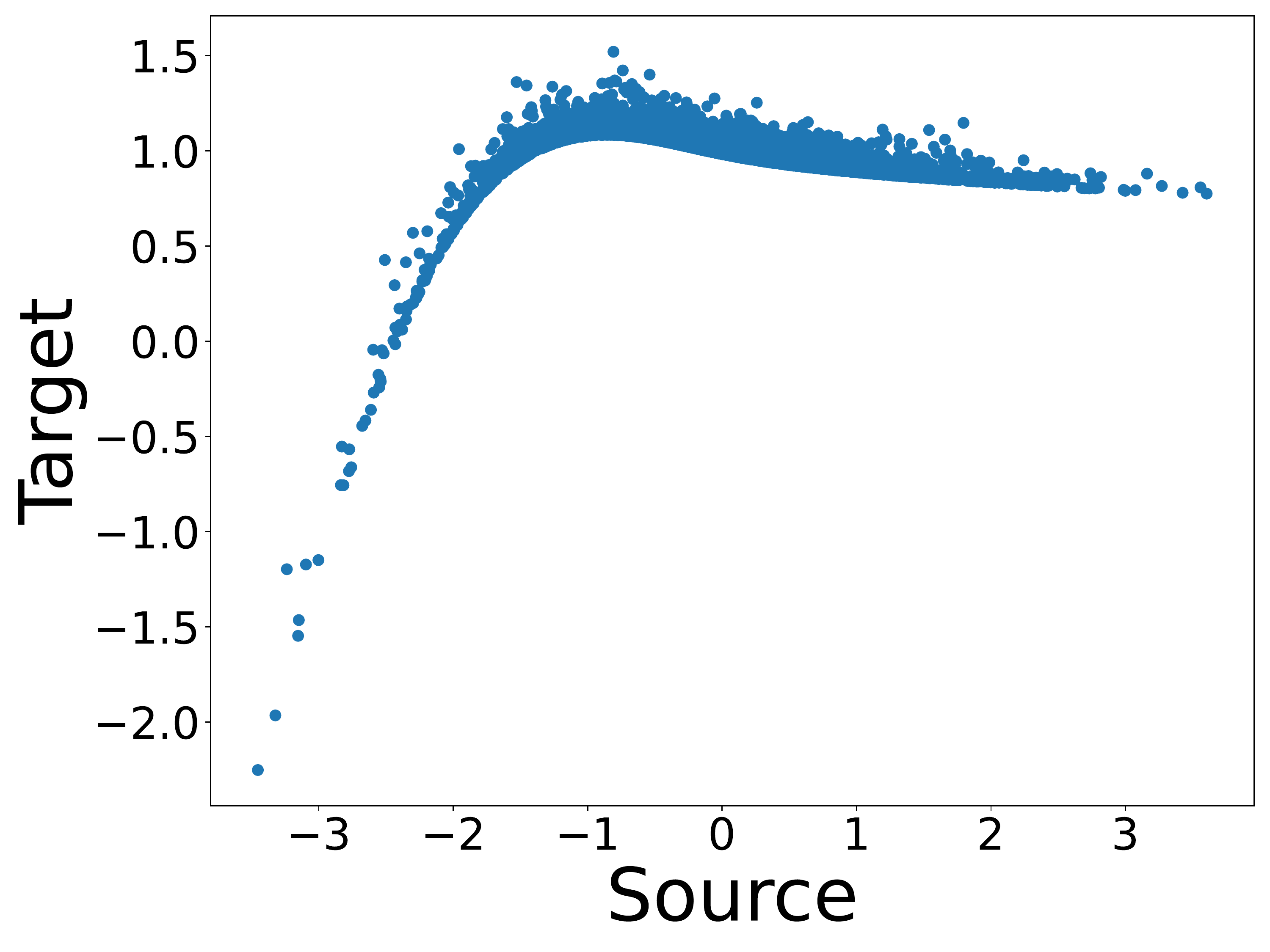}
        \caption{OTM}
        \label{fig:robust-toy_OTM}
    \end{subfigure}
    \hfill
    \begin{subfigure}[b]{0.49\textwidth}
        \includegraphics[width=0.495\textwidth]{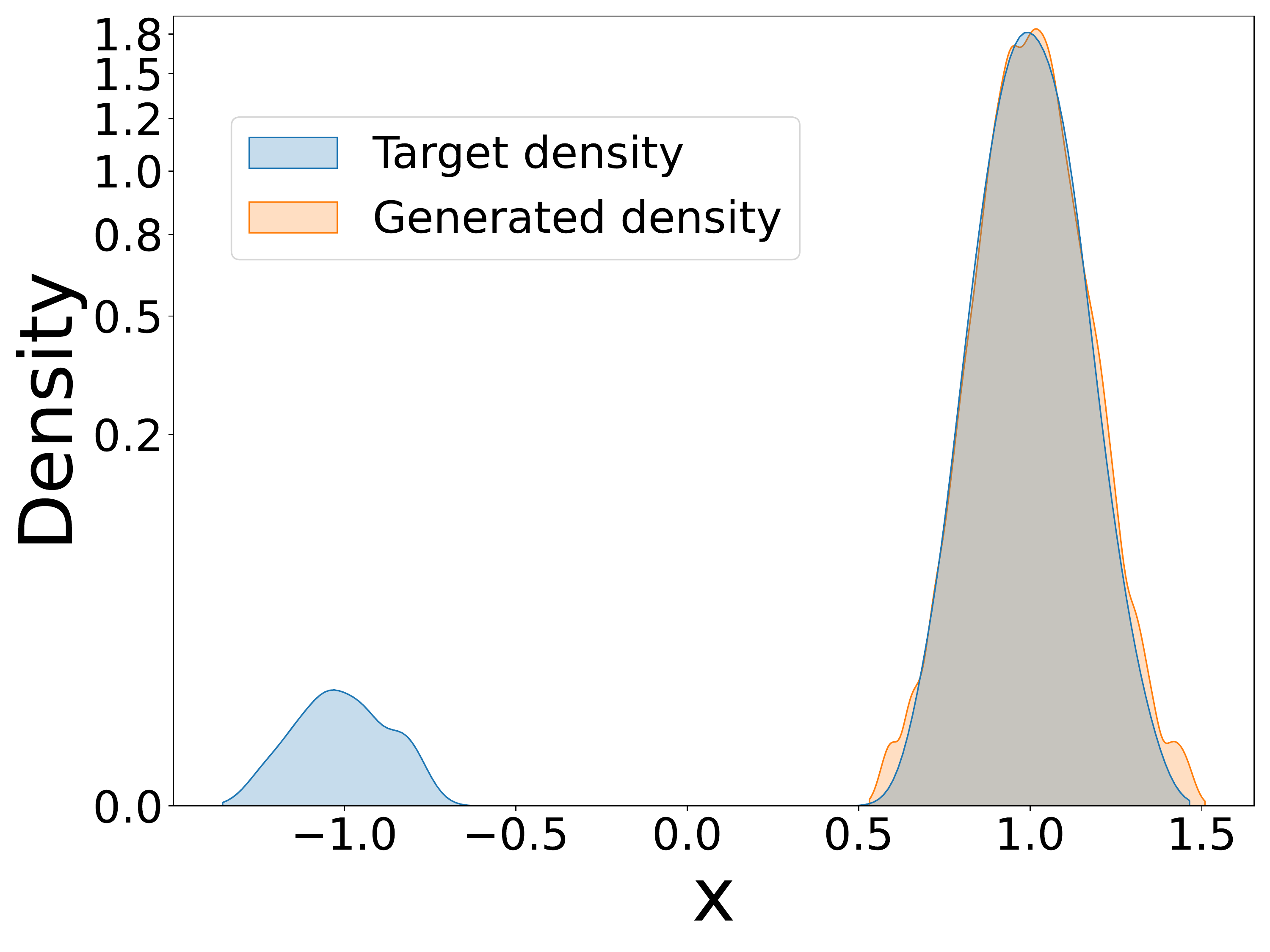}
        \includegraphics[width=0.495\textwidth]{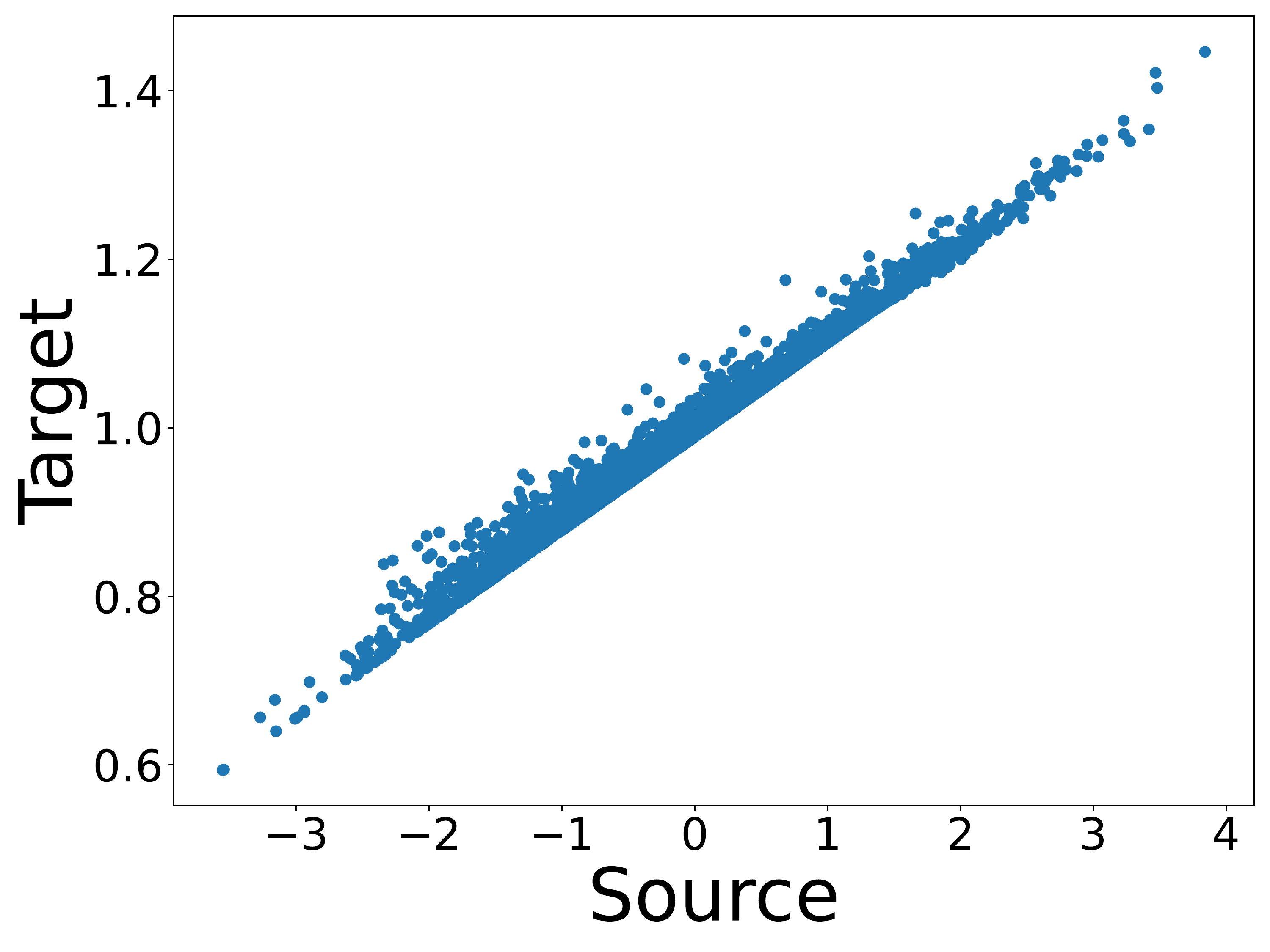}
        \caption{UOTM}
        \label{fig:robust-toy_UOTM}
    \end{subfigure}
    \vspace{-2pt}
  \caption{\textbf{Outlier Robustness Test on Toy dataset} with 1\% outlier. For each subfigure, \textbf{Left}: Comparison of target density $\nu$ and generated density $T_\#\mu$ and \textbf{Right}: Transport map of the trained model $\left(x, T(x)\right)$. While attempting to fit the outlier distribution, OTM generates undesired samples outside $\nu$ and learns the non-optimal transport map. 
  In contrast, UTOM mainly generates in-distribution samples and achieves the optimal transport map. 
  (For better visualization, the y-scale of the density plot is manually adjusted.)
  } 
    \label{fig:robust-toy}
\end{figure}
\begin{figure}[t]
    \centering
    \includegraphics[width=0.23\textwidth]{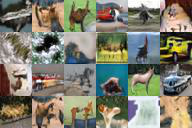}
    \includegraphics[width=0.23\textwidth]{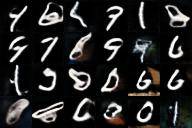}
    \hfill
    \includegraphics[width=0.23\textwidth]{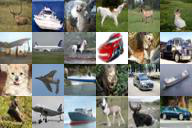}
    \includegraphics[width=0.23\textwidth]{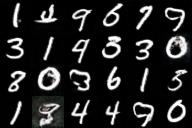}
    \vspace{-2pt}
    \caption{\textbf{Outlier Robustness Test on Image dataset} (CIFAR-10 + 1\% MNIST). \textbf{Left}: OTM exhibits artifacts on both in-distribution and outlier samples. \textbf{Right}: UOTM attains higher-fidelity samples while generating MNIST-like samples more sparingly, around 0.2\%. FID scores of CIFAR-10-like samples are 13.82 for OTM and 4.56 for UOTM, proving that UOTM is more robust to outliers.
    }
    \vspace{-10pt}
    \label{fig:robust-image}
\end{figure}

\section{Experiments} \label{sec:experiments}
In this section, we evaluate our 
model on the various datasets to answer the following questions:
\begin{enumerate}[topsep=-1pt, itemsep=-1pt]
    \item Does UOTM offer more robustness to outliers than OT-based model (OTM)? (\S\ref{sec:exp_Robust})
    \item Does the UOT map accurately match the source distribution to the target distribution? (\S\ref{sec:exp_ImageGeneration})
    \item Does UOTM provide stable and fast convergence? (\S\ref{sec:exp_convegence})
    \item Does UOTM provide decent performance across various choices of $\Psi_1^*$ and $\Psi_2^*$?  (\S\ref{sec:exp_ablation})
    \item Does UOTM show reasonable performance even without a regularization term $\mathcal{L}_{reg}$? (\S\ref{sec:exp_ablation})
    \item Does UOTM provide decent performance across various choices of $\tau$ in $c(x,y)$?  (\S\ref{sec:exp_ablation})
\end{enumerate}
Unless otherwise stated, we set $\Psi:=\Psi_1=\Psi_2$ and $D_\Psi$ as a \textit{KL divergence} in our UOTM model (Eq \ref{eq:uot-network}). 
The source distribution $\mu$ is a standard Gaussian distribution $\mathcal{N}(\mathbf{0}, \mathbf{I})$ with the same dimension as the target distribution $\nu$.
In this case, the entropy function $\Psi$ is given as follows:
\begin{equation} \label{eq:Psi_KL} 
    \Psi(x)=\begin{cases} x\log{x} -x + 1, & \mbox{if} \ x>0 \\ \infty, & \mbox{if} \  x \leq 0 \end{cases}, \qquad \Psi^*(x)=e^x -1.
\end{equation}
In addition, when the generator $T_\theta$ and potential $v_\phi$ are parameterized by the same neural networks as OTM* \cite{otm}, we call them a \emph{small} model. When we use the same network architecture in RGM \cite{rgm}, we refer to them as \emph{large} model.
Unless otherwise stated, we consider the \textit{large} model. For implementation details of experiments, please refer to Appendix \ref{appen:implementation}.

\subsection{Outlier Robustness of UOTM} \label{sec:exp_Robust}
One of the main features of UOT is its robustness against outliers. 
In this subsection, we investigate how this robustness is reflected in generative modeling tasks by comparing our UOTM and OT-based model (OTM). For evaluation, we generated two datasets that have 1-2\% outliers: \textbf{Toy and Image datasets}. The toy dataset is a mixture of samples from $\mathcal{N}(1, 0.5^2)$ (in-distribution) and $\mathcal{N}(-1, 0.5^2)$ (outlier). Similarly, the image dataset is a mixture of CIFAR-10 \cite{cifar10} (in-distribution) and MNIST \cite{deng2012mnist} (outlier) as in \citet{robust-ot}.

Figure \ref{fig:robust-toy} illustrates the learned transport map $T$ and the probability density $T_{\#}\mu $ of generated distribution for OTM and UOTM models trained on the toy dataset. 
The density in Fig \ref{fig:robust-toy_OTM} demonstrates that OTM attempts to fit both the in-distribution and outlier samples simultaneously. 
However, this attempt causes undesirable behavior of transporting density outside the target support. In other words, in order to address 1-2\% of outlier samples, \textbf{\textit{OTM generates additional failure samples that do not belong to the in-distribution or outlier distribution.}}
On the other hand, UOTM focuses on matching the in-distribution samples (Fig \ref{fig:robust-toy_UOTM}).
Moreover, the transport maps show that OTM faces challenging optimization. 
The right-hand side of Figure \ref{fig:robust-toy_OTM} and \ref{fig:robust-toy_UOTM} visualize the correspondence between the source domain samples $x$ and the corresponding target domain samples $T(x)$. \textbf{Note that the optimal $T^{\star}$ should be a monotone-increasing function.} However, OTM failed to learn such a transport map $T$, while UOTM succeeded.

\begin{figure}[t]
    \centering
    \begin{minipage}{.48\linewidth}
        \centering
        \setlength\tabcolsep{2.0pt}
        \captionof{table}{\textbf{Target Distribution Matching Test.} 
        UOTM achieves a better approximation of target distribution $\nu$, i.e., $T_{\#} \mu \approx \nu$. $\dagger$ indicates the results conducted by ourselves.}
        \label{tab:compare-otms}
        \scalebox{.85}{
            \begin{tabular}{c c c c}
                \toprule
                Model  & Toy ($\tau=0.1$) & Toy ($\tau=0.02$)    & CIFAR-10  \\
                \cmidrule(lr){2-3} \cmidrule(lr){4-4}
                Metric & \multicolumn{2}{c}{$D_{KL}(T_{\#}\mu | \nu)$ ($\downarrow$)} & \multicolumn{1}{c}{FID ($\downarrow$)} \\
                \midrule
                OTM$^\dagger$ & 0.05    & 0.05  & 7.68     \\
                Fixed-$\mu$$^\dagger$   & \textbf{0.02}  & \textbf{0.004}  & 7.53   \\
                \textbf{UOTM}$^\dagger$  &  \textbf{0.02}  &  0.005  &  \textbf{2.97} \\
                \bottomrule
            \end{tabular}}

        \setlength\tabcolsep{2.0pt}
        \renewcommand\thetable{3}
        \captionof{table}{
        \textbf{Image Generation on CelebA-HQ.}
        } \label{tab:compare-celeba}
        \vspace{10pt}
        \scalebox{0.8}{
            \begin{tabular}{ccc}
                \toprule
                Class & Model &   FID ($\downarrow$)         \\
                \midrule
                \multirow{6}{*}{\textbf{Diffusion}}
                & Score SDE (VP) \cite{scoresde} &    7.23  \\
                & Probability Flow \cite{scoresde} & 128.13 \\
                & LSGM \cite{vahdat2021score}&       7.22 \\
                & UDM \cite{kim2021score}&         7.16   \\
                & DDGAN \cite{xiao2021tackling} & 7.64    \\
                & RGM \cite{rgm} &  \textbf{7.15}  \\
                \midrule
                \multirow{5}{*}{\textbf{GAN}} 
                & PGGAN \cite{karras2017progressive} &   8.03    \\
                & Adv. LAE \cite{pidhorskyi2020adversarial} &  19.2       \\
                & VQ-GAN \cite{esser2021taming} &    10.2 \\
                & DC-AE \cite{parmar2021dual} &  15.8   \\
                & StyleSwin \citep{zhang2022styleswin} & \textbf{3.25} \\
                \midrule
                \multirow{3}{*}{\textbf{VAE}} 
                & NVAE \cite{vahdat2020nvae} &    29.7    \\
                & NCP-VAE \cite{aneja2021contrastive} &     24.8   \\
                & VAEBM \cite{xiao2020vaebm}&  \textbf{20.4}        \\
                \midrule
                \multirow{1}{*}{\textbf{OT-based}} & \textbf{UOTM}$^\dagger$ & \textbf{6.36} \\
                \bottomrule
            \end{tabular}}
    \end{minipage}
    \hfill
    \begin{minipage}{.48\linewidth}
        \centering
        \setlength\tabcolsep{2.0pt}
        \renewcommand\thetable{2}
        \captionof{table}{
        \textbf{Image Generation on CIFAR-10.}}
        \label{tab:compare-cifar10}
        \vspace{-2pt}
        \scalebox{0.8}{
            \begin{tabular}{cccc}
            \toprule
            Class & Model &                FID ($\downarrow$) &   IS ($\uparrow$)      \\ 
            \midrule
            \multirow{8}{*}{\textbf{GAN}} & SNGAN+DGflow \cite{ansari2020refining} &           9.62&    9.35            \\
              & AutoGAN \cite{gong2019autogan} &              12.4&    8.60         \\
              & TransGAN \cite{jiang2021transgan} &            9.26&     9.02          \\
              & StyleGAN2 w/o ADA \cite{karras2020training} &   8.32&     9.18          \\
              & StyleGAN2 w/ ADA \cite{karras2020training} &       2.92&  \textbf{9.83}       \\
              &  DDGAN (T=1)\cite{xiao2021tackling}&     16.68  &   -  \\
              &  DDGAN \cite{xiao2021tackling}&     3.75  &   9.63  \\
              &    RGM \cite{rgm}             &     \textbf{2.47}  &   9.68  \\
            \midrule
            \multirow{8}{*}{\textbf{Diffusion}}
              &  NCSN \cite{song2019generative}&      25.3&              8.87            \\
              &  DDPM \cite{ddpm}&                 3.21&   9.46     \\
              &  Score SDE (VE) \cite{scoresde} &     2.20&      9.89        \\
              &  Score SDE (VP) \cite{scoresde}&       2.41&    9.68        \\
              &  DDIM (50 steps) \cite{ddim}&           4.67&   8.78         \\
              &  CLD \cite{dockhorn2021score} &                   2.25&  -       \\
              &  Subspace Diffusion \cite{jing2022subspace} &      2.17& \textbf{9.94}   \\
              &  LSGM \cite{vahdat2021score}&                \textbf{2.10}&     9.87       \\
            \midrule
            \multirow{5}{*}{\textbf{VAE\&EBM}} 
              & NVAE \cite{vahdat2020nvae} &              23.5&        7.18        \\
              & Glow \cite{kingma2018glow} &                48.9&   3.92           \\
              & PixelCNN \cite{van2016pixel} &             65.9&   4.60           \\
              & VAEBM \cite{xiao2020vaebm} &             12.2&       \textbf{8.43}           \\
              & Recovery EBM \cite{recovery} &     \textbf{9.58}  &  8.30 \\ 
            \midrule
            \multirow{7}{*}{\textbf{OT-based}}
              &    WGAN \cite{wgan}                       &   55.20    &   -    \\
              &    WGAN-GP\cite{wgan-gp}            &   39.40    &   6.49     \\
              & Robust-OT \cite{robust-ot} & 21.57 & - \\
              &    AE-OT-GAN \cite{ae-ot-gan}       &   17.10    &   7.35     \\
              &    OTM* (Small) \cite{otm}      &   21.78    &   -    \\
              &    OTM (Large)$^\dagger$                &   7.68    &   8.50  \\
              &    \textbf{UOTM} (Small)$^\dagger$      &   12.86    &   7.21   \\
              &    \textbf{UOTM} (Large)$^\dagger$      &   \textbf{2.97}$\pm$0.07    &   \textbf{9.68}      \\
            \bottomrule
            \end{tabular}
        }
    \end{minipage}
    \vspace{-10pt}
\end{figure}

Figure \ref{fig:robust-image} shows the generated samples from OTM and UOTM models on the image dataset. A similar phenomenon is observed in the image data. Some of the generative images from OTM show some unintentional artifacts. Also, MNIST-like generated images display a red or blue-tinted background. 
In contrast, UOTM primarily generates in-distribution samples. In practice, UTOM generates MNIST-like data at a very low probability of 0.2\%. Notably, UOTM does not exhibit artifacts as in OTM. Furthermore, for quantitative comparison, we measured Fr{\'e}chet Inception Distance (FID) score \cite{fid} by collecting CIFAR-10-like samples. Then, we compared this score with the model trained on a clean dataset. The presence of outliers affected OTM by increasing FID score over 6 $(7.68 \rightarrow 13.82)$ while the increase was less than 2 in UOTM $(2.97 \rightarrow 4.56)$. 
In summary, the experiments demonstrate that UOTM is more robust to outliers.

\subsection{UOTM as Generative Model} \label{sec:exp_ImageGeneration}
\paragraph{Target Distribution Matching}
As aforementioned in Sec \ref{sec:theory}, UOT allows some flexibility to the marginal constraints of OT. This means that the optimal generator $T_{\theta}^{\star}$ does not necessarily transport the source distribution $\mu$ precisely to the target distribution $\nu$, i.e., $T_{\#}\mu \approx \nu$. 
However, the goal of the generative modeling task is to learn the target distribution $\nu$.
In this regard, \textit{we assessed whether our UOTM could accurately match the target (data) distribution.} 
Specifically, we measured KL divergence \cite{kl-div} between the generated distribution $T_{\#}\mu$ and data distribution $\nu$ on the toy dataset (See the Appendix \ref{appen:data} for toy dataset details). Also, we employed the FID score for CIFAR-10 dataset, because FID assesses the Wasserstein distance between the generated samples and training data in the feature space of the Inception network \cite{inception_v3}.

In this experiment, our UOTM model is compared with two other methods: \textit{OTM} \cite{fanscalable} (Constraints in both marginals) and \textit{UOTM with fixed $\mu$} (Constraints only in the source distribution). We introduced \textit{Fixed-$\mu$} variant because generative models usually sample directly from the input noise distribution $\mu$. This direct sampling implies the hard constraint on the source distribution. Note that this hard constraint corresponds to setting $\Psi_{1}^*(x)=x$ in Eq \ref{eq:uot-network} (Remark \ref{remark:psi}). (See Appendix \ref{appen:implementation} for implementation details of UOTM with fixed $\mu$.)

Table \ref{tab:compare-otms} shows the data distribution matching results. 
Interestingly, despite the soft constraint, \textbf{our UOTM matches data distribution better than OTM in both datasets. }
In the toy dataset, UOTM achieves similar KL divergence with and without fixed $\mu$ for each $\tau$, which is much smaller than OTM.  
This result can be interpreted through the theoretical properties in Sec \ref{sec:theory}. Following Thm \ref{thm:uot-ot-relation}, both UOTM models exhibit smaller $D_{KL}(T_{\#}\mu | \nu)$ for the smaller $\tau$ in the toy dataset. It is worth noting that, unlike UOTM, $D_{KL}(T_{\#}\mu | \nu)$ does not change for OTM. This is because, in the standard OT problem, the optimal transport map $\pi^{\star}$ does not depend on $\tau$ in Eq \ref{eq:Kantorovich}. 
Moreover, in CIFAR-10, while Fixed-$\mu$ shows a similar FID score to OTM, UOTM significantly outperforms both models.
As discussed in Thm \ref{thm:convexity}, we interpret that relaxing both marginals improved the stability of the potential network $v$, which led to better convergence of the model on the more complex data. 
This convergence benefit can be observed in the learned transport map in Fig \ref{fig:OT-plan}. 
Even on the toy dataset, the UTOM transport map presents a better-organized correspondence between source and target samples than 
OTM and Fixed-$\mu$.
In summary,
our model is competitive in matching the target distribution for generative modeling tasks.
\begin{figure}[t]
    \captionsetup[subfigure]{aboveskip=-1pt,belowskip=-1pt}
    \begin{center}
        \begin{subfigure}[b]{0.3\textwidth}
            \includegraphics[width=\textwidth]{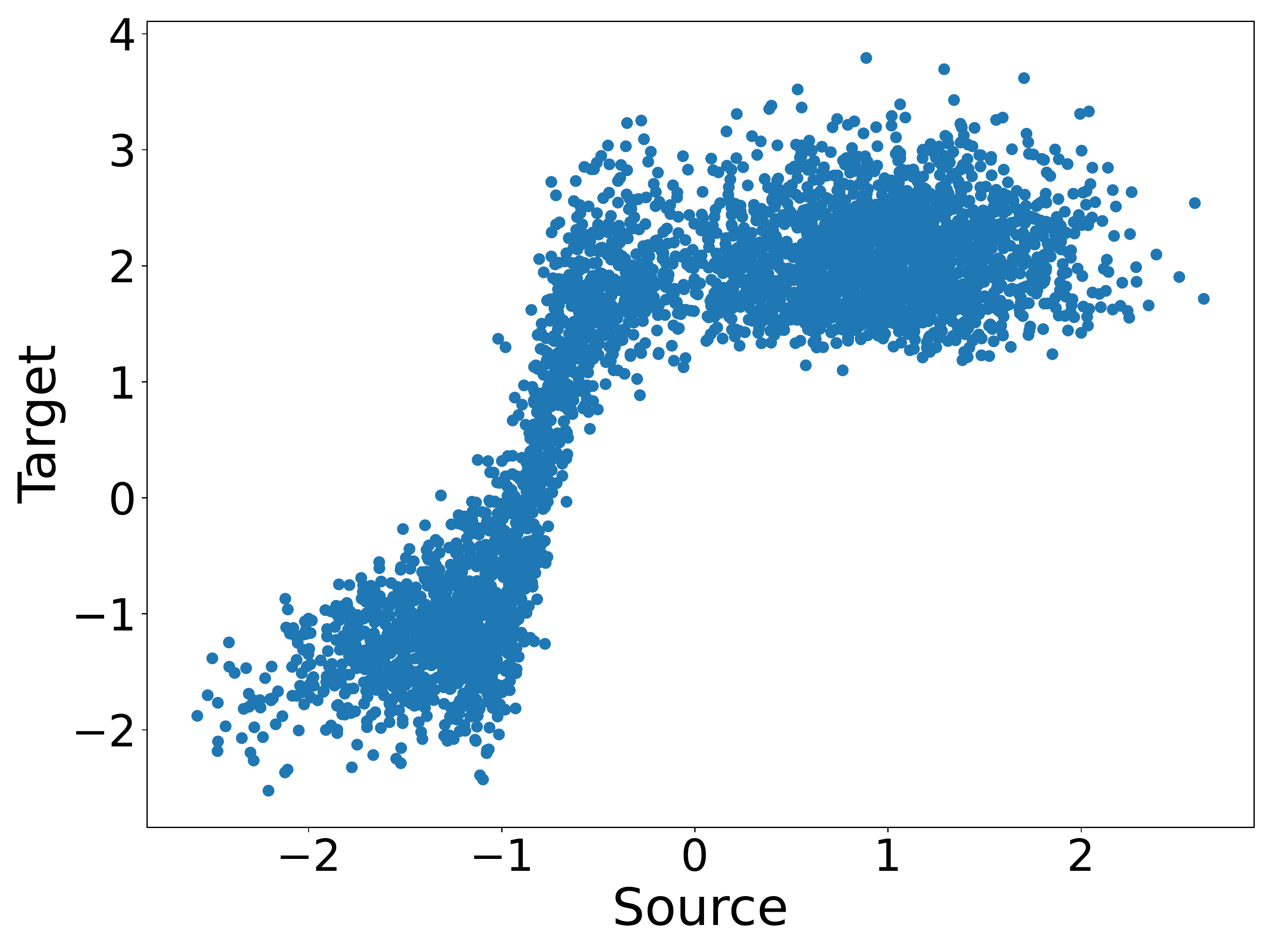}
            \caption{OTM}
        \end{subfigure}
        \begin{subfigure}[b]{0.3\textwidth}
            \includegraphics[width=\textwidth]{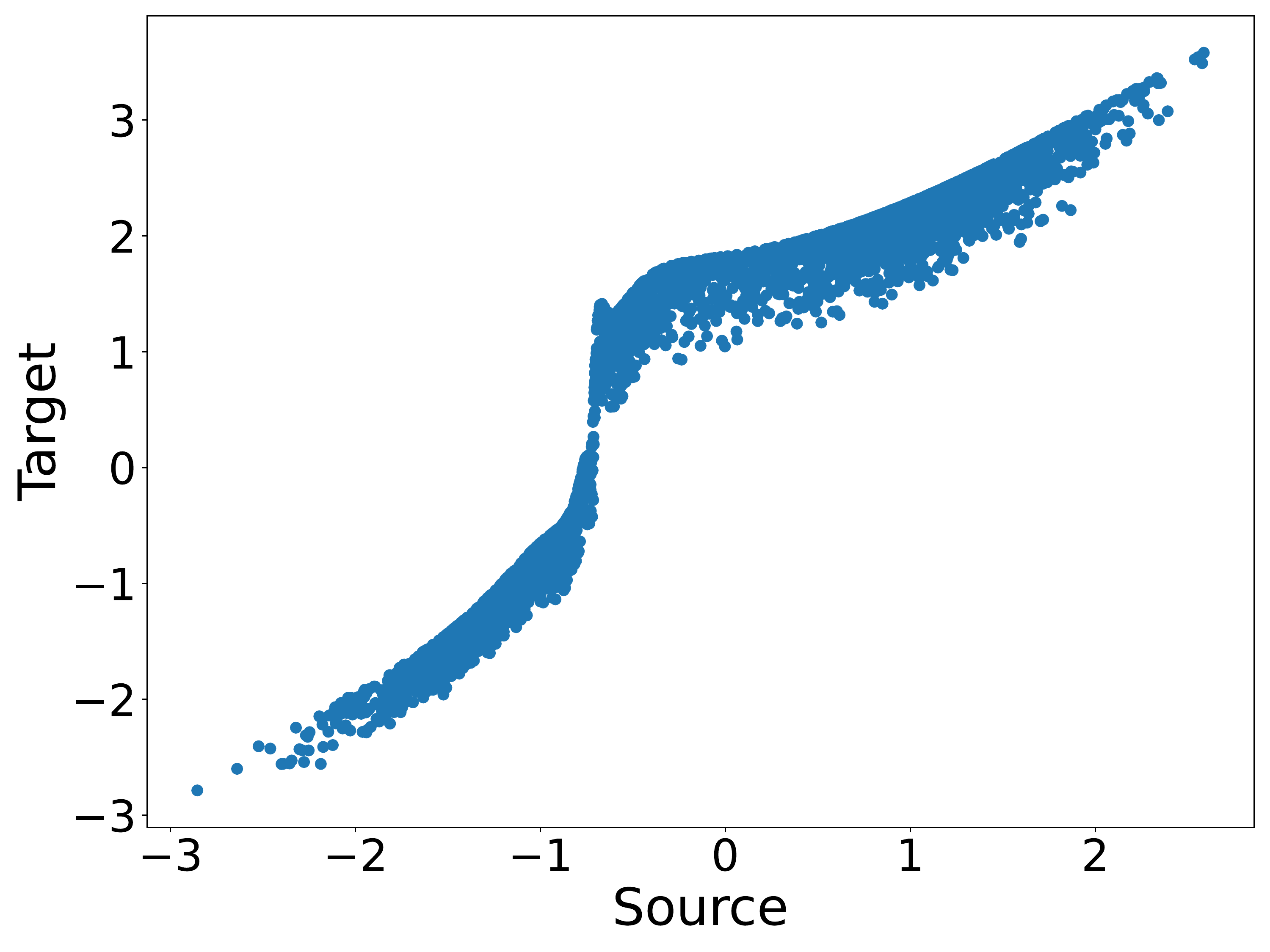}
            \caption{Fixed-$\mu$}
        \end{subfigure}
        \begin{subfigure}[b]{0.3\textwidth}
            \includegraphics[width=\textwidth]{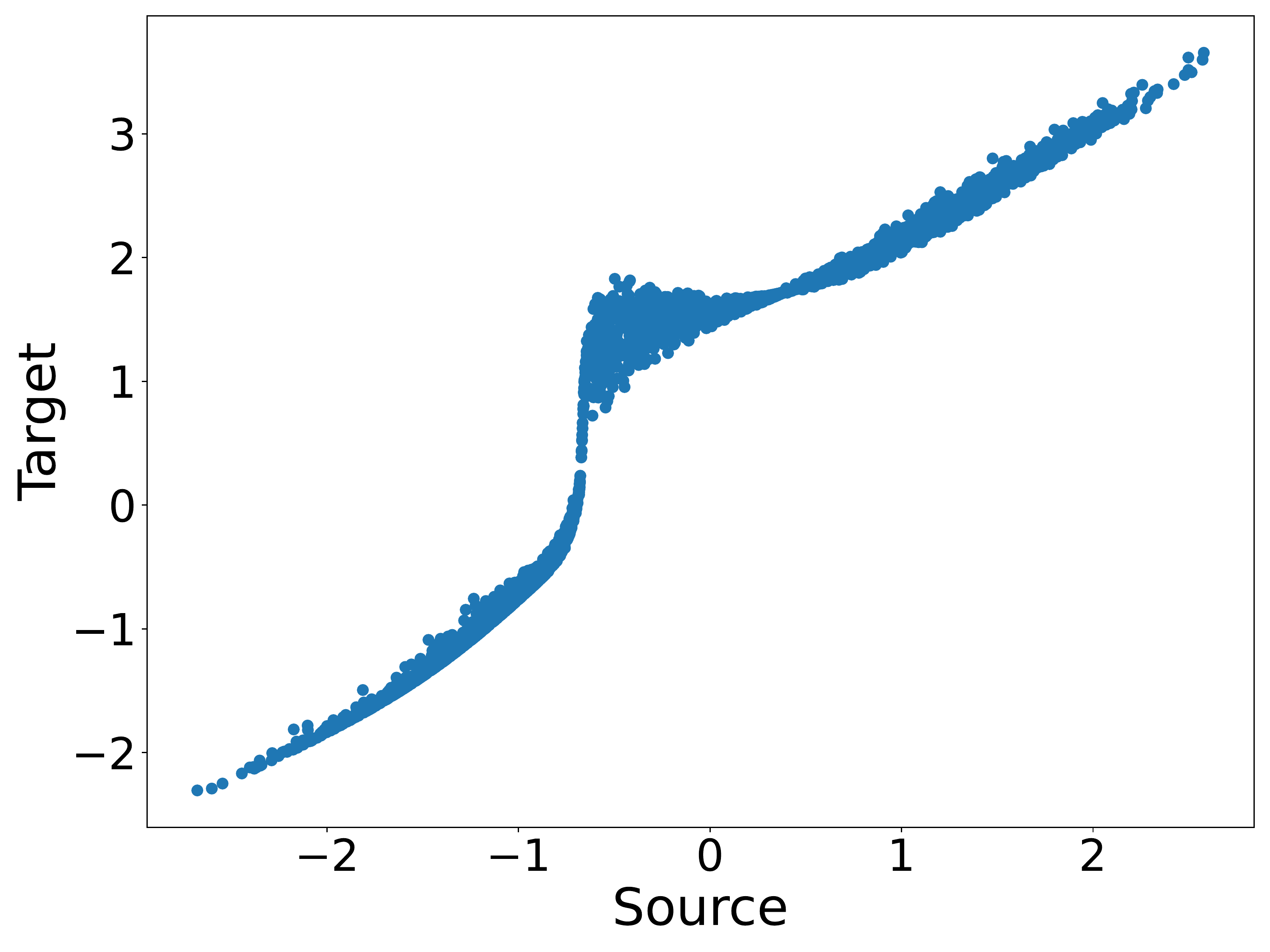}
            \caption{UOTM}
        \end{subfigure}
    \end{center}
    \vspace{-2pt}
    \caption{\textbf{Visualization of OT Map $\boldsymbol{\left(x,T(x)\right)}$ trained on clean Toy data}. The comparison suggests that Fixed-$\mu$ and UOTM models are closer to the optimal transport map compared to OTM.} 
    \vspace{-13pt}
    \label{fig:OT-plan}
\end{figure}

\paragraph{Image Generation}
We evaluated our UOTM model on the two generative model benchmarks: CIFAR-10 \cite{cifar10} ($32 \times 32$) and CelebA-HQ \cite{celeba} ($256 \times 256$). For quantitative comparison, we adopted FID \cite{fid} and Inception Score (IS) \cite{inceptionscore}.
The qualitative performance of UOTM is depicted in Fig \ref{fig:samples}.
As shown in Table \ref{tab:compare-cifar10}, UOTM model demonstrates state-of-the-art results among existing OT-based methods, with an FID of 2.97 and IS of 9.68. 
Our UOTM outperforms the second-best performing OT-based model OTM(Large), which achieves an FID of 7.68, by a large margin. 
Moreover, UOTM(Small) model surpasses another UOT-based model with the same backbone network (Robust-OT), which achieves an FID of 21.57.
Furthermore, our model achieves the state-of-the-art FID score of 6.36 on CelebA-HQ (256$\times$256). To the best of our knowledge, UOTM is the first OT-based generative model that has shown comparable results with state-of-the-art models in various datasets.

\subsection{Fast Convergence} \label{sec:exp_convegence}
The discussion in Thm \ref{thm:convexity} suggests that UOTM offers some optimization benefits over OTM, such as faster and more stable convergence. In addition to the transport map visualization in the toy dataset (Fig \ref{fig:robust-toy} and \ref{fig:OT-plan}), we investigate the faster convergence of UOTM on the image dataset.
In CIFAR-10, UOTM converges in 600 epochs, whereas OTM takes about 1000 epochs (Fig \ref{fig:convergence}).
To achieve the same performance as OTM, UOTM needs only 200 epochs of training, which is about five times faster.
In addition, compared to several approaches that train CIFAR-10 with NCSN++ (\emph{large model}) \cite{scoresde}, our model has the advantage in training speed; Score SDE \cite{scoresde} takes more than 70 hours for training CIFAR-10, 48 hours for DDGAN \cite{xiao2021tackling}, and 35-40 hours for RGM on four Tesla V100 GPUs. OTM takes approximately 30-35 hours to converge, while our model only takes about 25 hours.

\begin{figure}[t]
    \centering
    \begin{minipage}{.32\linewidth}
        \includegraphics[width=\textwidth]{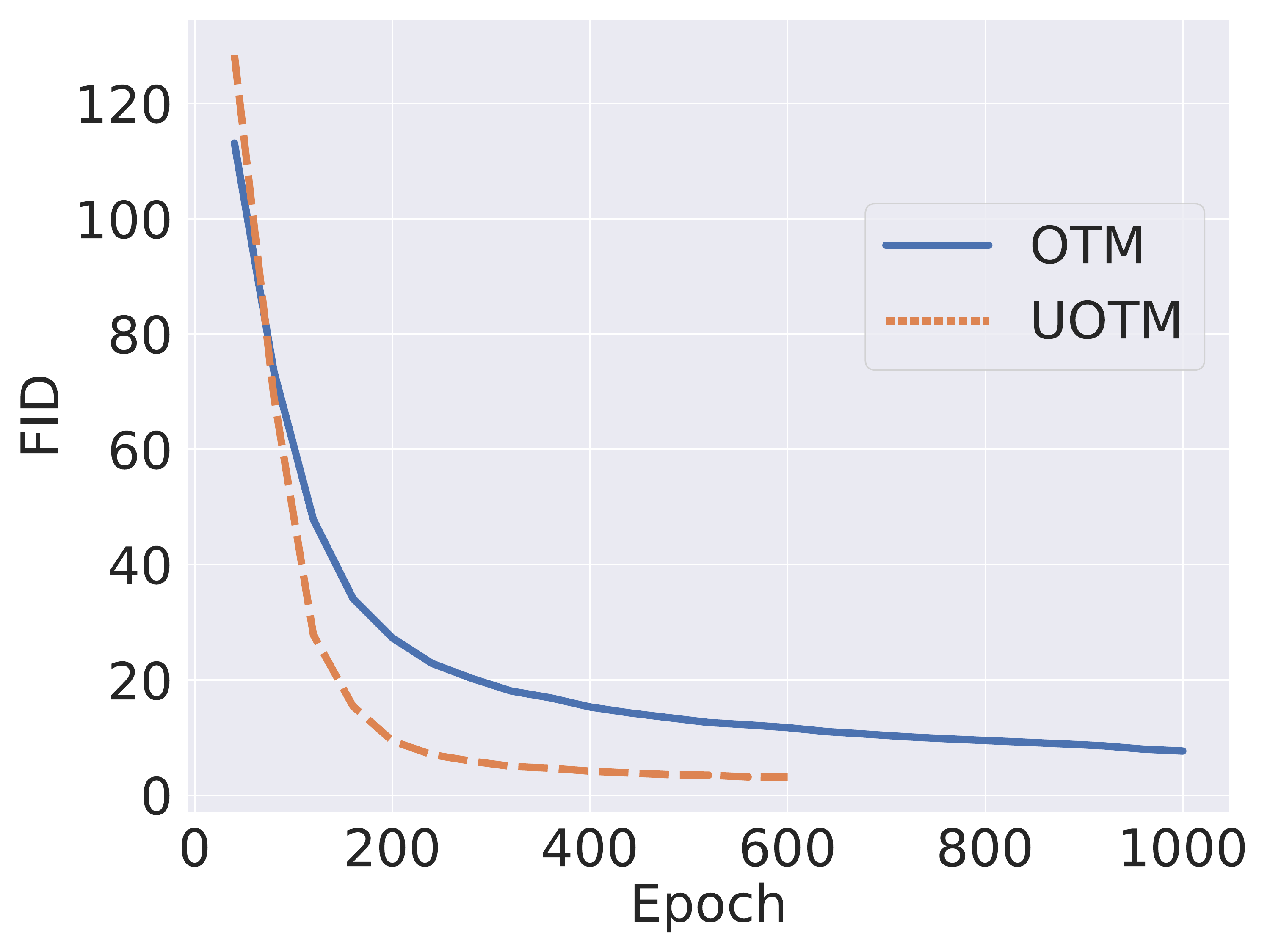}
        \vspace{-10pt}
        \caption{
        \textbf{FID Scores during Training on CIFAR-10.}
        }
        \label{fig:convergence}
    \end{minipage}
    \hfill
    \centering
    \begin{minipage}{.32\linewidth}
        \includegraphics[width=\textwidth]{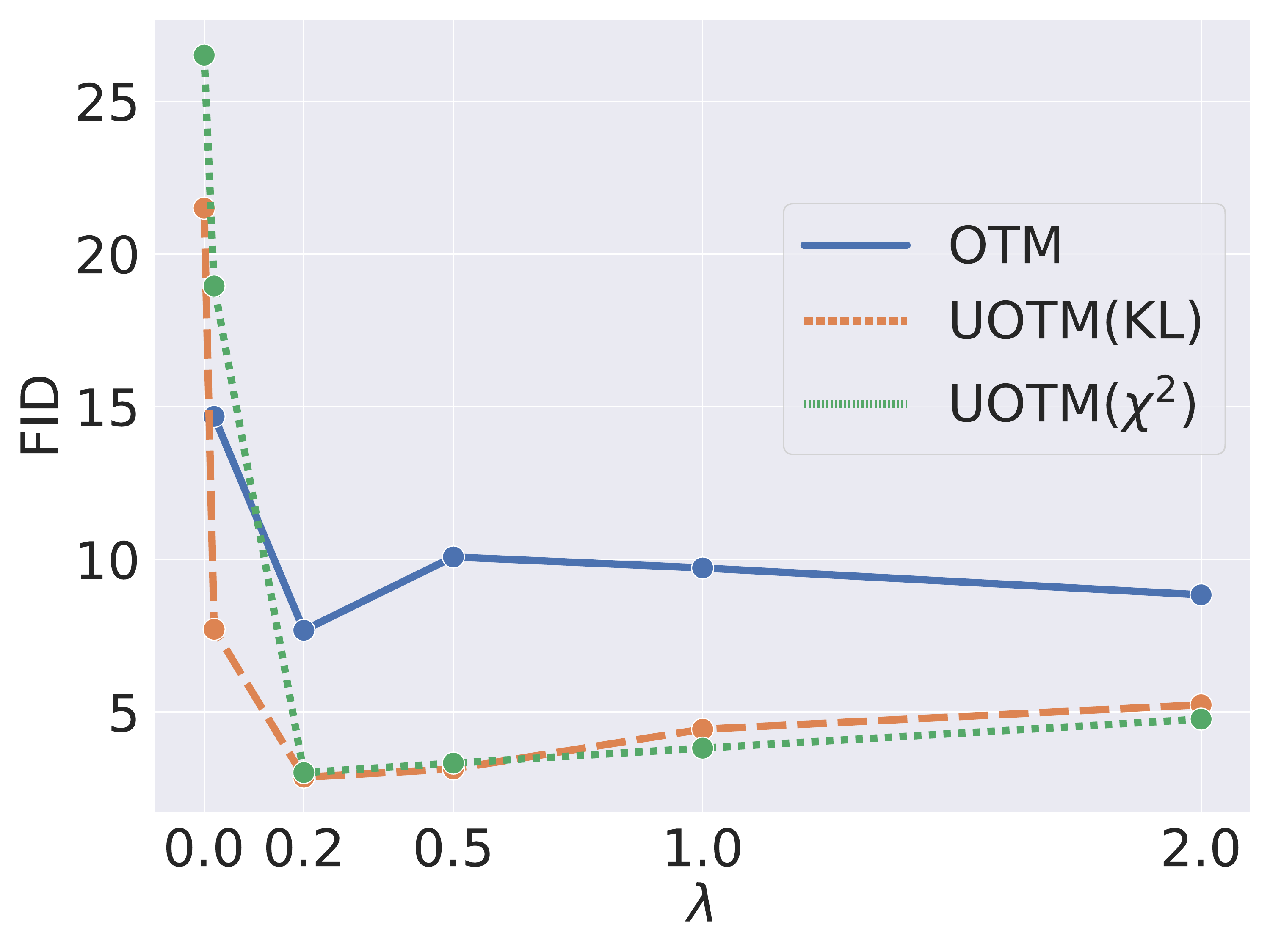}
        \vspace{-10pt}
        \caption{
        \textbf{Ablation Study on Regularizer Intensity $\lambda$.}
        }
        \label{fig:abl_reg}
    \end{minipage}
    \hfill
    \centering
    \begin{minipage}{.32\linewidth}
        \includegraphics[width=\textwidth]{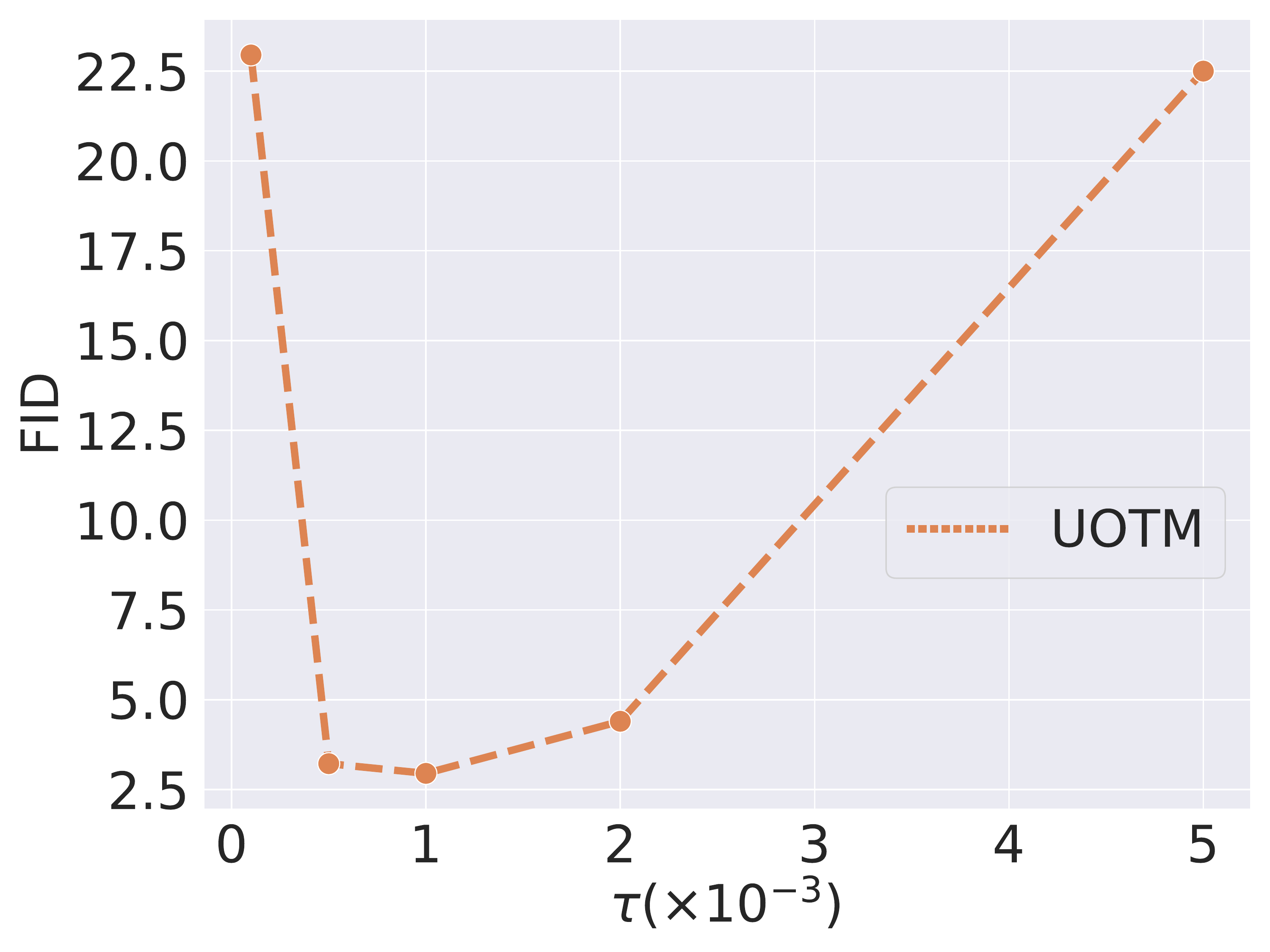}
        \vspace{-10pt}
        \caption{
        \textbf{Ablation Study on $\tau$ in $c(x,y)=\tau \|x-y\|_2^{2}$.}
        }
        \label{fig:abl_tau}
    \end{minipage}
    \vspace{-10pt}
\end{figure}

\subsection{Ablation Studies} \label{sec:exp_ablation}
\paragraph{Generalized $\Psi_1$, $\Psi_2$}
\begin{wraptable}{r}{0.3\textwidth}
    \vspace{-12pt}
    \setlength\tabcolsep{2.0pt}
    \caption{
    \textbf{Ablation Study on Csisz\`{a}r Divergence $D_{\Psi_{i}}(\cdot | \cdot)$.}
    }
    \vspace{-2pt}
    \label{tab:general_psi}
    \centering
    \scalebox{.9}{
        \begin{tabular}{cc}
        \toprule
        $(\Psi_1,\Psi_2)$ &   FID \\
        \midrule
        (KL,KL) & 2.97 \\
        ($\chi^2$, $\chi^2$) & 3.02 \\
        (KL, $\chi^2$) & 3.21\\
        ($\chi^2$, KL) & 2.78\\
        (Softplus, Softplus) & 3.17\\
        \bottomrule
        \end{tabular}
        }
    \vspace{-10pt}
\end{wraptable}

We investigate the various choices of \csiszar{} divergence in UOT problem, i.e., $\Psi^{*}_1$ and $\Psi^{*}_2$ in Eq \ref{eq:uot-network}. As discussed in Remark \ref{remark:psi_candidates}, the necessary condition of feasible $\Psi^{*}$ is non-decreasing, convex, and differentiable functions. 
For instance, setting the \csiszar{} divergence $D_\Psi$ as $f$-divergence families such as KL divergence, $\chi^2$ divergence, and Jensen-Shannon divergence satisfies the aforementioned conditions. 
For practical optimization, we chose $\Psi^{*}(x)$ that gives finite values for all $x$, such as KL divergence (Eq \ref{eq:Psi_KL}) and $\chi^2$ divergence (Eq \ref{eq:Psi_Chi}):
\begin{equation} \label{eq:Psi_Chi}
\Psi(x)=\begin{cases} (x-1)^2, & \mbox{if} \ x \geq 0 \\ \infty, & \mbox{if} \ x < 0 \end{cases}, 
\qquad \Psi^*(x)=\begin{cases} \frac{1}{4}x^2 + x, & \mbox{if} \ x\geq -2 \\ -1, & \mbox{if} \ x < -2 \end{cases}.
\end{equation}
We assessed the performance of our UOTM models for the cases where $D_{\Psi}$ is KL divergence or $\chi^2$ divergence.
Additionally, we tested our model when $\Psi^{*}=\operatorname{Softplus}$ (Eq \ref{eq:softplus}), which is a direct parametrization of $\Psi^{*}$.
Table \ref{tab:general_psi} shows that our UOTM model achieves competitive performance across all five combinations of $(\Psi^{*}_{1}, \Psi^{*}_{2})$. 

\paragraph{$\lambda$ in Regularizer $\mathcal{L}_{reg}$}
We conduct an ablation study to investigate the effect of the regularization term $\mathcal{L}_{reg}$ on our model's performance. The WGAN family is known to exhibit unstable training dynamics and to highly depend on the regularization term \cite{wgan, wgan-gp, wgan-lp, which-converge}. Similarly, Figure \ref{fig:abl_reg} shows that OTM is also sensitive to the regularization term and fails to converge without it (OTM shows an FID score of 152 without regularization. Hence, we excluded this result in Fig \ref{fig:abl_reg} for better visualization). In contrast, our model is robust to changes in the regularization hyperparameter and produces reasonable performance even without the regularization.

\paragraph{$\tau$ in the Cost Function}
We performed an ablation study on $\tau$ in the cost function $c(x,y)=\tau \lVert x-y \rVert_2^2$. In our model, the hyperparameter $\tau$ controls the relative weight between the cost $c(x,y)$ and the marginal matching $D_{\Psi_{1}}, D_{\Psi_{2}}$ in Eq \ref{eq:uot}. 
In Fig \ref{fig:abl_tau}, our model maintains a decent performance of FID($\leq 5$) for $\tau = \{0.5, 1, 2\} \times 10^{-3}$.
However, the FID score sharply degrades at $\tau=\{0.1, 5\}\times 10^{-3}$. 
Thm \ref{thm:uot-ot-relation} suggests that a smaller $\tau$ leads to a tighter distribution matching. 
However, this cannot explain the degradation at $\tau=0.1\times 10^{-3}$. 
We interpret this is because the cost function provides some regularization effect that prevents a mode collapse of the model (See the Appendix \ref{appen:discussion-tau} for a detailed discussion).

\section{Conclusion}
In this paper, we proposed a generative model based on the semi-dual form of Unbalanced Optimal Transport, called UOTM. 
Our experiments demonstrated that UOTM achieves better target distribution matching and faster convergence than the OT-based method. 
Moreover, UOTM outperformed existing 
OT-based generative model benchmarks, such as CIFAR-10 and CelebA-HQ-256. The potential negative societal impact of our work is that the Generative Model often learns the dependence in the semantics of data, including any existing biases. Hence, deploying a Generative Model in real-world applications requires careful monitoring to prevent the amplification of existing societal biases present in the data. It is important to carefully control the training data and modeling process of Generative Models to mitigate potential negative societal impacts.

\section*{Acknowledgements}
This work was supported by KIAS Individual Grant [AP087501] via the Center for AI and Natural Sciences at Korea Institute for Advanced Study, the NRF grant [2012R1A2C3010887] and the MSIT/IITP ([1711117093], [2021-0-00077], [No. 2021-0-01343, Artificial Intelligence Graduate School Program(SNU)]).

\bibliographystyle{plainnat}
\bibliography{mybib.bib}

\begin{thebibliography}{83}
\providecommand{\natexlab}[1]{#1}
\providecommand{\url}[1]{\texttt{#1}}
\expandafter\ifx\csname urlstyle\endcsname\relax
  \providecommand{\doi}[1]{doi: #1}\else
  \providecommand{\doi}{doi: \begingroup \urlstyle{rm}\Url}\fi

\bibitem[Amos et~al.(2017)Amos, Xu, and Kolter]{icnn}
Brandon Amos, Lei Xu, and J~Zico Kolter.
\newblock Input convex neural networks.
\newblock In \emph{International Conference on Machine Learning}, pages 146--155. PMLR, 2017.

\bibitem[An et~al.(2019)An, Guo, Lei, Luo, Yau, and Gu]{ae-ot}
Dongsheng An, Yang Guo, Na~Lei, Zhongxuan Luo, Shing-Tung Yau, and Xianfeng Gu.
\newblock Ae-ot: A new generative model based on extended semi-discrete optimal transport.
\newblock \emph{ICLR 2020}, 2019.

\bibitem[An et~al.(2020)An, Guo, Zhang, Qi, Lei, and Gu]{ae-ot-gan}
Dongsheng An, Yang Guo, Min Zhang, Xin Qi, Na~Lei, and Xianfang Gu.
\newblock Ae-ot-gan: Training gans from data specific latent distribution.
\newblock In \emph{Computer Vision--ECCV 2020: 16th European Conference, Glasgow, UK, August 23--28, 2020, Proceedings, Part XXVI 16}, pages 548--564. Springer, 2020.

\bibitem[Aneja et~al.(2021)Aneja, Schwing, Kautz, and Vahdat]{aneja2021contrastive}
Jyoti Aneja, Alex Schwing, Jan Kautz, and Arash Vahdat.
\newblock A contrastive learning approach for training variational autoencoder priors.
\newblock \emph{Advances in neural information processing systems}, 34:\penalty0 480--493, 2021.

\bibitem[Ansari et~al.(2020)Ansari, Ang, and Soh]{ansari2020refining}
Abdul~Fatir Ansari, Ming~Liang Ang, and Harold Soh.
\newblock Refining deep generative models via discriminator gradient flow.
\newblock \emph{arXiv preprint arXiv:2012.00780}, 2020.

\bibitem[Arjovsky et~al.(2017)Arjovsky, Chintala, and Bottou]{wgan}
Martin Arjovsky, Soumith Chintala, and L{\'e}on Bottou.
\newblock Wasserstein generative adversarial networks.
\newblock In \emph{International conference on machine learning}, pages 214--223. PMLR, 2017.

\bibitem[Artstein-Avidan and Milman(2009)]{involution}
Shiri Artstein-Avidan and Vitali Milman.
\newblock The concept of duality in convex analysis, and the characterization of the legendre transform.
\newblock \emph{Annals of mathematics}, pages 661--674, 2009.

\bibitem[Balaji et~al.(2020)Balaji, Chellappa, and Feizi]{robust-ot}
Yogesh Balaji, Rama Chellappa, and Soheil Feizi.
\newblock Robust optimal transport with applications in generative modeling and domain adaptation.
\newblock \emph{Advances in Neural Information Processing Systems}, 33:\penalty0 12934--12944, 2020.

\bibitem[Bertsekas(2009)]{cvxopt}
Dimitri Bertsekas.
\newblock \emph{Convex optimization theory}, volume~1.
\newblock Athena Scientific, 2009.

\bibitem[Bertsekas and Shreve(1996)]{bertsekas1996stochastic}
Dimitri Bertsekas and Steven~E Shreve.
\newblock \emph{Stochastic optimal control: the discrete-time case}, volume~5.
\newblock Athena Scientific, 1996.

\bibitem[Chapel et~al.(2021)Chapel, Flamary, Wu, F{\'e}votte, and Gasso]{regression-uot}
Laetitia Chapel, R{\'e}mi Flamary, Haoran Wu, C{\'e}dric F{\'e}votte, and Gilles Gasso.
\newblock Unbalanced optimal transport through non-negative penalized linear regression.
\newblock \emph{Advances in Neural Information Processing Systems}, 34:\penalty0 23270--23282, 2021.

\bibitem[Chizat et~al.(2018)Chizat, Peyr{\'e}, Schmitzer, and Vialard]{uot1}
Lenaic Chizat, Gabriel Peyr{\'e}, Bernhard Schmitzer, and Fran{\c{c}}ois-Xavier Vialard.
\newblock Unbalanced optimal transport: Dynamic and kantorovich formulations.
\newblock \emph{Journal of Functional Analysis}, 274\penalty0 (11):\penalty0 3090--3123, 2018.

\bibitem[Choi et~al.(2023)Choi, Park, and Kang]{rgm}
Jaemoo Choi, Yesom Park, and Myungjoo Kang.
\newblock Restoration based generative models.
\newblock In \emph{Proceedings of the 40th International Conference on Machine Learning}, volume 202. PMLR, 2023.

\bibitem[Deng(2012)]{deng2012mnist}
Li~Deng.
\newblock The mnist database of handwritten digit images for machine learning research.
\newblock \emph{IEEE Signal Processing Magazine}, 29\penalty0 (6):\penalty0 141--142, 2012.

\bibitem[Dockhorn et~al.(2022)Dockhorn, Vahdat, and Kreis]{dockhorn2021score}
Tim Dockhorn, Arash Vahdat, and Karsten Kreis.
\newblock Score-based generative modeling with critically-damped langevin diffusion.
\newblock \emph{The International Conference on Learning Representations}, 2022.

\bibitem[Esser et~al.(2021)Esser, Rombach, and Ommer]{esser2021taming}
Patrick Esser, Robin Rombach, and Bjorn Ommer.
\newblock Taming transformers for high-resolution image synthesis.
\newblock In \emph{Proceedings of the IEEE/CVF conference on computer vision and pattern recognition}, pages 12873--12883, 2021.

\bibitem[Evans(2022)]{evans2022partial}
Lawrence~C Evans.
\newblock \emph{Partial differential equations}, volume~19.
\newblock American Mathematical Society, 2022.

\bibitem[Fan et~al.()Fan, Liu, Ma, Chen, and Zhou]{fanscalable}
Jiaojiao Fan, Shu Liu, Shaojun Ma, Yongxin Chen, and Hao-Min Zhou.
\newblock Scalable computation of monge maps with general costs.
\newblock In \emph{ICLR Workshop on Deep Generative Models for Highly Structured Data}.

\bibitem[Fatras et~al.(2021)Fatras, S{\'e}journ{\'e}, Flamary, and Courty]{minibatch-ot}
Kilian Fatras, Thibault S{\'e}journ{\'e}, R{\'e}mi Flamary, and Nicolas Courty.
\newblock Unbalanced minibatch optimal transport; applications to domain adaptation.
\newblock In \emph{International Conference on Machine Learning}, pages 3186--3197. PMLR, 2021.

\bibitem[Flamary et~al.(2016)Flamary, Courty, Tuia, and Rakotomamonjy]{da-ot}
R~Flamary, N~Courty, D~Tuia, and A~Rakotomamonjy.
\newblock Optimal transport for domain adaptation.
\newblock \emph{IEEE Trans. Pattern Anal. Mach. Intell}, 1, 2016.

\bibitem[Gallou{\"e}t et~al.(2021)Gallou{\"e}t, Ghezzi, and Vialard]{semi-dual3}
Thomas Gallou{\"e}t, Roberta Ghezzi, and Fran{\c{c}}ois-Xavier Vialard.
\newblock Regularity theory and geometry of unbalanced optimal transport.
\newblock \emph{arXiv preprint arXiv:2112.11056}, 2021.

\bibitem[Gao et~al.(2021)Gao, Song, Poole, Wu, and Kingma]{recovery}
Ruiqi Gao, Yang Song, Ben Poole, Ying~Nian Wu, and Diederik~P Kingma.
\newblock Learning energy-based models by diffusion recovery likelihood.
\newblock \emph{Advances in neural information processing systems}, 2021.

\bibitem[Genevay(2019)]{OTblueprint}
Aude Genevay.
\newblock \emph{Entropy-regularized optimal transport for machine learning}.
\newblock PhD thesis, Paris Sciences et Lettres (ComUE), 2019.

\bibitem[Gong et~al.(2019)Gong, Chang, Jiang, and Wang]{gong2019autogan}
Xinyu Gong, Shiyu Chang, Yifan Jiang, and Zhangyang Wang.
\newblock Autogan: Neural architecture search for generative adversarial networks.
\newblock In \emph{Proceedings of the IEEE/CVF International Conference on Computer Vision}, pages 3224--3234, 2019.

\bibitem[Goodfellow et~al.(2020)Goodfellow, Pouget-Abadie, Mirza, Xu, Warde-Farley, Ozair, Courville, and Bengio]{gan}
Ian Goodfellow, Jean Pouget-Abadie, Mehdi Mirza, Bing Xu, David Warde-Farley, Sherjil Ozair, Aaron Courville, and Yoshua Bengio.
\newblock Generative adversarial networks.
\newblock \emph{Communications of the ACM}, 63\penalty0 (11):\penalty0 139--144, 2020.

\bibitem[Gozlan et~al.(2017)Gozlan, Roberto, Samson, and Tetali]{gozlan2017kantorovich}
Nathael Gozlan, Cyril Roberto, Paul-Marie Samson, and Prasad Tetali.
\newblock Kantorovich duality for general transport costs and applications.
\newblock \emph{Journal of Functional Analysis}, 273\penalty0 (11):\penalty0 3327--3405, 2017.

\bibitem[Gulrajani et~al.(2017)Gulrajani, Ahmed, Arjovsky, Dumoulin, and Courville]{wgan-gp}
Ishaan Gulrajani, Faruk Ahmed, Martin Arjovsky, Vincent Dumoulin, and Aaron~C Courville.
\newblock Improved training of wasserstein gans.
\newblock \emph{Advances in neural information processing systems}, 30, 2017.

\bibitem[Henry-Labordere(2019)]{semi-dual-henry}
Pierre Henry-Labordere.
\newblock (martingale) optimal transport and anomaly detection with neural networks: A primal-dual algorithm.
\newblock \emph{arXiv preprint arXiv:1904.04546}, 2019.

\bibitem[Heusel et~al.(2017)Heusel, Ramsauer, Unterthiner, Nessler, and Hochreiter]{fid}
Martin Heusel, Hubert Ramsauer, Thomas Unterthiner, Bernhard Nessler, and Sepp Hochreiter.
\newblock Gans trained by a two time-scale update rule converge to a local nash equilibrium.
\newblock \emph{Advances in neural information processing systems}, 30, 2017.

\bibitem[Ho et~al.(2020)Ho, Jain, and Abbeel]{ddpm}
Jonathan Ho, Ajay Jain, and Pieter Abbeel.
\newblock Denoising diffusion probabilistic models.
\newblock \emph{Advances in Neural Information Processing Systems}, 33:\penalty0 6840--6851, 2020.

\bibitem[Jiang et~al.(2021)Jiang, Chang, and Wang]{jiang2021transgan}
Yifan Jiang, Shiyu Chang, and Zhangyang Wang.
\newblock Transgan: Two transformers can make one strong gan.
\newblock \emph{arXiv preprint arXiv:2102.07074}, 1\penalty0 (3), 2021.

\bibitem[Jing et~al.(2022)Jing, Corso, Berlinghieri, and Jaakkola]{jing2022subspace}
Bowen Jing, Gabriele Corso, Renato Berlinghieri, and Tommi Jaakkola.
\newblock Subspace diffusion generative models.
\newblock \emph{arXiv preprint arXiv:2205.01490}, 2022.

\bibitem[Kantorovich(1948)]{Kantorovich1948}
Leonid~Vitalevich Kantorovich.
\newblock On a problem of monge.
\newblock \emph{Uspekhi Mat. Nauk}, pages 225--226, 1948.

\bibitem[Karras et~al.(2017)Karras, Aila, Laine, and Lehtinen]{karras2017progressive}
Tero Karras, Timo Aila, Samuli Laine, and Jaakko Lehtinen.
\newblock Progressive growing of gans for improved quality, stability, and variation.
\newblock \emph{arXiv preprint arXiv:1710.10196}, 2017.

\bibitem[Karras et~al.(2020)Karras, Aittala, Hellsten, Laine, Lehtinen, and Aila]{karras2020training}
Tero Karras, Miika Aittala, Janne Hellsten, Samuli Laine, Jaakko Lehtinen, and Timo Aila.
\newblock Training generative adversarial networks with limited data.
\newblock \emph{Advances in Neural Information Processing Systems}, 33:\penalty0 12104--12114, 2020.

\bibitem[Kim et~al.(2021{\natexlab{a}})Kim, Park, and Hwang]{local-stability}
Cheolhyeong Kim, Seungtae Park, and Hyung~Ju Hwang.
\newblock Local stability of wasserstein gans with abstract gradient penalty.
\newblock \emph{IEEE Transactions on Neural Networks and Learning Systems}, 33\penalty0 (9):\penalty0 4527--4537, 2021{\natexlab{a}}.

\bibitem[Kim et~al.(2021{\natexlab{b}})Kim, Shin, Song, Kang, and Moon]{kim2021score}
Dongjun Kim, Seungjae Shin, Kyungwoo Song, Wanmo Kang, and Il-Chul Moon.
\newblock Score matching model for unbounded data score.
\newblock \emph{arXiv preprint arXiv:2106.05527}, 2021{\natexlab{b}}.

\bibitem[Kingma and Dhariwal(2018)]{kingma2018glow}
Durk~P Kingma and Prafulla Dhariwal.
\newblock Glow: Generative flow with invertible 1x1 convolutions.
\newblock \emph{Advances in neural information processing systems}, 31, 2018.

\bibitem[Korotin et~al.()Korotin, Kolesov, and Burnaev]{strikes}
Alexander Korotin, Alexander Kolesov, and Evgeny Burnaev.
\newblock Kantorovich strikes back! wasserstein gans are not optimal transport?
\newblock In \emph{Thirty-sixth Conference on Neural Information Processing Systems Datasets and Benchmarks Track}.

\bibitem[Korotin et~al.(2021{\natexlab{a}})Korotin, Egiazarian, Asadulaev, Safin, and Burnaev]{icnn-korotin}
Alexander Korotin, Vage Egiazarian, Arip Asadulaev, Alexander Safin, and Evgeny Burnaev.
\newblock Wasserstein-2 generative networks.
\newblock In \emph{International Conference on Learning Representations}, 2021{\natexlab{a}}.

\bibitem[Korotin et~al.(2021{\natexlab{b}})Korotin, Li, Genevay, Solomon, Filippov, and Burnaev]{semi-dual-korotin}
Alexander Korotin, Lingxiao Li, Aude Genevay, Justin~M Solomon, Alexander Filippov, and Evgeny Burnaev.
\newblock Do neural optimal transport solvers work? a continuous wasserstein-2 benchmark.
\newblock \emph{Advances in Neural Information Processing Systems}, 34:\penalty0 14593--14605, 2021{\natexlab{b}}.

\bibitem[Korotin et~al.(2022)Korotin, Selikhanovych, and Burnaev]{not}
Alexander Korotin, Daniil Selikhanovych, and Evgeny Burnaev.
\newblock Neural optimal transport.
\newblock \emph{arXiv preprint arXiv:2201.12220}, 2022.

\bibitem[Krizhevsky et~al.(2009)Krizhevsky, Hinton, et~al.]{cifar10}
Alex Krizhevsky, Geoffrey Hinton, et~al.
\newblock Learning multiple layers of features from tiny images.
\newblock 2009.

\bibitem[Liero et~al.(2018)Liero, Mielke, and Savar{\'e}]{uot2}
Matthias Liero, Alexander Mielke, and Giuseppe Savar{\'e}.
\newblock Optimal entropy-transport problems and a new hellinger--kantorovich distance between positive measures.
\newblock \emph{Inventiones mathematicae}, 211\penalty0 (3):\penalty0 969--1117, 2018.

\bibitem[Liu et~al.(2019)Liu, Gu, and Samaras]{wgan-qc}
Huidong Liu, Xianfeng Gu, and Dimitris Samaras.
\newblock Wasserstein gan with quadratic transport cost.
\newblock In \emph{Proceedings of the IEEE/CVF international conference on computer vision}, pages 4832--4841, 2019.

\bibitem[Liu et~al.(2015)Liu, Luo, Wang, and Tang]{celeba}
Ziwei Liu, Ping Luo, Xiaogang Wang, and Xiaoou Tang.
\newblock Deep learning face attributes in the wild.
\newblock In \emph{Proceedings of the IEEE international conference on computer vision}, pages 3730--3738, 2015.

\bibitem[Lu et~al.(2020)Lu, Zhou, Shen, Chen, Zhang, and Yu]{primal-ot2}
Guansong Lu, Zhiming Zhou, Jian Shen, Cheng Chen, Weinan Zhang, and Yong Yu.
\newblock Large-scale optimal transport via adversarial training with cycle-consistency.
\newblock \emph{arXiv preprint arXiv:2003.06635}, 2020.

\bibitem[L{\"u}beck et~al.()L{\"u}beck, Bunne, Gut, del Castillo, Pelkmans, and Alvarez-Melis]{cycle-uot}
Frederike L{\"u}beck, Charlotte Bunne, Gabriele Gut, Jacobo~Sarabia del Castillo, Lucas Pelkmans, and David Alvarez-Melis.
\newblock Neural unbalanced optimal transport via cycle-consistent semi-couplings.
\newblock In \emph{NeurIPS 2022 AI for Science: Progress and Promises}.

\bibitem[Makkuva et~al.(2020)Makkuva, Taghvaei, Oh, and Lee]{icnn-ot}
Ashok Makkuva, Amirhossein Taghvaei, Sewoong Oh, and Jason Lee.
\newblock Optimal transport mapping via input convex neural networks.
\newblock In \emph{International Conference on Machine Learning}, pages 6672--6681. PMLR, 2020.

\bibitem[M{\'e}rigot et~al.(2021)M{\'e}rigot, Santambrogio, and Sarrazin]{particle-ot}
Quentin M{\'e}rigot, Filippo Santambrogio, and Cl{\'e}ment Sarrazin.
\newblock Non-asymptotic convergence bounds for wasserstein approximation using point clouds.
\newblock \emph{Advances in Neural Information Processing Systems}, 34:\penalty0 12810--12821, 2021.

\bibitem[Mescheder et~al.(2018)Mescheder, Geiger, and Nowozin]{which-converge}
Lars Mescheder, Andreas Geiger, and Sebastian Nowozin.
\newblock Which training methods for gans do actually converge?
\newblock In \emph{International conference on machine learning}, pages 3481--3490. PMLR, 2018.

\bibitem[Miyato et~al.()Miyato, Kataoka, Koyama, and Yoshida]{sngan}
Takeru Miyato, Toshiki Kataoka, Masanori Koyama, and Yuichi Yoshida.
\newblock Spectral normalization for generative adversarial networks.
\newblock In \emph{International Conference on Learning Representations}.

\bibitem[Monge(1781)]{monge1781memoire}
Gaspard Monge.
\newblock M{\'e}moire sur la th{\'e}orie des d{\'e}blais et des remblais.
\newblock \emph{Mem. Math. Phys. Acad. Royale Sci.}, pages 666--704, 1781.

\bibitem[Nhan~Dam et~al.(2019)Nhan~Dam, Le, Nguyen, Bui, and Phung]{semi-dual-nhan}
Quan~Hoang Nhan~Dam, Trung Le, Tu~Dinh Nguyen, Hung Bui, and Dinh Phung.
\newblock Three-player wasserstein gan via amortised duality.
\newblock In \emph{Proc. of the 28th Int. Joint Conf. on Artificial Intelligence (IJCAI)}, pages 1--11, 2019.

\bibitem[Parmar et~al.(2021)Parmar, Li, Lee, and Tu]{parmar2021dual}
Gaurav Parmar, Dacheng Li, Kwonjoon Lee, and Zhuowen Tu.
\newblock Dual contradistinctive generative autoencoder.
\newblock In \emph{Proceedings of the IEEE/CVF Conference on Computer Vision and Pattern Recognition}, pages 823--832, 2021.

\bibitem[Petzka et~al.()Petzka, Fischer, and Lukovnikov]{wgan-lp}
Henning Petzka, Asja Fischer, and Denis Lukovnikov.
\newblock On the regularization of wasserstein gans.
\newblock In \emph{International Conference on Learning Representations}.

\bibitem[Peyr{\'e} et~al.(2017)Peyr{\'e}, Cuturi, et~al.]{ComputationalOT}
Gabriel Peyr{\'e}, Marco Cuturi, et~al.
\newblock Computational optimal transport.
\newblock \emph{Center for Research in Economics and Statistics Working Papers}, \penalty0 (2017-86), 2017.

\bibitem[Pham et~al.(2020)Pham, Le, Ho, Pham, and Bui]{sinkhorn-uot}
Khiem Pham, Khang Le, Nhat Ho, Tung Pham, and Hung Bui.
\newblock On unbalanced optimal transport: An analysis of sinkhorn algorithm.
\newblock In \emph{International Conference on Machine Learning}, pages 7673--7682. PMLR, 2020.

\bibitem[Pidhorskyi et~al.(2020)Pidhorskyi, Adjeroh, and Doretto]{pidhorskyi2020adversarial}
Stanislav Pidhorskyi, Donald~A Adjeroh, and Gianfranco Doretto.
\newblock Adversarial latent autoencoders.
\newblock In \emph{Proceedings of the IEEE/CVF Conference on Computer Vision and Pattern Recognition}, pages 14104--14113, 2020.

\bibitem[Roth et~al.(2017)Roth, Lucchi, Nowozin, and Hofmann]{r1_reg}
Kevin Roth, Aurelien Lucchi, Sebastian Nowozin, and Thomas Hofmann.
\newblock Stabilizing training of generative adversarial networks through regularization.
\newblock \emph{Advances in neural information processing systems}, 30, 2017.

\bibitem[Rout et~al.()Rout, Korotin, and Burnaev]{otm}
Litu Rout, Alexander Korotin, and Evgeny Burnaev.
\newblock Generative modeling with optimal transport maps.
\newblock In \emph{International Conference on Learning Representations}.

\bibitem[Salimans et~al.(2016)Salimans, Goodfellow, Zaremba, Cheung, Radford, Chen, and Chen]{inceptionscore}
Tim Salimans, Ian Goodfellow, Wojciech Zaremba, Vicki Cheung, Alec Radford, Xi~Chen, and Xi~Chen.
\newblock Improved techniques for training gans.
\newblock In D.~Lee, M.~Sugiyama, U.~Luxburg, I.~Guyon, and R.~Garnett, editors, \emph{Advances in Neural Information Processing Systems}, volume~29, 2016.

\bibitem[Sanjabi et~al.(2018)Sanjabi, Ba, Razaviyayn, and Lee]{convergence-robustness}
Maziar Sanjabi, Jimmy Ba, Meisam Razaviyayn, and Jason~D Lee.
\newblock On the convergence and robustness of training gans with regularized optimal transport.
\newblock \emph{Advances in Neural Information Processing Systems}, 31, 2018.

\bibitem[Seguy et~al.(2018)Seguy, Damodaran, Flamary, Courty, Rolet, and Blondel]{dual-ot}
Vivien Seguy, Bharath~Bhushan Damodaran, Remi Flamary, Nicolas Courty, Antoine Rolet, and Mathieu Blondel.
\newblock Large-scale optimal transport and mapping estimation.
\newblock In \emph{ICLR 2018-International Conference on Learning Representations}, pages 1--15, 2018.

\bibitem[S{\'e}journ{\'e} et~al.(2019)S{\'e}journ{\'e}, Feydy, Vialard, Trouv{\'e}, and Peyr{\'e}]{csiszar}
Thibault S{\'e}journ{\'e}, Jean Feydy, Fran{\c{c}}ois-Xavier Vialard, Alain Trouv{\'e}, and Gabriel Peyr{\'e}.
\newblock Sinkhorn divergences for unbalanced optimal transport.
\newblock \emph{arXiv e-prints}, pages arXiv--1910, 2019.

\bibitem[S{\'e}journ{\'e} et~al.(2022)S{\'e}journ{\'e}, Peyr{\'e}, and Vialard]{uot-robust}
Thibault S{\'e}journ{\'e}, Gabriel Peyr{\'e}, and Fran{\c{c}}ois-Xavier Vialard.
\newblock Unbalanced optimal transport, from theory to numerics.
\newblock \emph{arXiv preprint arXiv:2211.08775}, 2022.

\bibitem[Song et~al.(2021{\natexlab{a}})Song, Meng, and Ermon]{ddim}
Jiaming Song, Chenlin Meng, and Stefano Ermon.
\newblock Denoising diffusion implicit models.
\newblock \emph{Advances in Neural Information Processing Systems}, 2021{\natexlab{a}}.

\bibitem[Song and Ermon(2019)]{song2019generative}
Yang Song and Stefano Ermon.
\newblock Generative modeling by estimating gradients of the data distribution.
\newblock \emph{Advances in Neural Information Processing Systems}, 32, 2019.

\bibitem[Song et~al.(2021{\natexlab{b}})Song, Sohl-Dickstein, Kingma, Kumar, Ermon, and Poole]{scoresde}
Yang Song, Jascha Sohl-Dickstein, Diederik~P Kingma, Abhishek Kumar, Stefano Ermon, and Ben Poole.
\newblock Score-based generative modeling through stochastic differential equations.
\newblock \emph{The International Conference on Learning Representations}, 2021{\natexlab{b}}.

\bibitem[Szegedy et~al.(2016)Szegedy, Vanhoucke, Ioffe, Shlens, and Wojna]{inception_v3}
Christian Szegedy, Vincent Vanhoucke, Sergey Ioffe, Jon Shlens, and Zbigniew Wojna.
\newblock Rethinking the inception architecture for computer vision.
\newblock In \emph{2016 IEEE Conference on Computer Vision and Pattern Recognition (CVPR)}, pages 2818--2826, 2016.
\newblock \doi{10.1109/CVPR.2016.308}.

\bibitem[Terj{\'e}k and Gonz{\'a}lez-S{\'a}nchez(2022)]{f-div}
D{\'a}vid Terj{\'e}k and Diego Gonz{\'a}lez-S{\'a}nchez.
\newblock Optimal transport with $ f $-divergence regularization and generalized sinkhorn algorithm.
\newblock In \emph{International Conference on Artificial Intelligence and Statistics}, pages 5135--5165. PMLR, 2022.

\bibitem[Vacher and Vialard(2022)]{semi-dual1}
Adrien Vacher and Fran{\c{c}}ois-Xavier Vialard.
\newblock Stability of semi-dual unbalanced optimal transport: fast statistical rates and convergent algorithm.
\newblock 2022.

\bibitem[Vahdat and Kautz(2020)]{vahdat2020nvae}
Arash Vahdat and Jan Kautz.
\newblock Nvae: A deep hierarchical variational autoencoder.
\newblock \emph{Advances in Neural Information Processing Systems}, 33:\penalty0 19667--19679, 2020.

\bibitem[Vahdat et~al.(2021)Vahdat, Kreis, and Kautz]{vahdat2021score}
Arash Vahdat, Karsten Kreis, and Jan Kautz.
\newblock Score-based generative modeling in latent space.
\newblock \emph{Advances in Neural Information Processing Systems}, 34:\penalty0 11287--11302, 2021.

\bibitem[Van~Oord et~al.(2016)Van~Oord, Kalchbrenner, and Kavukcuoglu]{van2016pixel}
Aaron Van~Oord, Nal Kalchbrenner, and Koray Kavukcuoglu.
\newblock Pixel recurrent neural networks.
\newblock In \emph{International conference on machine learning}, pages 1747--1756. PMLR, 2016.

\bibitem[Villani(2021)]{villani2021topics}
C{\'e}dric Villani.
\newblock \emph{Topics in optimal transportation}, volume~58.
\newblock American Mathematical Soc., 2021.

\bibitem[Villani et~al.(2009)]{villani}
C{\'e}dric Villani et~al.
\newblock \emph{Optimal transport: old and new}, volume 338.
\newblock Springer, 2009.

\bibitem[Wang et~al.(2009)Wang, Kulkarni, and Verd{\'u}]{kl-div}
Qing Wang, Sanjeev~R Kulkarni, and Sergio Verd{\'u}.
\newblock Divergence estimation for multidimensional densities via $ k $-nearest-neighbor distances.
\newblock \emph{IEEE Transactions on Information Theory}, 55\penalty0 (5):\penalty0 2392--2405, 2009.

\bibitem[Xiao et~al.(2020)Xiao, Kreis, Kautz, and Vahdat]{xiao2020vaebm}
Zhisheng Xiao, Karsten Kreis, Jan Kautz, and Arash Vahdat.
\newblock Vaebm: A symbiosis between variational autoencoders and energy-based models.
\newblock \emph{arXiv preprint arXiv:2010.00654}, 2020.

\bibitem[Xiao et~al.(2021)Xiao, Kreis, and Vahdat]{xiao2021tackling}
Zhisheng Xiao, Karsten Kreis, and Arash Vahdat.
\newblock Tackling the generative learning trilemma with denoising diffusion gans.
\newblock \emph{arXiv preprint arXiv:2112.07804}, 2021.

\bibitem[Xie et~al.(2019)Xie, Chen, Jiang, Zhao, and Zha]{primal-ot}
Yujia Xie, Minshuo Chen, Haoming Jiang, Tuo Zhao, and Hongyuan Zha.
\newblock On scalable and efficient computation of large scale optimal transport.
\newblock In \emph{International Conference on Machine Learning}, pages 6882--6892. PMLR, 2019.

\bibitem[Yang and Uhler(2019)]{scalable-uot}
KD~Yang and C~Uhler.
\newblock Scalable unbalanced optimal transport using generative adversarial networks.
\newblock In \emph{International Conference on Learning Representations}, 2019.

\bibitem[Zhang et~al.(2022)Zhang, Gu, Zhang, Bao, Chen, Wen, Wang, and Guo]{zhang2022styleswin}
Bowen Zhang, Shuyang Gu, Bo~Zhang, Jianmin Bao, Dong Chen, Fang Wen, Yong Wang, and Baining Guo.
\newblock Styleswin: Transformer-based gan for high-resolution image generation.
\newblock In \emph{Proceedings of the IEEE/CVF Conference on Computer Vision and Pattern Recognition}, pages 11304--11314, 2022.

\end{thebibliography}
\newpage
\appendix

\section{Proofs} \label{appen:proofs}
\paragraph{Notations and Assumptions}
Let $\mathcal{X}$ and $\mathcal{Y}$ be compact complete metric spaces which are subsets of $\mathbb{R}^d$, and $\mu$, $\nu$ be positive Radon measures of the mass 1.
Let $\Psi_1$ and $\Psi_2$ be a convex, differentiable, nonnegative function defined on $\mathbb{R}$. We assume that $\Psi_1(0)=\Psi_2(0)=1$, and $\Psi_1(x)=\Psi_2(x)=\infty$ for $x<0$.
The convex conjugate of $\Psi$ is denoted by $\Psi^*$. We assume that $\Psi^*_1$ and $\Psi^*_2$ are differentiable and non-decreasing on its domain.
Moreover, let $c$ be a quadratic cost, $c(x,y)=\tau \lVert x-y \rVert_2^2$, where $\tau$ is positive constant.

First, for completeness, we present the derivation of the dual and semi-dual formulations of the UOT problem.
\begin{theorem}[\cite{uot1, semi-dual1, semi-dual3}] \label{thm:uot-dual-proof} 
The dual formulation of Unbalanced OT (\ref{eq:uot}) is given by
\begin{equation} 
    C_{ub}(\mu, \nu) = \sup_{u(x)+v(y)\leq c(x,y)} \left[ \int_\mathcal{X} -\Psi^*_1(-u(x)) d\mu(x) + \int_\mathcal{Y} -\Psi^*_2(-v(y)) d\nu (y) \right],
\end{equation}
where $u\in \mathcal{C}(\mathcal{X})$, $v \in \mathcal{C}(\mathcal{Y})$ with $u(x)+v(y)\leq c(x,y)$.
Note that strong duality holds.

Also, the semi-dual formulation of Unbalanced OT (\ref{eq:uot}) is given by
\begin{equation}
    C_{ub}(\mu, \nu) = \sup_{v\in \mathcal{C}} \left[\int_\mathcal{X} -\Psi_1^*\left(-v^c(x))\right) d\mu(x) + \int_\mathcal{Y} -\Psi^*_2(-v(y)) d\nu (y) \right],
\end{equation}
where $v^{c}(x)$ denotes the $c$-transform of $v$, i.e., $v^c(x)=\underset{y\in \mathcal{Y}}{\inf}\left(c(x,y) - v(y)\right)$.
\end{theorem}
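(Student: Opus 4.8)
The plan is to derive both identities from the convex structure of the primal problem \eqref{eq:uot} by a Fenchel--Rockafellar / minimax argument, and then to pass from the dual to the semi-dual using the monotonicity of $\Psi_1^*$. \emph{Step 1 (Legendre representation of the \csiszar{} divergences).} For a convex, lower semi-continuous, non-negative entropy function $f$ with $f(1)<\infty$ and finite measures $\alpha,\beta$, one has
\[
D_f(\alpha \mid \beta) = \sup_{\phi\in\mathcal{C}}\left[\int \phi\, d\alpha - \int f^*(\phi)\, d\beta\right],
\]
where $\phi$ ranges over continuous functions (subject to $\phi\le f'(\infty)$ pointwise when the recession constant $f'(\infty)=\lim_{t\to\infty}f(t)/t$ is finite; for the KL and $\chi^2$ generators $f'(\infty)=+\infty$, so no restriction is needed). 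Writing the test functions for $D_{\Psi_1}(\pi_0\mid\mu)$ and $D_{\Psi_2}(\pi_1\mid\nu)$ as $-u$ and $-v$ and substituting into \eqref{eq:uot} turns it into the saddle problem
\[
C_{ub}(\mu,\nu)=\inf_{\pi\in\mathcal{M}_+(\mathcal{X}\times\mathcal{Y})}\ \sup_{u,v}\left[\int \big(c(x,y)-u(x)-v(y)\big)\,d\pi(x,y)+\int_\mathcal{X} -\Psi_1^*(-u)\,d\mu+\int_\mathcal{Y} -\Psi_2^*(-v)\,d\nu\right].
\]

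\emph{Step 2 (minimax interchange).} The bracketed functional is affine in $\pi$ and, because $-\Psi_i^*(-\,\cdot\,)$ is concave and the coupling term is affine, concave in $(u,v)$. I would interchange $\inf_\pi$ and $\sup_{u,v}$ either by a minimax theorem, after a coercivity estimate restricting $\pi$ to a weak-$\ast$ compact set of positive measures of bounded total mass (exploiting the growth of $\Psi_1,\Psi_2$ at infinity), or more directly by Fenchel--Rockafellar duality as in \cite{uot1, semi-dual1}: writing the primal as $\inf\{\int c\,d\pi+\mathcal{G}(\pi_0,\pi_1):\pi\in\mathcal{M}_+\}$ with $\mathcal{G}(\rho,\sigma)=D_{\Psi_1}(\rho\mid\mu)+D_{\Psi_2}(\sigma\mid\nu)$, the qualification condition holds because $\mathcal{G}$ is finite at $(\mu,\nu)$, which lies in the range of the marginal operator (take $\pi=\mu\otimes\nu$). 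This equality is precisely the asserted strong duality, and it is the step that carries the analytic weight.

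\emph{Step 3 (inner infimum and reduction to the semi-dual).} After the swap, the inner $\inf_{\pi\in\mathcal{M}_+}\int\big(c(x,y)-u(x)-v(y)\big)\,d\pi$ equals $0$ when $u(x)+v(y)\le c(x,y)$ for all $(x,y)$ (attained at $\pi=0$) and $-\infty$ otherwise (concentrate mass on the open set $\{c-u(x)-v(y)<0\}$), so only the two divergence-conjugate integrals survive under the constraint $u(x)+v(y)\le c(x,y)$, which is the asserted dual form. For the semi-dual, fix $v\in\mathcal{C}(\mathcal{Y})$ and optimize over $u$: feasibility means $u(x)\le\inf_{y\in\mathcal{Y}}\big(c(x,y)-v(y)\big)=v^c(x)$, and since $\Psi_1^*$ is non-decreasing (because $\Psi_1$ is a non-negative entropy function, cf.\ \cite{csiszar}) the map $u\mapsto-\Psi_1^*(-u)$ is non-decreasing, so the dual objective is maximized at the pointwise-largest admissible choice $u=v^c$. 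Compactness of $\mathcal{Y}$ and continuity of the quadratic cost $c$ ensure $v^c\in\mathcal{C}(\mathcal{X})$, so $v^c$ is an admissible competitor, and substituting $u=v^c$ yields the semi-dual formulation.

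The main obstacle I anticipate is the inf--sup interchange of Step 2: making it rigorous requires either a quantitative coercivity bound compactifying the admissible transport plans (using the behavior of $\Psi_i$ at infinity) or a careful verification of the constraint qualification in Fenchel--Rockafellar; a secondary technical point is the bookkeeping of the recession constants $\Psi_i'(\infty)$ in the \csiszar{} duality of Step 1, which nevertheless becomes vacuous for the KL and $\chi^2$ entropies used throughout the paper.
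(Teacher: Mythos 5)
Your proposal is correct and follows essentially the same Fenchel--Rockafellar route as the paper: the paper introduces slack variables $P=\pi_0$, $Q=\pi_1$ and forms a Lagrangian whose infimum over $P,Q$ produces exactly the Legendre representation of the \csiszar{} divergences that you invoke directly in Step~1, the $\inf_\pi$ step yields the constraint $u(x)+v(y)\le c(x,y)$ in both accounts, and both appeal to Fenchel--Rockafellar (the paper checking finiteness at $u\equiv v\equiv -1$, you checking finiteness of $\mathcal{G}$ at $(\mu,\nu)$ — equivalent qualification conditions from the two sides). One place where your write-up is actually cleaner than the paper's: for the dual-to-semi-dual passage, you argue that for fixed $v$ feasibility forces $u\le v^c$ pointwise and monotonicity of $u\mapsto -\Psi_1^*(-u)$ pushes the optimum to $u=v^c$, with $v^c\in\mathcal{C}(\mathcal{X})$ by compactness; the paper instead waves at a ``tightness condition'' $c(x,y)=u(x)+v(y)$ $\pi$-a.e., which is a property of an optimal pair rather than the right argument for eliminating $u$. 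Your explicit monotonicity argument is the correct and more rigorous justification and is worth keeping.
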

\begin{proof}
The primal problem Eq \ref{eq:uot} can be rewritten as
\begin{equation} \label{eq:uot-specific}
    \inf_{\pi \in \mathcal{M}_+  (\mathcal{X}\times \mathcal{Y})} {\int_{\mathcal{X}\times \mathcal{Y}} c(x,y) d\pi(x,y) + \int_{\mathcal{X}} { \Psi_1\left( \frac{\pi_0(x)}{d\mu(x)} \right) d\mu(x) } + \int_{\mathcal{Y}} { \Psi_2\left( \frac{\pi_1(y)}{d\nu(y)} \right) d\nu(y) } }.
\end{equation}
We introduce slack variables $P$ and $Q$ such that $P=\pi_0$ and $Q=\pi_1$.
By Lagrange multipliers where $u$ and $v$ associated to the constraints, the Lagrangian form of Eq \ref{eq:uot-specific} is
\begin{align} \label{eq:uot-lagrangian}
    \mathcal{L}(\pi, P, Q, u, v) &:=  \int_{\mathcal{X}\times\mathcal{Y}} c(x,y) d\pi(x,y)  \\
    &+ \int_{\mathcal{X}} \Psi_1\left( \frac{dP(x)}{d\mu(x)} \right) d\mu(x) 
    + \int_{\mathcal{Y}} \Psi_2 \left( \frac{dQ(y)}{d\nu(y)} \right) d\nu(y) \\
    &+\int_{\mathcal{X}} u(x) \left( dP(x) - \int_{\mathcal{Y}} d\pi(x,y) \right)  
    + \int_{\mathcal{Y}} v(y) \left( dQ(y) - \int_{\mathcal{X}} d\pi(x,y) \right).
\end{align}
The dual Lagrangian function is $g(u, v) = \inf_{\pi, P, Q} \mathcal{L}(\pi, P, Q, u, v)$. Then, $g(u, v)$ reduces into three distinct minimization problems for each variables;
\begin{align} \label{eq:uot-three-minimization}
    \begin{split}
        g(u,v) = &\inf_{P} \left[ \int_{X} u(x) dP(x) + \Psi_1\left( \frac{dP(x)}{d\mu(x)} \right) d\mu(x) \right] \\
        &+ \inf_{Q} \left[ \int_{Y} v(y) dQ(y) + \Psi_2\left( \frac{dQ(y)}{d\nu(y)} \right)  d\nu(y) \right] \\
        &+ \inf_{\pi} \left[ \int_{\mathcal{X}\times\mathcal{Y}} \left( c(x,y) -u(x) -v(y) \right) d\pi(x,y) \right].
    \end{split}
\end{align}
Note that the first term of Eq \ref{eq:uot-three-minimization} is
\begin{equation}
    - \sup_P \left[ \int_{\mathcal{X}} \left[-u(x) \frac{dP(x)}{d\mu(x)} - \Psi_1\left( \frac{dP(x)}{d\mu(x)}\right) \right] d\mu(x)  \right] = -\int_{\mathcal{X}} \Psi_1^*\left(-u(x)\right)d\mu(x).
\end{equation}
Thus, Eq \ref{eq:uot-three-minimization} can be rewritten as follows:
\begin{align} \label{eq:uot-almost-done}
    \begin{split}
    g(u, v) = &\inf_\pi \left[ \int_{\mathcal{X}\times\mathcal{Y}} \left( c(x,y) - u(x) -v(y) \right) d\pi(x,y) \right] \\
    &-\int_{\mathcal{X}} \Psi_1^*\left(-u(x)\right)d\mu(x) -\int_{\mathcal{Y}} \Psi_2^*\left(-v(x)\right)d\nu(x).
    \end{split}
\end{align}

Suppose that $c(x,y)-u(x)-v(y)<0$ for some point $(x,y)$. 
Then, by concentrating all mass of $\pi$ into the point, the value of the minimization problem of Eq \ref{eq:uot-almost-done} becomes $-\infty$.
Thus, to avoid the trivial solution, we should set $c(x,y) - u(x) -v(y) \geq 0$ almost everywhere.
Within the constraint, the solution of the optimization problem of Eq \ref{eq:uot-almost-done} is $c(x,y) = u(x)+v(y)$ $\pi$-almost everywhere. Finally, we obtain the dual formulation of UOT:
\begin{equation} \label{eq:uot-dual-proof}
\sup_{u(x)+v(y) \leq c(x,y)} \left[\int_{\mathcal{X}} -\Psi_1^*(-u(x)) d \mu(x) + \int_{\mathcal{Y}} -\Psi_2^* (-v(y)) d \nu(y) \right],
\end{equation}
where $c(x,y) = u(x) + v(y)$ $\pi$-almost everywhere (since $\Psi_1^*$ and $\Psi_2^*$ is non-decreasing).
In other words,
\begin{equation} \label{eq:indicator}
    \sup_{(u,v)\in \mathcal{C}(\mathcal{X})\times \mathcal{C}(\mathcal{Y})} \left[\int_{\mathcal{X}} -\Psi_1^*(-u(x)) d \mu(x) + \int_{\mathcal{Y}} -\Psi_2^* (-v(y)) d \nu(y) - \imath (u+v \leq c) \right],
\end{equation}
where $\imath$ is a convex indicator function.
Note that we assume $\Psi_1^*$ and $\Psi_2^*$ are convex, non-decreasing, and differentiable. Since $c\geq 0$, by letting $u \equiv -1$ and $v \equiv -1$, we can easily see that all three terms in Eq \ref{eq:indicator} are finite values. Thus, we can apply Fenchel-Rockafellar's theorem, which implies that the strong duality holds.
Finally, combining the constraint $u(x)+v(y)\leq c(x,y)$ with tightness condition, i.e. $c(x,y) = u(x) + v(y)$ $\pi$-almost everywhere, we obtain the following semi-dual form:
\begin{equation}
    \sup_{v\in \mathcal{C}} \left[\int_{\mathcal{X}} -\Psi_1^*\left(-\inf_{y\in \mathcal{Y}}\left[ c(x,y) - v(y) \right]\right) \mu(x) + \int_{\mathcal{Y}} -\Psi_2^* (-v(y)) \nu(y) \right].
\end{equation}
\end{proof}

Now, we state precisely and prove Theorem \ref{thm:uot-ot-relation}.
\begin{lemma}
For any $(u,v)\in \mathcal{C}(\mathcal{X})\times \mathcal{C}(\mathcal{Y})$, let 
\begin{equation} \label{eq:I(u,v)}
I(u,v) := = \int_{\mathcal{X}} -\Psi_1^* (-u(x)) d\mu(x) +\int_{\mathcal{Y}} -\Psi_2^* (-v(y)) d\nu(y).
\end{equation}
Then, under the assumptions in Appendix \ref{appen:proofs}, the functional derivative of $I(u,v)$ is
\begin{equation} \label{eq:derivative}
    \int_{\mathcal{X}} \delta u(x) {\Psi_1^*}'(-u(x))d\mu(x) + \int_{\mathcal{Y}} \delta v(y) {\Psi_2^*}' (-v(y)) d\nu(y),
\end{equation}
where where $(\delta u,\delta v)\in \mathcal{C}(\mathcal{X})\times \mathcal{C}(\mathcal{Y})$ denote the variations of $(u, v)$, respectively.
\end{lemma}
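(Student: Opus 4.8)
The plan is to treat $I(u,v)$ as a functional on $\mathcal{C}(\mathcal{X})\times\mathcal{C}(\mathcal{Y})$ and compute its first variation directly from the definition of the Gateaux derivative. Fix $(u,v)$ and variations $(\delta u,\delta v)$, and consider $I(u+t\,\delta u, v+t\,\delta v)$ as a function of the real parameter $t$. Since the two integrals are completely decoupled — one depends only on $u$ over $\mathcal{X}$, the other only on $v$ over $\mathcal{Y}$ — it suffices to differentiate each term separately and add. So the core computation reduces to differentiating $t\mapsto \int_{\mathcal{X}} -\Psi_1^*\bigl(-(u(x)+t\,\delta u(x))\bigr)\, d\mu(x)$ at $t=0$.

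The key steps, in order: (i) differentiate under the integral sign, justified because $\Psi_1^*$ is continuously differentiable (in fact $C^1$ on its domain by the hypothesis in Theorem~\ref{thm:uot-ot-relation} and the general assumptions), $\mathcal{X}$ is compact, $u,\delta u$ are continuous hence bounded, and ${\Psi_1^*}'$ is continuous hence bounded on the compact range of $-(u+t\,\delta u)$ for $t$ in a bounded interval — this gives a uniform integrable dominating bound and lets us invoke the dominated convergence theorem / differentiation-under-the-integral lemma; (ii) apply the chain rule pointwise: $\frac{d}{dt}\bigl[-\Psi_1^*(-(u+t\,\delta u))\bigr] = {\Psi_1^*}'(-(u+t\,\delta u))\cdot \delta u$, and evaluate at $t=0$ to get ${\Psi_1^*}'(-u(x))\,\delta u(x)$; (iii) repeat verbatim for the $\mathcal{Y}$-term to obtain ${\Psi_2^*}'(-v(y))\,\delta v(y)$; (iv) sum the two contributions, yielding exactly Eq.~\eqref{eq:derivative}. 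Finally one notes the resulting linear functional in $(\delta u,\delta v)$ is the functional derivative, and that it is well-defined since ${\Psi_i^*}'$ composed with a continuous function on a compact set is continuous and $\mu,\nu$ are finite measures.

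The main obstacle — really the only nontrivial point — is rigorously justifying the interchange of differentiation and integration in step (i); everything else is the pointwise chain rule and linearity. Here compactness of $\mathcal{X},\mathcal{Y}$ together with the $C^1$ assumption on $\Psi_1^*,\Psi_2^*$ and continuity of $u,v,\delta u,\delta v$ does all the work: the difference quotients $\bigl(\Psi_1^*(-(u+t\,\delta u)) - \Psi_1^*(-u)\bigr)/t$ are uniformly bounded in $x$ and $t$ (by the mean value theorem and boundedness of ${\Psi_1^*}'$ on a compact interval containing all relevant arguments), so the dominating function is simply a constant, which is $\mu$-integrable. This lemma then feeds into Theorem~\ref{thm:uot-ot-relation}: setting the functional derivative to zero at the optimum $v^\star$ (with $u = v^{\star c}$, so $\delta u$ is induced by $\delta v$ through the $c$-transform) produces the first-order optimality condition that identifies $v^\star$ as a solution of the reweighted OT semi-dual $\tilde J$, with the reweighting factors ${\Psi_1^*}'(-{v^\star}^c)$ and ${\Psi_2^*}'(-v^\star)$ appearing exactly as the densities $\tilde\mu,\tilde\nu$.
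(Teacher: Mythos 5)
Your proposal is correct and takes essentially the same route as the paper: compute the Gateaux derivative of $I$ along a one-parameter perturbation $(u+t\,\delta u, v+t\,\delta v)$, apply the chain rule pointwise, and identify the resulting linear functional. You are in fact more explicit than the paper about justifying the interchange of $d/dt$ and the integral (uniform bound from compactness and $C^1$-ness of $\Psi_i^*$, then dominated convergence), whereas the paper simply performs the differentiation and cites Evans; you also correctly observe that the admissibility constraint $u+\delta u + v + \delta v \le c$ appearing in the paper's write-up is irrelevant to the derivative computation itself and only matters when the lemma is later used for the optimality condition.
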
 \label{thm:lemma}
\begin{proof}
    For any $(f,g)\in \mathcal{C}(\mathcal{X})\times \mathcal{C}(\mathcal{Y})$, let
    $$ I(f,g) = \int_{\mathcal{X}} -\Psi_1^* (-f(x)) d\mu(x) +\int_{\mathcal{Y}} -\Psi_2^* (-g(y)) d\nu(y).$$
    Then, for arbitrarily given potentials $(u,v)\in \mathcal{C}(\mathcal{X})\times \mathcal{C}(\mathcal{Y})$, and perturbation $(\eta, \zeta)\in \mathcal{C}(\mathcal{X})\times \mathcal{C}(\mathcal{Y})$ which satisfies $u(x)+\eta(x)+v(y)+\zeta(y)\leq c(x,y)$,
    $$\mathcal{L}(\epsilon):=\frac{{d}}{{d}\epsilon} I(u+\epsilon \eta, v+\epsilon \zeta) =  \int_{\mathcal{X}} \eta {\Psi_1^*}'(-\left((u+\epsilon \eta)(x) \right))d\mu(x) + \int_{\mathcal{Y}} \zeta {\Psi_2^*}'(-\left((v+\epsilon \zeta)(y)  \right))d\nu(y).$$ 
    Note that $\mathcal{L}(\epsilon)$ is a continuous function.
    Thus, the functional derivative of $I(u,v)$, i.e. $\mathcal{L}(0)$, is given by calculus of variation (Chapter 8, Evans \cite{evans2022partial})
    \begin{equation}
        \int_{\mathcal{X}} \delta u(x) {\Psi_1^*}'(-u(x))d\mu(x) + \int_{\mathcal{Y}} \delta v(y) {\Psi_2^*}' (-v(y)) d\nu(y),
    \end{equation}
    where $(\delta u,\delta v)\in \mathcal{C}(\mathcal{X})\times \mathcal{C}(\mathcal{Y})$ denote the variations of $(u, v)$, respectively. 
\end{proof}

\begin{theorem} \label{thm:uot-ot-relation-proof}
Suppose that $\mu$ and $\nu$ are probability densities defined on $\mathcal{X}$ and $\mathcal{Y}$. 
Given the assumptions in Appendix A, suppose that $\mu, \nu$ are absolutely continuous with respect to Lebesgue measure and $\Psi^{*}$ is continuously differentiable. Assuming that the optimal potential $v^{\star} = \inf_{v\in \mathcal{C}}J(v)$ exists, $v^\star$ is a solution of the following objective
\begin{equation} 
    \Tilde{J}(v) =  \int_{\mathcal{X}}{-v^c(x)d\Tilde{\mu}(x)} +\int_{\mathcal{Y}}{-v(y)d\Tilde{\nu}(y)},
\end{equation}
where 
$\Tilde{\mu}(x)={\Psi_1^*}'(-{v^\star}^c(x))\mu(x)$ and 
$\Tilde{\nu}(y)={\Psi_2^*}'(-{v^\star}(y))\nu(y)$.
Note that the assumptions guarantee the existence of optimal transport map $T^{\star}$ between $\Tilde{\mu}$ and $\Tilde{\nu}$. Furthermore, $T^{\star}$ satisfies
\begin{equation}
    T^\star(x) \in {\rm{arginf}}_{y\in \mathcal{Y}} \left[ c(x,y) - v^{\star}(y) \right],
\end{equation}
$\mu$-a.s..In particular, $D_{\Psi_1} (\Tilde{\mu}|\mu) + D_{\Psi_2} (\Tilde{\nu}|\nu) \leq \tau \mathcal{W}_2^2(\mu,\nu)$ where $\mathcal{W}_2(\mu, \nu)$ is a Wasserstein-2 distance between $\mu$ and $\nu$.
\end{theorem}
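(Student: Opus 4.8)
The plan is to derive the first-order optimality condition that $v^\star$ must satisfy for $J$ (Eq \ref{eq:uot-potential}), recognize it as the optimality condition of the \emph{balanced} semi-dual functional $\tilde J$, and then combine strong duality with a pointwise Fenchel--Young identity to obtain the divergence bound.

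\emph{First-order optimality of $J$.} Since $v^\star$ attains $\inf_{v\in\mathcal{C}}J(v)$ and the semi-dual is unconstrained, I would compute $\frac{d}{d\epsilon}J(v^\star+\epsilon\zeta)\big|_{\epsilon=0}$ for an arbitrary $\zeta\in\mathcal{C}(\mathcal{Y})$. The only nontrivial term is $v^c$: by Danskin's (envelope) theorem, $\frac{d}{d\epsilon}(v^\star+\epsilon\zeta)^c(x)\big|_{\epsilon=0}=-\zeta(T_{v^\star}(x))$, with $T_{v^\star}(x)$ the minimizer in Eq \ref{eq:def_T}; this is legitimate because the $c$-concavity of $v^\star$ together with absolute continuity of $\mu$ and the quadratic cost makes the $c$-superdifferential of ${v^\star}^c$ single-valued $\mu$-a.e., so $T_{v^\star}$ is well defined $\mu$-a.e. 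Using the continuity of ${\Psi_i^*}'$ and the chain rule, stationarity gives, for all $\zeta$,
\begin{equation}
\int_{\mathcal{X}}{\Psi_1^*}'(-{v^\star}^c(x))\,\zeta(T_{v^\star}(x))\,d\mu(x)=\int_{\mathcal{Y}}{\Psi_2^*}'(-v^\star(y))\,\zeta(y)\,d\nu(y).
\end{equation}
With $\tilde\mu={\Psi_1^*}'(-{v^\star}^c)\mu$ and $\tilde\nu={\Psi_2^*}'(-v^\star)\nu$ this is exactly $\int\zeta\,d\big((T_{v^\star})_\#\tilde\mu\big)=\int\zeta\,d\tilde\nu$ for all $\zeta$, hence $(T_{v^\star})_\#\tilde\mu=\tilde\nu$ (and, with $\zeta\equiv1$, $\tilde\mu$ and $\tilde\nu$ have equal mass).

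\emph{$v^\star$ solves $\tilde J$, and the identification of $T^\star$.} Running the same envelope computation on $\tilde J(v)=\int -v^c\,d\tilde\mu+\int -v\,d\tilde\nu$ yields $\frac{d}{d\epsilon}\tilde J(v^\star+\epsilon\zeta)\big|_{\epsilon=0}=\int\zeta(T_{v^\star})\,d\tilde\mu-\int\zeta\,d\tilde\nu$, which vanishes by the previous step. Because $v\mapsto v^c$ is concave (an infimum of affine functionals of $v$), $\tilde J$ is convex, so this stationary point is a global minimizer; i.e.\ $v^\star$ is an optimal Kantorovich potential for the OT problem between $\tilde\mu$ and $\tilde\nu$ with cost $c$. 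Since $\tilde\mu\ll\mathrm{Leb}$ (it is ${\Psi_1^*}'(-{v^\star}^c)$ times the Lebesgue density of $\mu$) and $\mathcal{X},\mathcal{Y}$ are compact, Brenier's theorem gives a unique OT map $T^\star$ between $\tilde\mu$ and $\tilde\nu$; optimality of the potential and plan forces $v^\star(y)+{v^\star}^c(x)=c(x,y)$ $\pi^\star$-a.e., so $T^\star(x)\in\mathrm{arginf}_{y}\,[c(x,y)-v^\star(y)]$ for $\tilde\mu$-a.e.\ $x$, and since this argmin coincides with the a.e.-unique $T_{v^\star}$ the same holds $\mu$-a.e.

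\emph{The divergence bound.} I would apply the Fenchel--Young equality: for $t={\Psi_1^*}'(s)\in\partial\Psi_1^*(s)$ one has $\Psi_1({\Psi_1^*}'(s))+\Psi_1^*(s)=s\,{\Psi_1^*}'(s)$. Taking $s=-{v^\star}^c(x)$, integrating against $\mu$, and using $\tfrac{d\tilde\mu}{d\mu}={\Psi_1^*}'(-{v^\star}^c)$ gives $D_{\Psi_1}(\tilde\mu|\mu)+\int\Psi_1^*(-{v^\star}^c)\,d\mu=\int -{v^\star}^c\,d\tilde\mu$, and the analogous identity holds on the $\nu$-side. Adding the two,
\begin{equation}
D_{\Psi_1}(\tilde\mu|\mu)+D_{\Psi_2}(\tilde\nu|\nu)=\tilde J(v^\star)-J(v^\star).
\end{equation}
Strong duality (Theorem \ref{thm:uot-dual-proof}) gives $J(v^\star)=-C_{ub}(\mu,\nu)$, while the previous step gives $\tilde J(v^\star)\le\tilde J(0)=\int -0^c\,d\tilde\mu=0$ since $0^c\equiv0$ for the quadratic cost on $\mathcal{X}=\mathcal{Y}$; hence $D_{\Psi_1}(\tilde\mu|\mu)+D_{\Psi_2}(\tilde\nu|\nu)\le C_{ub}(\mu,\nu)$. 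Finally, plugging the optimal $\mathcal{W}_2$ coupling $\pi^{OT}\in\Pi(\mu,\nu)$ into the primal (Eq \ref{eq:uot}) and using $D_{\Psi_1}(\mu|\mu)=0=D_{\Psi_2}(\nu|\nu)$ (the entropy functions are normalized so $\Psi_i(1)=0$) yields $C_{ub}(\mu,\nu)\le\int c\,d\pi^{OT}=\tau\mathcal{W}_2^2(\mu,\nu)$, which completes the proof. The delicate point is the envelope-theorem step: one must first argue that $v^\star$ may be taken $c$-concave and then invoke a.e.\ single-valuedness of its $c$-superdifferential against the absolutely continuous $\mu$ to guarantee that $T_{v^\star}$ is well defined $\mu$-a.e.\ and that $\epsilon\mapsto J(v^\star+\epsilon\zeta)$ is genuinely (two-sidedly) differentiable at $0$; once this OT-regularity input is in place, the remaining steps are bookkeeping with Fenchel--Young and strong duality.
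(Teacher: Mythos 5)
Your proof is correct but reaches the conclusion by a different route than the paper's in both halves. For the first half (that $v^\star$ solves $\tilde J$), the paper computes the functional derivative of the \emph{dual} $I(u,v)$ under the constraint $u+v\le c$, then argues that the linearized problem—identified as the linearized dual of OT between $\tilde\mu,\tilde\nu$—is nonpositive at $(u^\star,v^\star)$, using the tightness $c=u^\star+v^\star$ $\pi$-a.e.; you instead differentiate the \emph{semi-dual} $J$ directly via Danskin's envelope theorem, extracting the concrete stationarity condition $(T_{v^\star})_\#\tilde\mu=\tilde\nu$, and then promote stationarity to global optimality using convexity of $\tilde J$ (since $v\mapsto v^c$ is concave). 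Your version makes the marginal-matching structure explicit and avoids the constrained-variations bookkeeping, at the cost of the $c$-concavity/a.e.-single-valuedness regularity input you correctly flag as the delicate point. For the divergence bound, the paper argues purely on the primal side: the optimal UOT plan $\pi^\star$ has marginals $\tilde\mu,\tilde\nu$ (a fact the paper uses but does not derive explicitly), so $C_{ub}(\mu,\nu)=\tau\mathcal{W}_2^2(\tilde\mu,\tilde\nu)+D_{\Psi_1}(\tilde\mu|\mu)+D_{\Psi_2}(\tilde\nu|\nu)$, which is sandwiched by $C_{ub}(\mu,\nu)\le\tau\mathcal{W}_2^2(\mu,\nu)$. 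You instead derive the identity $D_{\Psi_1}(\tilde\mu|\mu)+D_{\Psi_2}(\tilde\nu|\nu)=\tilde J(v^\star)-J(v^\star)$ from pointwise Fenchel--Young equality, then bound $J(v^\star)=-C_{ub}(\mu,\nu)\ge-\tau\mathcal{W}_2^2(\mu,\nu)$ by strong duality and $\tilde J(v^\star)\le\tilde J(0)=0$ using $0^c\equiv 0$ for the quadratic cost with $\mathcal{X}=\mathcal{Y}$. This dual-side argument is more algebraic and self-contained (it never needs to identify the marginals of the optimal UOT plan), though it does require the first half of the theorem as input, whereas the paper's bound is logically independent of it.
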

\begin{proof}
    By Lemma \ref{thm:lemma}, derivative of Eq \ref{eq:I(u,v)} is 
    \begin{equation} 
        \int_{\mathcal{X}} \delta u(x) {\Psi_1^*}'(-u(x))d\mu(x) + \int_{\mathcal{Y}} \delta v(y) {\Psi_2^*}' (-v(y)) d\nu(y),
    \end{equation}
    Note that such linearization must satisfy the inequality constraint $u(x)+\delta u(x) + v(y) +\delta v(y) \leq c(x,y) $ to give admissible variations.
    Then, the linearized dual form can be re-written as follows:
    \begin{equation} \label{eq:uot-to-ot-proof}
        \sup_{(u,v)\in \mathcal{C(\mathcal{X})}\times \mathcal{C(\mathcal{Y})}} \int_X \delta u(x) d\Tilde{\mu}(x) + \int_{\mathcal{Y}} \delta v(y) d\Tilde{\nu}(y),
    \end{equation}
    where $\delta u(x) + \delta v(y) \leq c(x,y) -u(x) -v(y)$, $\Tilde{\mu}(x) ={\Psi_1^*}'(-u(x))\mu(x)$, and $\Tilde{\nu}(y) ={\Psi_2^*}'(-v(y))\nu(y)$.
    Thus, whenever $(u^\star,v^\star)$ is optimal, the linearized dual problem is non-positive due to the inequality constraint and tightness (i.e. $c(x,y)-u^\star (x)-v^\star (y)=0$ $\pi$-almost everywhere), which implies that Eq \ref{eq:uot-to-ot-proof} achieves its optimum at $(u^\star,v^\star)$.
    Note that Eq \ref{eq:uot-to-ot-proof} is a linearized dual problem of the following dual problem:
    \begin{equation}
        \sup_{u(x)+v(y)\leq c(x,y)} \left[ \int_{\mathcal{X}} u(x) d\Tilde{\mu}(x) + \int_{\mathcal{Y}} v(y) d\Tilde{\nu}(y) \right].
    \end{equation}
    Hence, the UOT problem can be reduced into standard OT formulation with marginals $\Tilde{\mu}$ and $\Tilde{\nu}$.
    Moreover, the continuity of ${\Psi^{*}}'$ gives that $\Tilde{\mu}, \Tilde{\nu}$ are absolutely continuous with respect to Lebesgue measure and have a finite second moment (Note that $\mathcal{X}, \mathcal{Y}$ are compact. Hence, ${\Psi^{*}}'$ is bounded on $\mathcal{X}, \mathcal{Y}$, and $\mathcal{X}, \mathcal{Y}$ are bounded). By Theorem 2.12 in \citet{villani2021topics}, there exists a unique measurable OT map $T^\star$ which solves Monge problem between $\Tilde{\mu}$ and $\Tilde{\nu}$. Furthermore, as shown in Remark 5.13 in \citet{villani}, $T^\star(x) \in {\rm{arginf}}_{y\in \mathcal{Y}}\left[ c(x,y) - v(y) \right]$.
    
    Finally, if we constrain $\pi$ into $\Pi(\mu,\nu)$, then the solution of Eq \ref{eq:uot} is $C_{ub}=\tau\mathcal{W}_2^2 (\mu, \nu)$. Thus, for the optimal $\pi \in \mathcal{M}_+$ where $\pi_0=\Tilde{\mu}$ and $\pi_1=\Tilde{\nu}$, it is trivial that
    \begin{equation}
        \tau \mathcal{W}_2^2 (\mu,\nu) \geq \tau \mathcal{W}_2^2(\Tilde{\mu}, \Tilde{\nu}) + D_{\Psi_1}(\Tilde{\mu}| \mu) + D_{\Psi_2}(\Tilde{\nu}| \nu),
    \end{equation}
    which proves the last statement of the theorem.
\end{proof}

\begin{theorem}[\cite{semi-dual3}] \label{thm:stability-proof}
$J$ is convex. Suppose $v$ is a $\lambda$-strongly convex, $v^c$ and $v$ are uniformly bounded on the support of $\mu$ and $\nu$, respectively, and $\Psi_1^*,\Psi_2^*$ are strongly convex on every compact set. Then, it holds
\begin{equation}
    J(v)-J(v^\star) \geq \frac{1}{2\lambda}\mathbb{E}_{\Tilde{\mu}}\left[\lVert\nabla\left( v^c - {v^\star}^c \right)\rVert_2^2\right]+C_1 \mathbb{E}_{\mu}\left[(v^c-{v^\star}^c)^2\right]+C_2 \mathbb{E}_{\nu} \left[ (v-v^\star)^2 \right],
\end{equation}
for some positive constant $C_1$ and $C_2$.
Furthermore, $\Tilde{J}(v)-\Tilde{J}(v^\star) \geq \frac{1}{2\lambda}\mathbb{E}_{\Tilde{\mu}}\left[\lVert\nabla\left( v^c - {v^\star}^c \right)\rVert_2^2\right]$.
\end{theorem}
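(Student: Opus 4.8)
The plan is to prove the ``furthermore'' statement (the estimate for $\tilde J$) first, since it is the analytic workhorse, and then bootstrap it to the estimate for $J$ by a second‑order (strong‑convexity) expansion of the outer functions $\Psi_1^*,\Psi_2^*$. Throughout I write $J(v)=\int_{\mathcal X}\Psi_1^*(-v^c)\,d\mu+\int_{\mathcal Y}\Psi_2^*(-v)\,d\nu$ and $\tilde J(v)=-\int_{\mathcal X}v^c\,d\tilde\mu-\int_{\mathcal Y}v\,d\tilde\nu$, and I use Theorem \ref{thm:uot-ot-relation} freely: $v^\star$ minimizes $\tilde J$, the map $T_{v^\star}$ with $T_{v^\star}(x)\in\arg\min_y\big(c(x,y)-v^\star(y)\big)$ is the optimal transport map from $\tilde\mu$ to $\tilde\nu$ (so $(T_{v^\star})_{\#}\tilde\mu=\tilde\nu$), and $\tilde\mu={\Psi_1^*}'(-{v^\star}^c)\mu$, $\tilde\nu={\Psi_2^*}'(-v^\star)\nu$. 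Convexity of $J$ (and of $\tilde J$) is immediate: $v\mapsto v^c(x)=\inf_y\big(c(x,y)-v(y)\big)$ is concave as an infimum of affine functionals of $v$, hence $-v^c$ is convex; composing with the convex nondecreasing $\Psi_1^*$ and adding the convex term $\Psi_2^*(-v)$ gives convexity of $J$, and $\tilde J$ is convex by the same reasoning.

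For the $\tilde J$ estimate, set $g_x(y):=c(x,y)-v(y)$ and $T_v(x)\in\arg\min_y g_x(y)$, so $v^c(x)=g_x(T_v(x))$. Rewriting $\int v\,d\tilde\nu=\int v\big(T_{v^\star}(x)\big)\,d\tilde\mu(x)$ using $(T_{v^\star})_{\#}\tilde\mu=\tilde\nu$, and using tightness ${v^\star}^c(x)+v^\star\big(T_{v^\star}(x)\big)=c\big(x,T_{v^\star}(x)\big)$, a direct computation gives
\[
\tilde J(v)-\tilde J(v^\star)=\int_{\mathcal X}\Big(g_x\big(T_{v^\star}(x)\big)-g_x\big(T_v(x)\big)\Big)\,d\tilde\mu(x)\ \ge\ 0,
\]
the nonnegativity being pointwise since $T_v(x)$ minimizes $g_x$. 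Next invoke the envelope identity $\nabla v^c(x)=\nabla_x c\big(x,T_v(x)\big)=2\tau\big(x-T_v(x)\big)$ (and the analogue for $v^\star$), which yields $T_v(x)-T_{v^\star}(x)=-\tfrac1{2\tau}\nabla\big(v^c-{v^\star}^c\big)(x)$, i.e. $\|\nabla(v^c-{v^\star}^c)(x)\|_2^2=4\tau^2\|T_v(x)-T_{v^\star}(x)\|_2^2$. Under the standing hypotheses — in particular the $\lambda$‑strong convexity of $v$ together with the boundedness/regularity assumptions — the map $y\mapsto g_x(y)$ is strongly convex, with modulus $m$ controlled by $\lambda$ and the cost curvature $\tau$; quadratic growth around its minimizer $T_v(x)$ then gives $g_x\big(T_{v^\star}(x)\big)-g_x\big(T_v(x)\big)\ge\tfrac m2\|T_{v^\star}(x)-T_v(x)\|_2^2$, and substituting the gradient identity and integrating against $\tilde\mu$ produces $\tilde J(v)-\tilde J(v^\star)\ge\tfrac1{2\lambda}\,\mathbb E_{\tilde\mu}\big[\|\nabla(v^c-{v^\star}^c)\|_2^2\big]$ once the constants are tracked (this is the content imported from \citet{semi-dual3}).

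To bootstrap to $J$, use that $v^c,{v^\star}^c$ are uniformly bounded on $\operatorname{supp}\mu$ and $v,v^\star$ on $\operatorname{supp}\nu$, so that $-v^c,-{v^\star}^c$ and $-v,-v^\star$ take values in fixed compact sets on which $\Psi_1^*,\Psi_2^*$ are strongly convex. Applying the strong‑convexity lower bound pointwise with base point $-{v^\star}^c(x)$ (resp. $-v^\star(y)$) and integrating gives
\[
\int_{\mathcal X}\Big(\Psi_1^*(-v^c)-\Psi_1^*(-{v^\star}^c)\Big)d\mu\ \ge\ -\!\int_{\mathcal X}(v^c-{v^\star}^c)\,d\tilde\mu+C_1\,\mathbb E_{\mu}\big[(v^c-{v^\star}^c)^2\big],
\]
using $\tilde\mu={\Psi_1^*}'(-{v^\star}^c)\mu$, and symmetrically for $\Psi_2^*$ with $\tilde\nu$ and a constant $C_2>0$. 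Summing the two, the linear remainders combine to exactly $-\int(v^c-{v^\star}^c)\,d\tilde\mu-\int(v-v^\star)\,d\tilde\nu=\tilde J(v)-\tilde J(v^\star)$, so
\[
J(v)-J(v^\star)\ \ge\ \big(\tilde J(v)-\tilde J(v^\star)\big)+C_1\,\mathbb E_{\mu}\big[(v^c-{v^\star}^c)^2\big]+C_2\,\mathbb E_{\nu}\big[(v-v^\star)^2\big],
\]
and combining with the $\tilde J$ estimate from the previous paragraph yields the claim.

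The routine parts are the convexity argument and the $\Psi^*$‑expansion (which only need compactness and the stated boundedness). The main obstacle is the middle step: converting ``$v$ is $\lambda$‑strongly convex'' into the precise quadratic growth $g_x\big(T_{v^\star}(x)\big)-g_x\big(T_v(x)\big)\ge\tfrac m2\|T_{v^\star}(x)-T_v(x)\|_2^2$ with the constant that, after the envelope identity and the $\tau$‑bookkeeping, lands on $\tfrac1{2\lambda}$ — this intertwines the curvature of the quadratic cost, the Legendre‑type relation between the regularity of $v$ and that of its $c$‑transform, and the a.e. differentiability of $v^c$ needed to make the first‑variation/envelope computations legitimate. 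That is the technical heart of the statement and is precisely what is invoked from \citet{semi-dual3}.
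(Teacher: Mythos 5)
The paper does not actually prove this theorem: its ``proof'' is the single line ``See \cite{semi-dual3}.'' So any substantive argument you give is by construction a different route, and yours is a useful one. Your decomposition is the right way to see where each of the three terms comes from: convexity of $J$ via concavity of $v\mapsto v^c$ composed with the nondecreasing convex $\Psi_1^*$ is correct; the identity $\Tilde{J}(v)-\Tilde{J}(v^\star)=\int\bigl(g_x(T_{v^\star}(x))-g_x(T_v(x))\bigr)d\Tilde{\mu}$ via $(T_{v^\star})_\#\Tilde{\mu}=\Tilde{\nu}$ and tightness is a clean, correct computation; and the bootstrap from $\Tilde{J}$ to $J$ by pointwise strong convexity of $\Psi_i^*$ on the compact range of the potentials, using ${\Psi_1^*}'(-{v^\star}^c)\mu=\Tilde{\mu}$ so that the linear remainders reassemble into exactly $\Tilde{J}(v)-\Tilde{J}(v^\star)$, is precisely the mechanism that explains why the gradient term is weighted by $\Tilde{\mu}$ while the quadratic terms are weighted by $\mu$ and $\nu$. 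This is more than the paper offers.

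The one genuine soft spot is the middle step, and it is a directional issue, not just a constant-tracking one. You assert that $\lambda$-strong convexity of $v$ makes $g_x(y)=\tau\lVert x-y\rVert_2^2-v(y)$ strongly convex in $y$. It does the opposite: $\nabla^2 g_x = 2\tau I-\nabla^2 v$, so $\nabla^2 v\succeq\lambda I$ \emph{subtracts} curvature from $g_x$ and only yields an upper bound $\nabla^2 g_x\preceq(2\tau-\lambda)I$, i.e.\ smoothness of $g_x$, which gives $g_x(T_{v^\star}(x))-g_x(T_v(x))\leq\frac{2\tau-\lambda}{2}\lVert T_{v^\star}(x)-T_v(x)\rVert^2$ rather than the quadratic growth you need. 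The lower bound with constant $\frac{1}{2\lambda}$ comes instead from the Legendre-type duality between strong convexity of a potential and smoothness of its $c$-transform (equivalently, one needs semi-concavity of $v$ relative to the cost, i.e.\ an upper bound on $\nabla^2 v$, on the side where quadratic growth of $g_x$ is invoked). You correctly identify this as the technical heart and defer it to \cite{semi-dual3} --- which is defensible given that the paper defers the entire theorem --- but as written your sketch would not close without repairing that implication.
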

\begin{proof}
    See \cite{semi-dual3}.
\end{proof}

\section{Implementation Details} \label{appen:B}
\subsection{Data} \label{appen:data}
Unless otherwise stated, \textbf{the source distribution $\mu$ is a standard Gaussian distribution} $\mathcal{N}(\mathbf{0}, \mathbf{I})$ with the same dimension as the target distribution $\nu$.
\paragraph{Toy Data} 
We generated 4000 samples for each Toy dataset: Toy dataset for Outlier Robustness (Sec \ref{sec:exp_Robust}) and Toy dataset for Target Distribution Matching (Sec \ref{sec:exp_ImageGeneration}). In Fig \ref{fig:robust-toy} of Sec \ref{sec:exp_Robust}, the target dataset consists of 99\% of samples from $\mathcal{N}(1,0.5^2)$ and 1\% of samples from $\mathcal{N}(-1,0.5^2)$. 
In Tab \ref{tab:compare-otms} and Fig \ref{fig:OT-plan} of Sec \ref{sec:exp_ImageGeneration}, the source data is composed of 50\% of samples from $\mathcal{N}(-1,0.5^2)$ and 50\% of samples from $\mathcal{N}(1,0.5)$. The target data is sampled from the mixture of two Gaussians: $\mathcal{N}(-1,0.5)$ with a probability of $1/3$ and $\mathcal{N}(2,0.5)$ with a probability of $2/3$.

\paragraph{CIFAR-10} 
We utilized all 50,000 samples. 
For each image $x$, we applied random horizontal flip augmentation of probability 0.5 and transformed it by $2x-1$ to scale the values within the $[-1,1]$ range. 
In the outlier experiment (Fig \ref{fig:robust-image} of Sec \ref{sec:exp_Robust}), we added additional 500 MNIST samples to the clean CIFAR-10 dataset. 
When adding additional MNIST samples, each sample was resized to $3\times 32\times 32$ by duplicating its channel to create a 3-channeled image. Then, the samples were transformed by $2x-1$. For the image generation task in Sec \ref{sec:exp_ImageGeneration}, we followed the source distribution embedding of \citet{otm} for the \textit{small} model, i.e., the Gaussian distribution $\mu$ with the size of $3\times 8\times 8$ is bicubically upsampled to $3\times 32\times 32$.

\paragraph{CelebA-HQ} 
We used all 120,000 samples. For each image $x$, we resized it to $256\times 256$ and applied random horizontal flip augmentation with a probability of 0.5. Then, we linearly transformed each image by $2x-1$ to scale the values within $[-1,1]$.

\subsection{Implementation details} \label{appen:implementation}
\paragraph{Toy data}
For all the Toy dataset experiments, we used the same generator and discriminator architectures. The dimension of the auxiliary variable $z$ is set to one.
For a generator, we passed $z$ through two fully connected (FC) layers with a hidden dimension of 128, resulting in 128-dimensional embedding. 
We also embedded data $x$ into the 128-dimensional vector by passing it through three-layered ResidualBlock \cite{song2019generative}.
Then, we summed up the two vectors and fed it to the final output module. The output module consists of two FC layers.
For the discriminator, we used three layers of ResidualBlock and two FC layers (for the output module). The hidden dimension is 128. Note that the SiLU activation function is used. 
We used a batch size of 256, a learning rate of $10^{-4}$, and 2000 epochs.
Like WGAN \cite{wgan}, we used the number of iterations of the discriminator per generator iteration as five.
For OTM experiments, since OTM does not converge without regularization, we set $R_1$ regularization to $\lambda=0.01$.

\paragraph{CIFAR-10}
For the \emph{small} model setting of UOTM, we employed the architecture of \citet{robust-ot}. Note that this is the same model architecture as in \citet{otm}. We set a batch size of 128, 200 epochs, a learning rate of $2\times 10^{-4}$ and $10^{-4}$ for the generator and discriminator, respectively. Adam optimizer with $\beta_1=0.5$, $\beta_2=0.9$ is employed. Moreover, we use $R_1$ regularization of $\lambda=0.2$. For the \emph{large} model setting of UOTM, we followed the implementation of \citet{rgm} unless stated. We trained for 600 epochs with $\lambda=0.2$ for every iteration.
Moreover, in OTM (Large) experiment, we trained the model for 1000 epochs. The other hyperparameters are the same as UOTM.
Tt
For the ablation studies in Sec \ref{sec:exp_ablation}, we reported the best FID score until 600 epoch for UOTM and 1000 epoch for OTM because the optimal epoch differs for different hyperparameters. Additionally, for the ablation study on $\tau$ in the cost function (Fig \ref{fig:abl_tau}), we reported the best FID score among three $R_1$ regularizer parameters: $\lambda=0.2, 0.02, 1$. To ensure the reliability of our experiments, we conducted three experiments with different random seeds and presented the mean and standard deviation in Tab \ref{tab:compare-cifar10} (UOTM (Large)).

\paragraph{CelebA-HQ $(256\times 256)$}
We set $\tau=10^{-5}$ and used $R_1$ regularizer with $\lambda=5$ for every iteration. The model is trained for 450 epochs with an exponential moving average of 0.999. The other network architecture and optimizer settings are the same as \citet{xiao2021tackling} and \citet{rgm}.

\paragraph{Evaluation Metric}
We used the implementation of \citet{kl-div} for the KL divergence in Tab \ref{tab:compare-otms} of Sec \ref{sec:exp_ImageGeneration}. We set $k=2$ (See \citet{kl-div}). 
For the evaluation of image datasets, we used 50,000 generated samples to measure IS and FID scores.

\paragraph{Ablation Study on \csiszar{} Divergence}
To ensure self-containedness, we include the definition of Softplus function employed in Table \ref{tab:general_psi}.
 \begin{equation} \label{eq:softplus}
    \operatorname{Softplus}(x) = \log ( 1+ \exp (x) )
\end{equation}

\section{Additional Discussion on UOTM}
\subsection{Stability of UOTM}
\begin{figure}[t]
    \captionsetup[subfigure]{aboveskip=-1pt,belowskip=-1pt}
    \begin{center}
        \begin{subfigure}[b]{0.32\textwidth}
            \includegraphics[width=\textwidth]{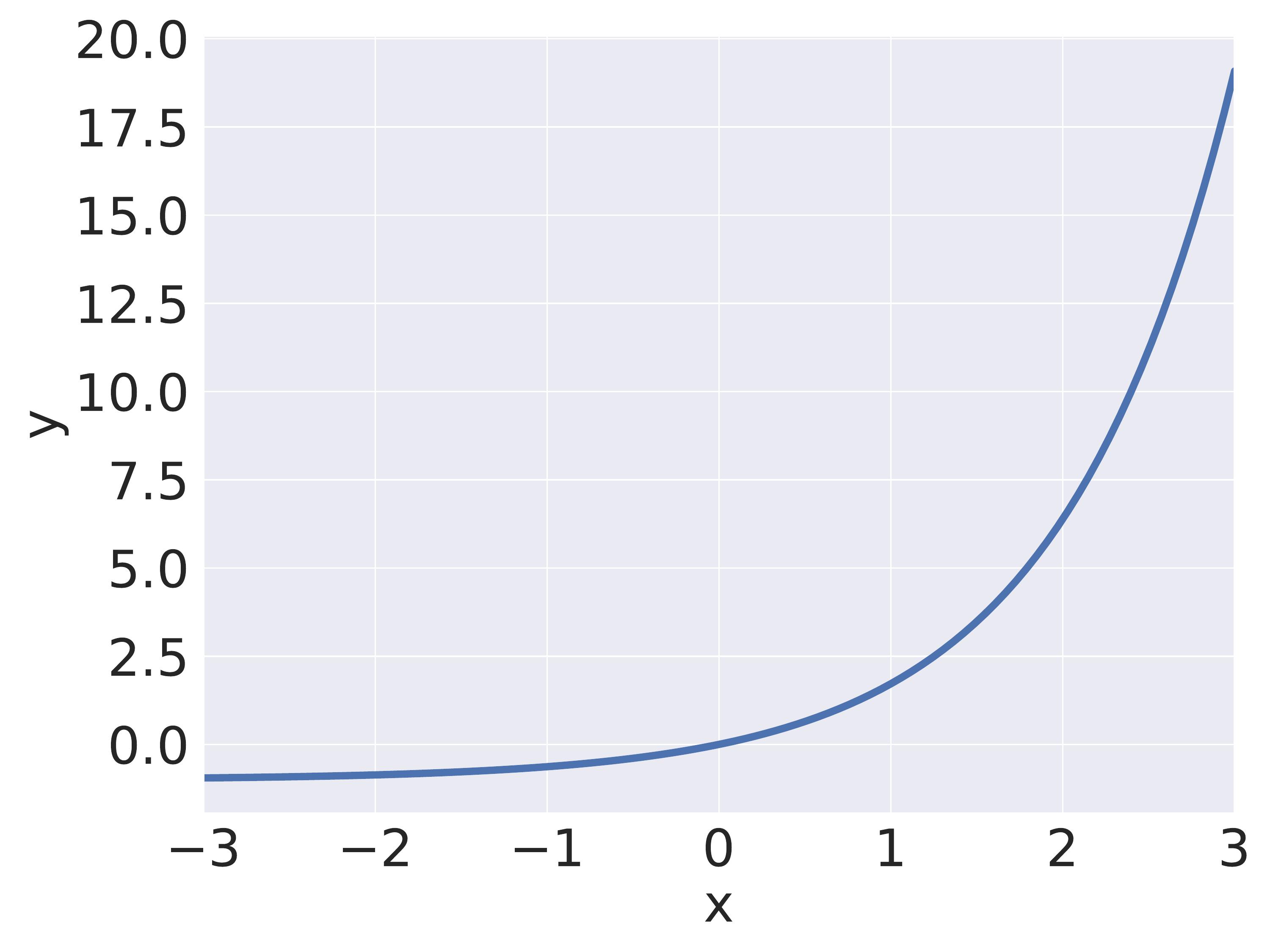}
            \caption{KL}
        \end{subfigure}
        \begin{subfigure}[b]{0.32\textwidth}
            \includegraphics[width=\textwidth]{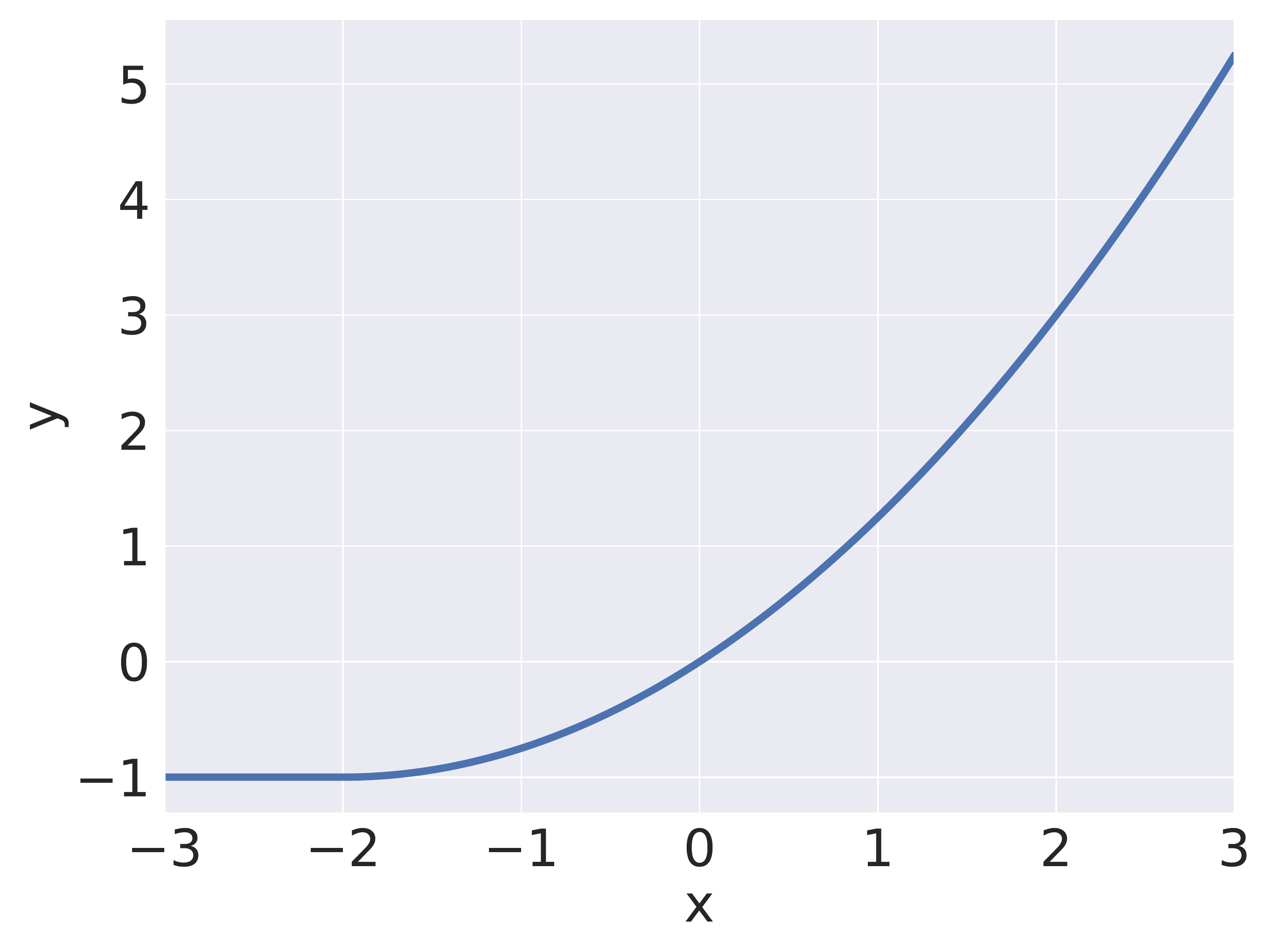}
            \caption{$\chi^2$}
        \end{subfigure}
        \begin{subfigure}[b]{0.32\textwidth}
            \includegraphics[width=\textwidth]{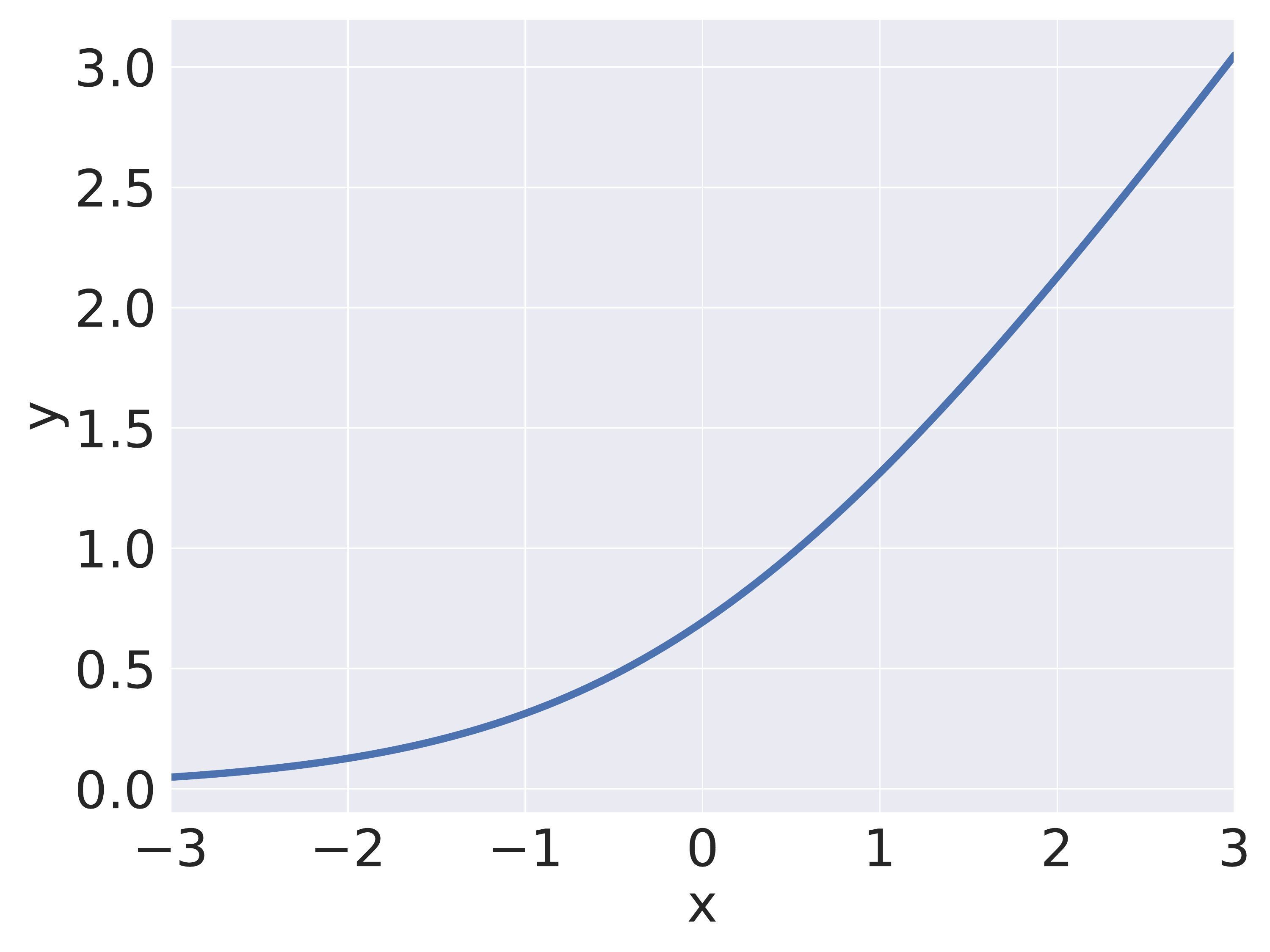}
            \caption{Softplus}
        \end{subfigure}
    \end{center}
    \vspace{-5pt}
    \caption{Various types of $\Psi^*$.} 
    \label{fig:psi}
\end{figure}
In this section, we provide an intuitive explanation for the stability and convergence of our model.
Note that our model employs a non-decreasing, differentiable, and convex $\Psi^{*}$ (Sec \ref{sec:method}) as depicted in Fig \ref{fig:psi}. 
In contrast, OTM corresponds to the specific case of $\Psi^{*}(x)=x$. Now, recall our objective function for the potential $v_\phi$ and the generator $T_\theta$:
\begin{align}
    \mathcal{L}_v &= \frac{1}{|X|}\sum_{x\in X} \Psi^*_1\left(-c\left(x, T_\theta(x, z)\right) + v_\phi \left(T_\theta(x, z)\right)\right) + \frac{1}{|Y|} \sum_{y\in Y} \Psi^*_2 (-v_\phi(y)), \\
    \mathcal{L}_T &= \frac{1}{|X|}\sum_{x\in X} \left(c\left(x, T_\theta(x, z)\right)-v_\phi(T_\theta(x, z))\right).
\end{align}
Then, the gradient descent step of potential $v_\phi$ for a single real data $y$ with a learning rate of $\gamma$ can be expressed as follows:
\begin{align}
    \phi - \gamma \nabla_\phi \mathcal{L}_{v_\phi} =  \phi - \gamma \nabla_\phi \Psi_{2}^*(-v_\phi(y)) = \phi + \gamma {\Psi_{2}^*}'(-v_\phi(y)) \nabla_\phi v_\phi (y). \label{eq:gradient-descent}
\end{align}
Suppose that our potential network $v_\phi$ assigns a low potential for $y$.
Since the objective of the potential network is to assign high values to the real data, this indicates that $v_\phi$ fails to allocate the appropriate potential to the real data point $y$.
Then, because $-v_\phi (y)$ is large, ${\Psi^*}'(-v_\phi (y))$ in Eq \ref{eq:gradient-descent} is large, as shown in Fig \ref{fig:psi}. In other words, the potential network $v_\phi$ takes a stronger gradient step on the real data $y$ where $v_\phi$ fails. Note that this does not happen for OTM because ${\Psi^*}'(x)=1$ for all $x$.
In this respect, \textit{UOTM enables adaptive updates for each sample, with weaker updates on well-performing data and stronger updates on poor-performing data.} We believe that UOTM achieves significant performance improvements over OTM by this property. Furthermore, UOTM attains higher stability because the smaller updates on well-behaved data prevent the blow-up of the model output.

\subsection{Qualitative Comparison on Varying $\tau$} \label{appen:discussion-tau}
\begin{figure}[h]
  \begin{center}
    \includegraphics[width=0.47\textwidth]{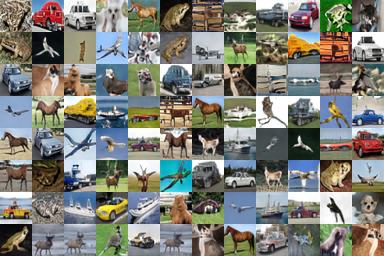}
    \hfill
     \includegraphics[width=0.47\textwidth]{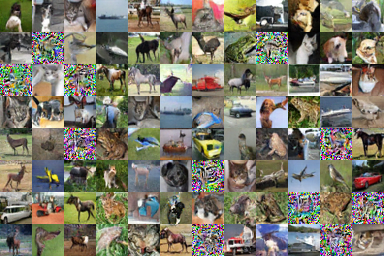}
  \end{center}
    \vspace{-5pt}
    \caption{Generated samples from UOTM trained on CIFAR-10 with \textbf{Left}: $\tau=0.1(\times 10^{-3})$ and \textbf{Right}: $\tau=5(\times 10^{-3})$.} \label{fig:abl_tau_samples}
\end{figure}

In the ablation study on $\tau$, UOTM exhibited a similar degradation in FID score($\geq 20$) at $\tau(\times 10^{-3})=0.1, 5$. However, the generated samples are significantly different. 
Figure \ref{fig:abl_tau_samples} illustrates the generated samples from trained UOTM for the cases when $\tau$ is too small ($\tau=0.1$) and when $\tau$ is too large $\tau=5$.
When $\tau$ is too small (Fig \ref{fig:abl_tau_samples} (left)), the generated images show repetitive patterns. This lack of diversity suggests mode collapse.
We consider that the cost function $c$ provides some regularization effect preventing mode collapse. 
For the source sample (noise) $x$, the cost function is defined as $c(x, T(x))=\tau \lVert x-T(x)\rVert_2^2$. 
Hence, minimizing this cost function encourages the generated images $T(x)$ to disperse by aligning each image to the corresponding noise $x$.
Therefore, increasing $\tau$ from 0.1 led to better FID results in Fig \ref{fig:abl_tau}.

On the other hand, when $\tau$ is too large Fig \ref{fig:abl_tau_samples} (right), the generative samples tend to be noisy. Because the cost function is $c(x, T(x))=\tau \lVert x-T(x)\rVert_2^2$, too large $\tau$ induces the generated image $T(x)$ to be similar to the noise $x$. This resulted in the degradation of the FID score for $\tau=5$ in Fig \ref{fig:abl_tau}. As discussed in Sec \ref{sec:exp_ablation}, this phenomenon can also be understood through Thm \ref{thm:uot-ot-relation}.

\section{Additional Results}
\begin{figure}[h]
    \centering
    \includegraphics[width=.86\textwidth]{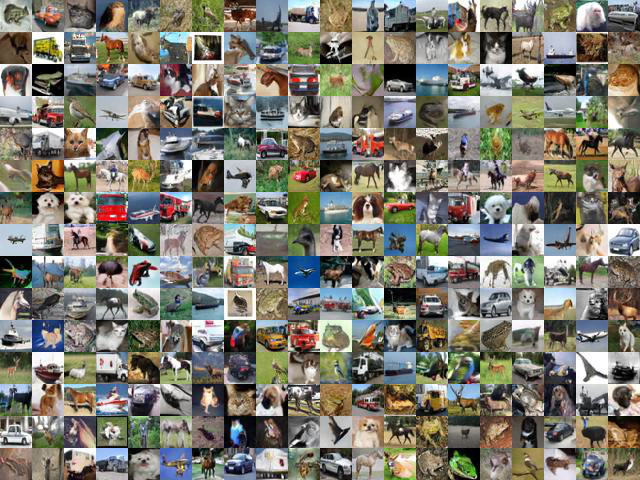}
    \caption{Generated samples from UOTM trained on CIFAR10 for $(\Psi_1, \Psi_2)=(KL,KL)$.}
\end{figure}

\begin{figure}[h]
    \centering
    \includegraphics[width=.86\textwidth]{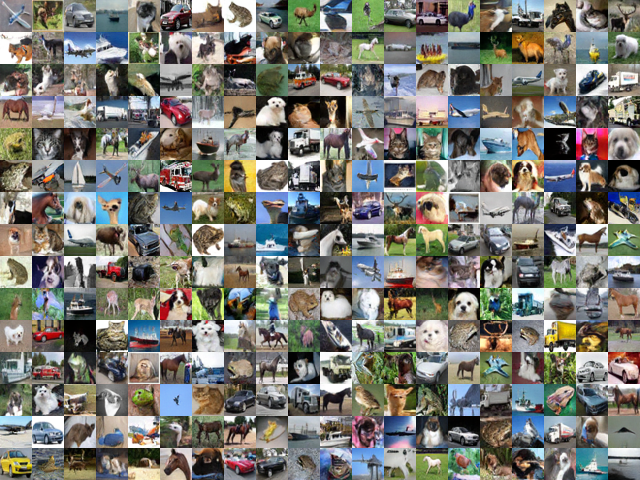}
    \caption{Generated samples from UOTM trained on CIFAR10 for $(\Psi_1, \Psi_2)=(\chi^2,\chi^2)$.}
\end{figure}

\begin{figure}[h]
    \centering
    \includegraphics[width=.86\textwidth]{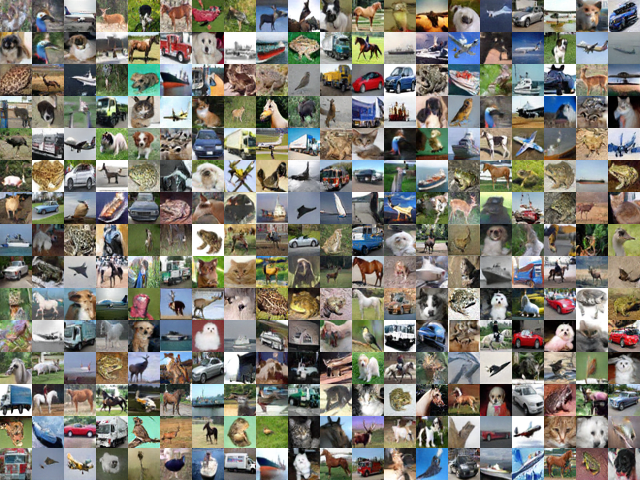}
    \caption{Generated samples from UOTM trained on CIFAR10 for $(\Psi_1, \Psi_2)=(KL,\chi^2)$.}
\end{figure}

\begin{figure}[h]
    \centering
    \includegraphics[width=.86\textwidth]{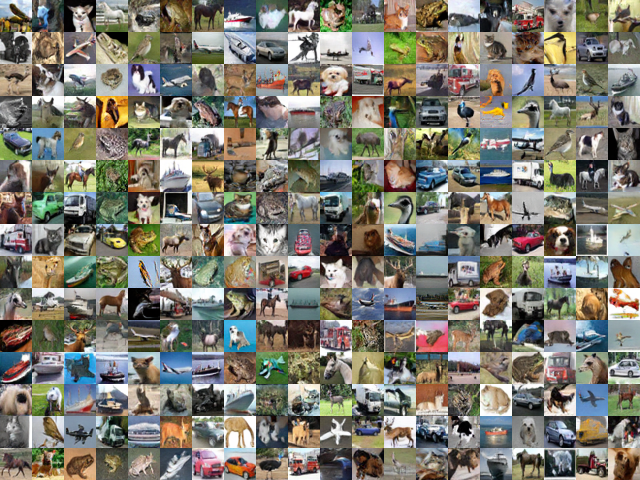}
    \caption{Generated samples from UOTM trained on CIFAR10 for $(\Psi_1, \Psi_2)=(\chi^2,KL)$.}
\end{figure}

\begin{figure}[h]
    \centering
    \includegraphics[width=.86\textwidth]{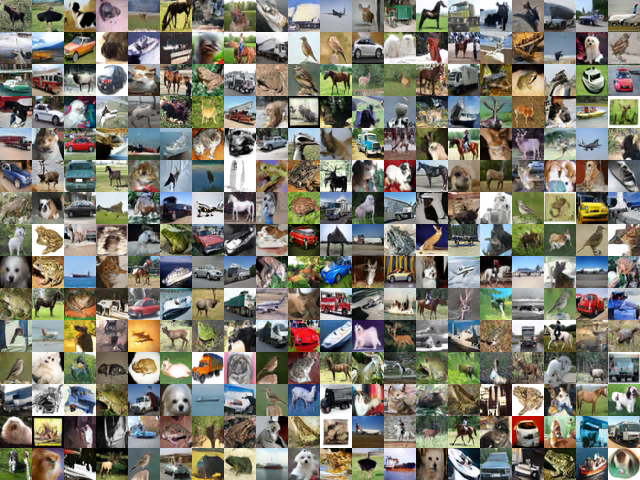}
    \caption{Generated samples from UOTM trained on CIFAR10 for $(\Psi_1, \Psi_2)=(\text{Softplus},\text{Softplus})$.}
\end{figure}

\begin{figure}[h]
    \centering
    \includegraphics[width=.95\textwidth]{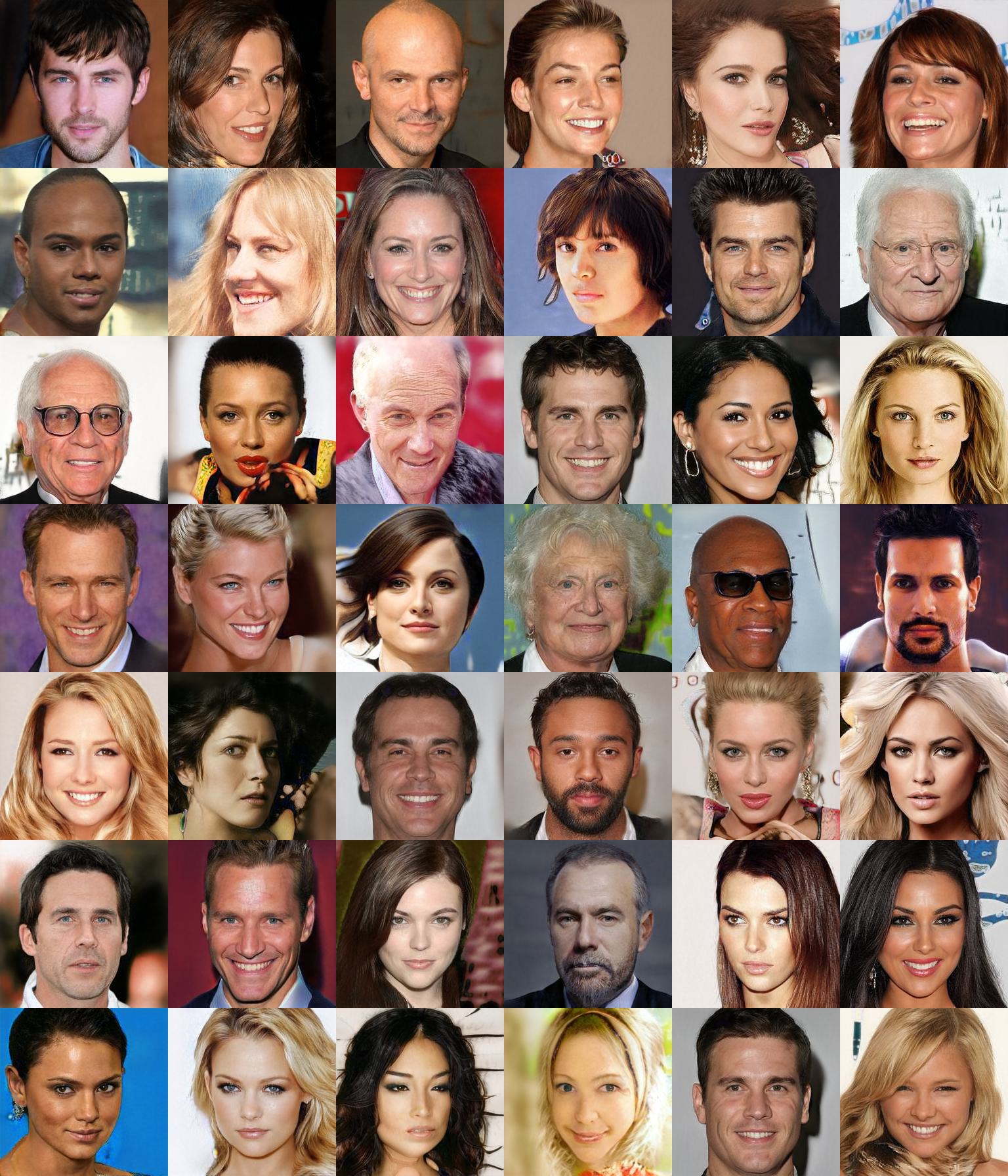}
    \caption{Generated samples from UOTM trained on CelebA-HQ ($256\times 256$).}
\end{figure}


\end{document}